\def\eqref#1{equation~\ref{#1}}
\def\1{\bm{1}}
\DeclareMathAlphabet{\mathsfit}{\encodingdefault}{\sfdefault}{m}{sl}
\SetMathAlphabet{\mathsfit}{bold}{\encodingdefault}{\sfdefault}{bx}{n}
\theoremstyle{plain}
\newtheorem{theorem}{Theorem}[section]
\newtheorem{proposition}[theorem]{Proposition}
\newtheorem{lemma}[theorem]{Lemma}
\theoremstyle{definition}
\newtheorem{definition}[theorem]{Definition}
\newtheorem{assumption}[theorem]{Assumption}
\theoremstyle{remark}
\def\sgn{\mathop{\rm sgn}}
\begin{document}

\twocolumn[
\aistatstitle{Better Representations via Adversarial Training in Pre-Training: A Theoretical Perspective}
\aistatsauthor{Yue Xing \And Xiaofeng Lin \And Qifan Song}
\aistatsaddress{Michigan State University \And University of California, Los Angeles \And Purdue University}

\aistatsauthor{Yi Xu \And Belinda Zeng \And Guang Cheng}

\aistatsaddress{Amazon Search-M5 \And Amazon Search-M5 \And University of California, Los Angeles}\vspace{-0.3in}
\aistatsaddress{\And \And Amazon Search-M5}
]


\author{ Yue Xing \\ \texttt{xingyue1@msu.edu} \\
       Department of Statistics and Probability\\
      Michigan State Univerity
      \and
       Xiaofeng Lin\\ \texttt{bernardo1998@g.ucla.edu} \\
       Department of Statistics\\
      University of California, Los Angeles
      \and
      Qifan Song\\  \texttt{qfsong@purdue.edu} \\
       Department of Statistics\\
      Purdue University
      \and
       Yi Xu \\  \texttt{yxaamzn@amazon.com} \\
      Amazon Search-M5
      \and
       Belinda Zeng \\ \texttt{zengb@amazon.com} \\
      Amazon Search-M5
      \and
       Guang Cheng \\\texttt{guangcheng@ucla.edu} \\
       Department of Statistics\\
      University of California, Los Angeles\\
      Amazon Search-M5
      }

\begin{abstract}
     Pre-training is known to generate universal representations for downstream tasks in large-scale deep learning such as large language models. Existing literature, e.g., \cite{kim2020adversarial}, empirically observe that the downstream tasks can inherit the adversarial robustness of the pre-trained model. We provide theoretical justifications for this robustness inheritance phenomenon. Our theoretical results reveal that feature purification plays an important role in connecting the adversarial robustness of the pre-trained model and the downstream tasks in two-layer neural networks. Specifically, we show that (i) with adversarial training, each hidden node tends to pick only one (or a few) feature; (ii) without adversarial training, the hidden nodes can be vulnerable to attacks. This observation is valid for both supervised pre-training and contrastive learning. With purified nodes, it turns out that clean training is enough to achieve adversarial robustness in downstream tasks.

\end{abstract}
\section{Introduction}

Adversarial training is a popular way to improve the adversarial robustness of modern machine learning models. However, compared to clean training, the computation cost of adversarial training is much higher. For example, using a single GPU to train ResNet18 for CIFAR-10, clean training takes 1 hour, but adversarial training uses 20 hours for 200 epochs \citep{rice2020overfitting}. 

One possible way to train an adversarially robust neural network with a lower cost is to utilize pre-trained models. That is, we use a large amount of (possibly un-labelled) pre-training data to first train a ``general-purpose'' neural network model; then, for any specific downstream task, we only need to adapt the last one or two layers according to the (often labelled) downstream data. The computation burden is then moved from downstream users to the pre-training phase. Such a strategy has been widely adopted in the training of large language models such as the GPT series. Please see more discussions in the recent review on {foundation model} \citep{bommasani2021opportunities}.

If the statistical properties of pre-trained models can be inherited, then the pre-training strategy can also greatly simplify the training of robust downstream models, as long as the pre-trained models possess proper adversarial robustness.
Recent literature, e.g., \cite{zhao2022blessing} shows that clean pre-training can improve the sample efficiency of the downstream tasks, while for adversarial training, it is empirically observed such an inheritance of robustness from pre-training models to downstream task training \citep{shafahi2019adversarially,chen2021cartl,salman2020adversarially,deng2021adversarial,zhang2021pre,kim2020adversarial,fan2021does}. Unlike the existing works, this paper aims to provide theoretical validation for this robustness inheritance phenomenon.

{While most theoretical studies of adversarial training are from statistical/optimization perspectives, \citet{allen2020feature} studies how adversarial training improves supervised learning in neural networks via {\em feature purification}. 
The observed data can be viewed as a mixture of semantic features, and the response is directly related to the features rather than the observed data.  It is justified that 
in clean training, each node learns a mixture of features, i.e., \textit{no feature purification}. Rather, the nodes will be purified in the adversarial training in the sense that each only learns one or a few features, i.e., \textit{feature purification happens}.}

{
Different from \cite{allen2020feature} which studies supervised learning, this work aims to know whether the benefit of feature purification appear in self-supervised pre-training methods, e.g., contrastive learning. In addition, after obtaining a pre-trained robust model, we wonder how the downstream task inherits the robustness using adversarial pre-training.
}

 Our contributions are as follows:

~(1) { We provide a theoretical framework  to verify that, the design of adversarial loss promotes feature purification. Beyond the work of \cite{allen2020feature} which studies the evolution of the training trajectory given a specific optimizer,  we directly consider the optimal solution to focus on the best possible performance of adversarial training regardless of the optimization algorithm.}  For the class of neural networks we consider, there are many possible optimal models to minimize the clean population risk, but only those with minimal adversarial loss achieve feature purification. (Section \ref{sec:intuition})

{
~(2) We extend our analysis to contrastive learning and verify that many non-robust models achieve the best clean performance while the robust ones have purified hidden nodes. An interesting observation is that adversarial training purifies the neural network via negative (dissimilar) pairs of data, and the loss of positive (similar) pairs of data is almost resistant to adversarial attack. This is a different observation compared to \cite{allen2020feature}. (Section \ref{sec:contra})
}

~(3) Our results also show that when the pre-trained model perfectly purifies the hidden nodes, we can achieve good model robustness when the downstream tasks are trained using clean training. (Section \ref{sec:downstream})

\section{Related Works}

\paragraph{Feature Purification and Better Representation}
Some related literature touches on similar questions as our targets but with a different purpose from ours. {\citet{wen2021toward} shows that contrastive learning can purify features using RandomMask. A detailed discussion on the advantages/disadvantages of adversarial training and RandomMask can be found in Section \ref{rem:li}. Another related work is \cite{deng2021adversarial}, which shows that adversarial training helps select better features from {individual tasks}. This is different from ours as it does not work on nonlinear neural networks. }
\vspace{-0.1in}

\paragraph{Adversarial training}
There are fruitful studies in the area of adversarial training. For methodology,  there are many techniques, e.g., \cite{goodfellow2014explaining,zhang2019theoretically,wang2019improving,cai2018curriculum,zhang2020attacks,carmon2019unlabeled,gowal2021improving,mo2022adversarial,wang2022improving}.
Theoretical investigations have also been conducted for adversarial training from different perspectives. For instance, \citet{chen2020more, javanmard2020precise,taheri2021statistical,yin2018rademacher,raghunathan2019adversarial,najafi2019robustness,min2020curious,hendrycks2019using,dan2020sharp,wu2020revisiting,javanmard2021adversarial,deng2021improving,javanmard2022precise} study the statistical properties of adversarial training, \cite{sinha2018certifying,wang2019convergence,xing2021generalization,xing2021algorithmic,xiao2022stability,xiao2022adaptive} study the optimization aspect of adversarial training, \cite{zhang2020over,wu2020does,xiao2021adversarial} work on theoretical issues related to adversarial training with deep learning. 
\vspace{-0.1in}

\paragraph{Contrastive learning}

Contrastive learning is a popular self-supervised learning algorithm. It uses unlabeled images to train representations that distinguish different images invariant to non-semantic transformations (\citealp{mikolov2013distributed,oord2018representation,arora2019theoretical,dai2017contrastive,chen2020simple,tian2020makes,chen2020simple,khosla2020supervised,haochen2021provable,chuang2020debiased,xiao2020should,li2020prototypical}). Beside empirical studies, there are also many theoretical studies, e.g., \cite{saunshi2019theoretical,haochen2021provable,haochen2022beyond,shen2022connect,haochen2022theoretical,saunshi2022understanding}. Other related studies in adversarial training with contrastive learning can also be found in \cite{alayrac2019labels,ho2020contrastive,jiang2020robust,cemgil2019adversarially,petrov2022robustness,nguyen2022task}.

\section{Model Setups}
This section defines data generation model, neural network, and adversarial training for supervised learning. 

\vspace{-0.05in}
\subsection{Data Generation Model}\label{sec:data}
\vspace{-0.05in}

\begin{figure}
    \centering
    \includegraphics[width=\linewidth]{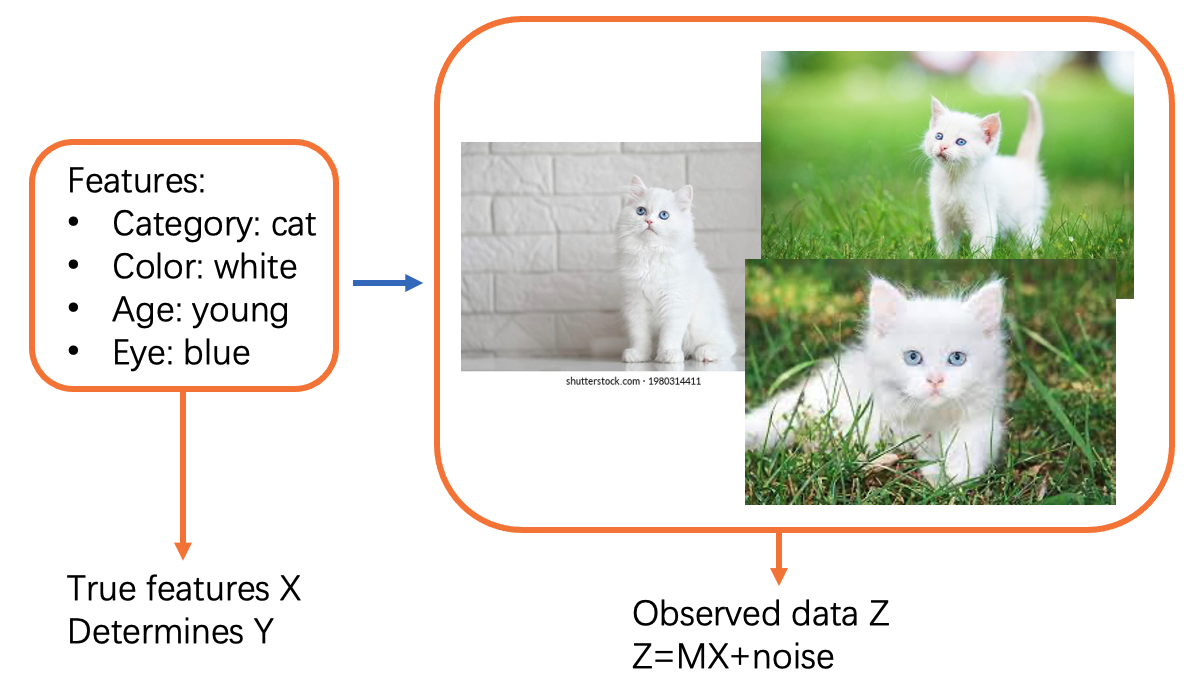}
    \vspace{-0.3in}
    \caption{ A proof-of-concept example of the Sparse Coding Model. For the categorical features, one can reshape it to a sparse feature vector. }
    \vspace{-0.15in}
    \label{fig:sparse_coding}
\end{figure}
 
We consider the following data generation model. There exists some underlying \textbf{true features} $X\in\mathbb{R}^d$, such that $Z=MX+\xi$ for a unitary matrix $M$, and the response $Y$ is directly determined by $X$. However, instead of observing $X$, we observe transformed noisy features $Z\in\mathbb{R}^d$ and the response $Y\in\mathbb{R}$. An illustration can be found in Figure \ref{fig:sparse_coding}. 

The relationship between $X$ and $Y$ is as follows:

~(1) Regression: $Y=\theta_0^{\top}X+\varepsilon$ for Gaussian noise $\varepsilon$.

~(2) Classification: $Y\sim Bern(1/(1+\exp(\theta_0^{\top}X)))$.

We impose the following assumption on the data.

\begin{assumption}[Sparse coding model]\label{assumption:x}
The model of $(X,Z,Y)$ satisfies the following conditions:

~(1) The coordinates of $X$ are i.i.d. symmetric variables, and $|X_i|\in\{0\}\cup [1/\sqrt{k},1]$ for some sparsity parameter $k$. Moreover, $P(|X_i|\neq 0)=\Theta(k/d)$,
$ \mathbb{E}X_i^2=\Theta(1/d)$, $\mathbb{E}|X_i|=\Theta(1/\sqrt{k})$, and $\mathbb{E}|X_i|^3=\Theta(1/(d\sqrt{k}))$.

~(2) The noise $\xi$ follows i.i.d. $N(0,\zeta^2 I_d/d)$ for $\zeta>0$.

~(3) Each coordinate of $\theta_0$ satisfies $(\theta_0)_i=\Theta(1)$.
\end{assumption}

In Assumption \ref{assumption:x} (1), we assume $X$ is a sparse signal. In general, there are $O(k)$ \textbf{active} features (i.e., non-zero $X_i$) in a realization of $X$. In the later results, we always assume $k\ll d$. In addition, together with (2), we have $\|X\|=O_p(1)$, $\|Z\|=O_p(1)$, and $\|\xi\|=O_p(1)$, i.e., the total magnitudes of the features, observed data, and noise are comparable. Assumption \ref{assumption:x} (3) indicates that all features are important in determining $Y$.

Assumption \ref{assumption:x} is similar to the model considered in \cite{allen2020feature}. We impose a constraint on the third moment of $X_i$ to use concentration bounds in Lemma \ref{lem:lem}, and assume  a symmetric distribution to ease the contrastive learning (Lemma \ref{lem:contra_basic}). { Similar sparse coding models have a long history in literature, e.g., \cite{hyvarinen1998image}.}

\vspace{-0.05in}
\subsection{Two-Layer Neural Network}
\vspace{-0.05in}

We use a two-layer neural network to fit the model. In particular, given an input $z$, \vspace{-0.1in}$$f_{W,b}(z) = \sum_{h=1}^H a_h\sigma(z^{\top}W_h,b_h),$$
where $a_h=1$ for all $h$. In the pre-training stage, we use lazy training and do not update $a_h$. The vector $b=(b_1,b_2,\ldots,b_H)$ is the intercept\footnote{We use the term ``intercept" to describe $b$ to avoid confusion with ``bias" in statistics.} term in each node, and $W=(W_1\mid \ldots \mid W_H)$ is the coefficient matrix. In later sections, besides using $W_h$ as the $h$th column of $W$, we also define $W_{i,:}$ as the $i$th row and use $W_{:,h}$ as the $h$th column of $W$ to avoid confusion when needed. Similar notations will be used for other matrices.

To simplify the derivation, we mainly consider the following activation function, for $v,e\in\mathbb{R}$,
\begin{eqnarray}\label{eqn:activation}
\sigma(v,e)=v 1\{|v|\geq e\}.
\end{eqnarray}
Compared to an identity mapping, (\ref{eqn:activation}) has  an extra ``gate parameter" $e$ to screen out weak signals. When $|v|>e$, the hidden node is \textbf{activated}.

\vspace{-0.05in}
\subsection{Adversarial Training}
\vspace{-0.05in}

We consider $\mathcal{L}_2$ fast gradient attack (FGM) with attack strength $\epsilon$, i.e., given the current model $f$ and loss function $l$, for each sample $(z,y)$, the attack is 
$$ \delta_2 = \epsilon (\partial l/\partial z)/{\|\partial l/\partial z\|},$$
where $\|\cdot\|$ is the $\mathcal{L}_2$ norm. In the models we consider, when approaching the optimal solution to minimize clean/adversarial loss, the FGM is the best attack. 

Besides the adversarial attack, we also define the corresponding adversarial loss as
$l_\epsilon(z,y;f) = l( z+\delta, y;f ).$
Besides $\mathcal{L}_2$ attack, some discussions can also be found in the appendix Section \ref{sec:appendix:linf} if $\delta$ is the $\mathcal{L}_{\infty}$ attack (i.e., Fast Gradient Signed Method (FGSM)).
When $\epsilon=0$, the loss $l_0$ is reduced to $l$, and represents the clean loss. Details of contrastive learning are in Section \ref{sec:contra}.

For clean and adversarial training in this paper, we use ``clean training" to minimize the clean loss and  ``adversarial training" to minimize the adversarial loss.  

\section{Feature Purification}\label{sec:intuition}
This section aims to provide basic intuitions 
on
(i) why the activation function (\ref{eqn:activation}) and ReLU are preferred over linear networks, 
and (ii) why adversarial training can purify features. {While the high-level ideas are similar to \cite{allen2020feature}, we restate them via different technical tools so that it can be carried over to the later sections of contrastive learning and downstream study.}

\vspace{-0.05in}
\subsection{Screen Out Noise}\label{sec:intuition_nonline}
\vspace{-0.05in}

The basic rationale of why the activation function in (\ref{eqn:activation}) (or ReLU) is that it can screen out the noise $\xi$. Intuitively, the noise $\xi$ in $Z$ only contributes to a negligible noise in hidden nodes, which can be screened out by  a proper ``gate parameter'' $b_h$. 

To explain more details, we introduce the notations first. Based on the data generation model, assume $z$ is a realization of $Z$, then we can define $U=M^{\top} W \in\mathbb{R}^{d\times H}$ and rewrite $f_{W,b}(z)$ as
\begin{eqnarray*}
f_{W,b}(z) = \sigma(z^{\top}W,b)a = \sigma( x^{\top} U + \xi W,b )a.
\end{eqnarray*}
To interpret $U$, for each hidden node $h$, the column $U_h$ represents the strength of each feature in the hidden node. Note that the noise $\xi^{\top} W_h\sim N(0,\zeta^2\|U_h\|^2/d)$. When $b_h\gg \|U_h\|/\sqrt{d}$, the noise alone is not able to activate the hidden node. On the other hand, for an active feature $X_i\neq0$, we have $|U_{i,h}X_i|\geq |U_{i,j}|/\sqrt{k}$, which can be much larger than $\zeta\|U_h\|/\sqrt{d}$ for proper $U_h$. As a result, under a reasonably tuned $b_h$, the active features will survive the screening effect and activate the hidden node. Noise may pass through the screening of a node and corrupt the prediction only when this node also contains other active features, but the contribution of noise ($W_h^{\top}\xi$) will be negligible compared with other active features. 

 To simplify our analysis, we impose the following assumption to focus on strong features:

\begin{definition}\label{assumption:nn}
Define $\mathcal{M}$ as the set of two-layer neural networks such that, for any node $h$, 

~(1) The intercept is within a proper range, i.e. $b_h\ll\|U_h\|/\sqrt{k\|U_h\|_0}$, and

~(2) $b_h\gg \|U_h\|/\sqrt{k\|U_h\|_0}/\log d\gg \|U_h\|/\sqrt{d}$.

~(3) There are at most $m^*$ of features of $X$ learned by each hidden node, i.e., $\|U_h\|_0\leq m^*$ for all $h=1,\ldots,H$, and $m^*=o(d/k)$. All hidden nodes are non-zero and $H\gg d$.

~(4)  Non-zero $U_{i,h}$'s have the same sign for the same $i$ and $|U_{i,h}|=\Theta(\gamma)$.
\end{definition}

The conditions (1) and (2) in Definition \ref{assumption:nn} match the intuition above to conduct screening. The conditions (3) and (4) are for the simplicity of the derivation. 

For the ReLU activation function, it is similar to the activation we considered in \ref{eqn:activation}, and the related discussion is postponed to Section \ref{sec:dics}.

\vspace{-0.05in}
\subsection{Purified Nodes Lead to Robustness}
\vspace{-0.05in}

{
To intuitively understand why feature purification improves adversarial robustness, a graphical illustration can be found in Figure \ref{fig:purification}. With either purified/unpurified hidden nodes, the active features $X_1$ and $X_3$ will always be attacked. With purified features, adding an attack does not impact the inactive features. With unpurified features, the inactive features can also be attacked.
}

\begin{figure}
    \centering
    \includegraphics[width=\linewidth]{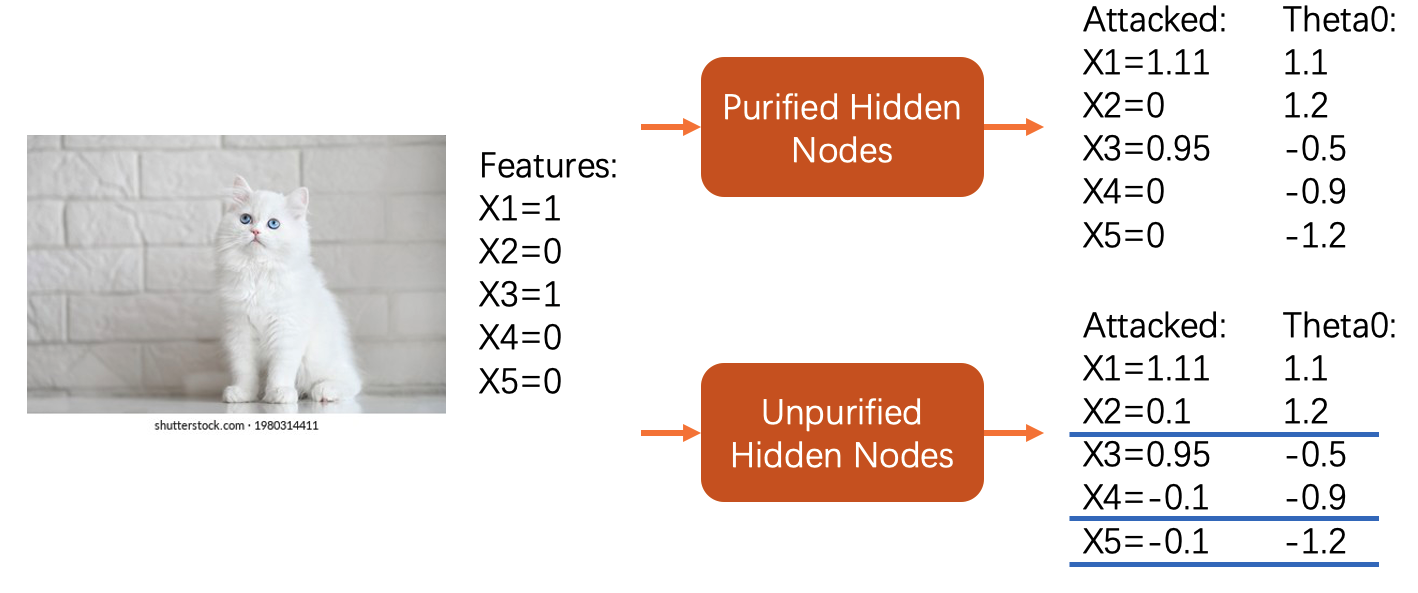}
    \vspace{-0.3in}
    \caption{With purified hidden nodes, only the active features will be attacked, and the resulting adversarial loss is small. With unpurified hidden nodes, inactive features will also be impacted. Note that we transform the attack on the observable $Z$ back to its features $X$, to compare with $\theta_0$.}
    \label{fig:purification}
    \vspace{-0.1in}
\end{figure}

The following is extended from the above intuition:
\begin{lemma}\label{thm:adv_ideal}
Assume $ \epsilon = O(1/(\log(d)\sqrt{m^* k}))$, and $(W,b)\in\mathcal{M}$. Denote $\mathcal{X}$ as the set of coordinate $i$ where $|X_i|>0$. 
Assume $Ua=\theta$,  $\|\theta\|_{\infty}=\Theta(1)$.
 With probability tending to 1 over the randomness of $\xi$ and $X$, 
\begin{eqnarray}
\Delta_{W,b}(z,y)=\epsilon\frac{\partial l}{\partial f_{W,b}}\left\|a^{\top}\text{diag}(\mathbb{I}(W^{\top}z ,b)) W^{\top}\right\|_2+o,\label{eqn:lem:1}
\end{eqnarray}
where ``$o$'' represents a negligible term caused by the curvature of the loss.  In probability,
\begin{eqnarray}
\|\theta_{\mathcal{X}}\|_2\leq \|a^{\top}\text{diag}(\mathbb{I}(U^{\top}X,b))U^{\top}\|_2\leq\|\theta\|_2\label{eqn:lem:2},
\end{eqnarray}
the $\theta_\mathcal{X}$ is the vector of the coordinates of the $\theta$ in $\mathcal{X}$.

The \textbf{left} equation holds (i.e., highest robustness) only when the matrix $U$ is sparse, i.e., $\|U_h\|_0\leq1$ for every hidden node $h$. When all hidden nodes are activated, the \textbf{right} equation 
holds.
\end{lemma}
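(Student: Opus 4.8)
The plan is to first turn the adversarial loss increment into a gradient norm by a first-order expansion, and then to read that gradient norm off in feature space, where the structural assumptions of Definition \ref{assumption:nn} do all the work. First I would write $\Delta_{W,b}(z,y)=l(z+\delta,y;f)-l(z,y;f)$ and expand to first order. Since the FGM attack is $\delta=\epsilon(\partial l/\partial z)/\|\partial l/\partial z\|_2$, the linear term is exactly $\epsilon\|\partial l/\partial z\|_2$, and by the chain rule $\partial l/\partial z=(\partial l/\partial f)\nabla_z f$ with $\nabla_z f=W\,\text{diag}(\mathbb{I}(W^{\top}z,b))\,a$ (differentiating $\sigma(v,e)=v\,1\{|v|\ge e\}$ in $v$ yields the indicator $1\{|v|\ge e\}$ away from the threshold). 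This gives (\ref{eqn:lem:1}), with the remainder ``$o$'' collecting the curvature of $l$ in $f$ and any contribution from $\delta$ crossing an activation boundary. I would argue this remainder is negligible precisely under $\epsilon=O(1/(\log(d)\sqrt{m^*k}))$: transporting $\delta$ to feature space via the unitary $M$ preserves its norm, so each pre-activation moves by at most $\epsilon\|U_h\|_2\le\epsilon\gamma\sqrt{m^*}$, which is $\ll$ the activation margin $\Theta(\gamma/\sqrt{k})$ of an active node; hence no indicator flips with high probability and the piecewise-linear $f$ contributes no curvature.

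Next I would reduce the gradient norm to feature space. Because $M$ is unitary, $\|a^{\top}\text{diag}(\mathbb{I})W^{\top}\|_2=\|a^{\top}\text{diag}(\mathbb{I})U^{\top}\|_2$, and since $W^{\top}z=U^{\top}X+W^{\top}\xi$, the noise-screening argument of Section \ref{sec:intuition_nonline} (using $b_h\gg\|U_h\|/\sqrt{d}$ from Definition \ref{assumption:nn}(2), which controls $W^{\top}\xi\sim N(0,\zeta^2\|U_h\|^2/d)$) gives $\mathbb{I}(W^{\top}z,b)=\mathbb{I}(U^{\top}X,b)$ with probability tending to one. Writing $S=\{h:|U_h^{\top}X|\ge b_h\}$ for the activated nodes, the quantity in (\ref{eqn:lem:2}) becomes $\|v\|_2$ with $v=\sum_{h\in S}a_h U_h$, whose $i$-th coordinate is $v_i=\sum_{h\in S}a_h U_{i,h}$.

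The two-sided bound then follows from two structural facts. For the upper bound, Definition \ref{assumption:nn}(4) forces all nonzero $U_{i,h}$ (fixed $i$) to share a sign, so with $a_h=1$ there is no cancellation and coordinatewise $|v_i|=\sum_{h\in S}|U_{i,h}|\le\sum_h|U_{i,h}|=|\theta_i|$, giving $\|v\|_2\le\|\theta\|_2$. For the lower bound I would show $\{h:U_{i,h}\ne0\}\subseteq S$ for every active coordinate $i\in\mathcal{X}$: such a node carries the active feature $i$, contributing $|U_{i,h}X_i|\ge|U_{i,h}|/\sqrt{k}=\Theta(\gamma/\sqrt{k})\gg b_h$ by Definition \ref{assumption:nn}(1), while the expected number of active features per node is $\|U_h\|_0\cdot\Theta(k/d)=o(1)$, so with high probability no competing active feature cancels it. Consequently $v_i=\theta_i$ exactly for all $i\in\mathcal{X}$, whence $\|v\|_2\ge\|v_{\mathcal{X}}\|_2=\|\theta_{\mathcal{X}}\|_2$.

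Finally I would settle the equality cases. If every hidden node is activated then $S$ is the full index set and $v=Ua=\theta$, giving the right equality. For the left equality, note $\|v\|_2=\|\theta_{\mathcal{X}}\|_2$ iff $v_i=0$ for every inactive $i\notin\mathcal{X}$; when $\|U_h\|_0\le1$ an activated node's single feature must itself be active, so no activated node places mass on an inactive coordinate and $v=v_{\mathcal{X}}=\theta_{\mathcal{X}}$, whereas any node with $\|U_h\|_0\ge2$ leaks mass onto an inactive coordinate whenever one of its features fires, making the lower bound strict --- this is exactly feature purification. The main obstacle I anticipate is the probabilistic bookkeeping in the reduction step: making ``$\mathbb{I}(W^{\top}z,b)=\mathbb{I}(U^{\top}X,b)$'', ``every active-feature node lies in $S$'', and ``no activation flips under $\delta$'' hold \emph{simultaneously} with probability tending to one, which requires a union bound over the $O(k)$ active features and their incident nodes together with an anti-concentration estimate ruling out near-cancellations at the thresholds $b_h$.
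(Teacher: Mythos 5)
Your proposal is correct and follows essentially the same route as the paper's proof: a first-order expansion whose linear term is $\epsilon\|\partial l/\partial z\|_2$, stability of the activation pattern under both the noise $\xi$ and the attack $\delta$ (the paper's Lemmas \ref{lem:2} and \ref{lem:3}), sign coherence from Definition \ref{assumption:nn}(4) for the upper bound, and activation of every node carrying an active feature for the lower bound, with the same equality-case discussion. The probabilistic bookkeeping you flag at the end (anti-concentration of $X^{\top}U_h$ near the thresholds $b_h$ plus union bounds over nodes) is exactly what the paper isolates into Lemma \ref{lem:lem} via a Berry--Esseen estimate.
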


Lemma \ref{thm:adv_ideal} illustrates how the effectiveness of attack (i.e.,.$\Delta_{W,b}$) is affected by the neural network. If the hidden nodes are purified, then the neural network is more robust, and the increase from $l_0$ to $l_\epsilon$ is small. If not, more hidden nodes are activated, leaking more weight information.

There are two key claims to prove Lemma \ref{thm:adv_ideal}: (i) to show equation (\ref{eqn:lem:1}), we show that in probability, every activated hidden node will not be deactivated by the attack (Lemma \ref{lem:3}), and (ii) to show equation (\ref{eqn:lem:2}), we show that in probability, every hidden node is activated as long as some of its learned  features are non-zero (Lemma \ref{lem:2}). The proof for Lemma \ref{thm:adv_ideal} and all the following theorems and propositions can be found in Appendix \ref{sec:appendix:proof_l2}.

\vspace{-0.05in}
\subsection{Purification in Supervised Learning}\label{sec:pretrain_super}
\vspace{-0.05in}

We use square loss and absolute loss for regression and logistic loss for classification. Given the loss function as $l$, the task is to minimize $\mathbb{E} l_{\epsilon}( Z,Y; W,b )$. 

{ Thanks to Lemma \ref{thm:adv_ideal}, we are able to study the clean and robust performance of neural networks. Since our main focus is on contrastive learning, we provide an informal statement for supervised learning below, and postpone the formal theorems to Appendix \ref{sec:sup}.}
\begin{theorem}[Informal Statement]\label{thm:informal}
    For some $(W,b)\in\mathcal{M}$ satisfying $Ua=\theta_0$, for square loss, absolute loss, and logistic regression, we define a vanishing term $\psi$ as
\begin{eqnarray*}
    \mathbb{E}l_0(Z,Y;W,b)=\mathbb{E}l_0(X,Y;\theta_0)+O(\psi).
\end{eqnarray*}
There exists many $(W,b)\in\mathcal{M}$ such that the clean loss is $O(\psi)$-close to its minimum, while the adversarial loss is $\Theta(\epsilon\sqrt{m^*k})$-close to its minimum. When using adversarial training so that the adversarial loss is $O(\psi)$-close to its minimum, the clean loss is $\Theta(\epsilon\sqrt{k})$-close to its minimum, and at most $o(1)$ proportion of hidden nodes learn more than 1 feature.
\end{theorem}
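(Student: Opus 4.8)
The plan is to pass to the network's \emph{effective linear model} and expand the adversarial loss to first order in $\epsilon$ via Lemma~\ref{thm:adv_ideal}. I would first establish the clean approximation (the first display). For any $(W,b)\in\mathcal{M}$ with $Ua=\theta_0$, the screening argument behind Lemmas~\ref{lem:2}--\ref{lem:3} says that on a probability-$(1-o(1))$ event every node carrying an active feature fires and no feature-free node fires, so $f_{W,b}(Z)=\theta_0^\top X+\sum_{h\text{ fired}}\xi^\top W_h$. Because $\xi^\top W_h=O(\zeta\|U_h\|/\sqrt d)$ per node (Assumption~\ref{assumption:x}(2)) and only $O(k/\gamma)$ nodes fire, the noise term is $O(\psi)$ in $L^2$; substituting into the square, absolute, and logistic losses and using $|X_i|\le 1$ yields $\mathbb{E}l_0(Z,Y;W,b)=\mathbb{E}l_0(X,Y;\theta_0)+O(\psi)$. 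The same computation shows \emph{every} $\theta_0$-network is simultaneously $O(\psi)$-optimal for the clean risk, so the clean minimum equals $\mathbb{E}l_0(X,Y;\theta_0)+O(\psi)$.

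Next I would control the attack term. Writing $\partial l/\partial f$ for the bounded outer derivative, \eqref{eqn:lem:1} gives $l_\epsilon-l_0=\epsilon\,|\partial l/\partial f|\,N(X)+o$ with $N(X)=\|a^\top\mathrm{diag}(\mathbb{I}(U^\top X,b))U^\top\|_2$ after using that $M$ is unitary. The lower bound $N(X)\ge\|\theta_{\mathcal{X}}\|_2=\Theta(\sqrt k)$ is \eqref{eqn:lem:2} and records that the $\Theta(k)$ active coordinates always leak. For the upper bound I would set $v=a^\top\mathrm{diag}(\mathbb{I})U^\top$ and use $N(X)^2\le\|v\|_1\|v\|_\infty$: condition~(4) of Definition~\ref{assumption:nn} (coherent signs) forces $\|v\|_\infty=\Theta(1)$, while $\|v\|_1\le(\#\text{fired})\cdot m^*\gamma=\Theta(m^*k)$ since at most $\sum_{i\in\mathcal{X}}(\text{nodes at }i)=\Theta(k/\gamma)$ nodes fire and each spreads $\ell_1$-mass $m^*\gamma$. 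Hence $N(X)=O(\sqrt{m^*k})$. A matching construction — entangling $\Theta(m^*k)$ inactive ``satellite'' coordinates with the active ones so each satellite accumulates $\Theta(1)$ coherent mass — gives $(W,b)\in\mathcal{M}$ with $Ua=\theta_0$ and $N(X)=\Theta(\sqrt{m^*k})$. These are the many clean-optimal, non-robust models, whose adversarial loss sits $\Theta(\epsilon\sqrt{m^*k})$ above the adversarial minimum.

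For the robust direction I would argue in two steps. Purification is forced because $N(X)\ge\|\theta_{\mathcal{X}}\|_2$ with equality iff $\|U_h\|_0\le1$ (the equality clause of Lemma~\ref{thm:adv_ideal}); the counting above shows the excess $\mathbb{E}[N(X)-\|\theta_{\mathcal{X}}\|_2]$ grows proportionally with the fraction $q$ of fired nodes having $\|U_h\|_0\ge2$, so requiring the adversarial loss to be $O(\psi)$-close to its minimum caps $q=o(1)$. For the clean penalty I would minimize the first-order adversarial objective $\mathbb{E}l_0(\theta)+\epsilon\,\mathbb{E}[|\partial l/\partial f|\,\|\theta_{\mathcal{X}}\|_2]$ over the effective $\theta$: this trades a gain linear in the shrinkage of $\theta_0$ against a clean cost that is quadratic in it, and the optimizer shrinks $\theta_0$ by the stated order $\Theta(\epsilon\sqrt k)$, which propagated through each of the three losses is the claimed clean-loss gap.

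The main obstacle is the sharp two-sided control of $N(X)$ coupled with the quantitative purification claim: beyond the $\ell_1$--$\ell_\infty$ upper bound, I must show that a near-minimal adversarial loss pins the fired-node spillover onto inactive coordinates down to an $o(1)$ fraction, uniformly over the randomness of $X$ and over all of $\mathcal{M}$. The second delicate point is discarding the curvature term ``$o$'' in \eqref{eqn:lem:1} simultaneously for square, absolute, and logistic loss, where the outer derivative and second-order behaviour differ; I would absorb it into $\psi$ using $\epsilon\sqrt{m^*k}=O(1/\log d)$ from the hypothesis on $\epsilon$.
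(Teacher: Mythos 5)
Your overall skeleton matches the paper's (Theorems \ref{thm:clean} and \ref{thm:adv} in Appendix \ref{sec:sup}): first-order expansion of $l_\epsilon-l_0$ via Lemma \ref{thm:adv_ideal}, the two-sided control $\|\theta_{\mathcal X}\|_2\le N(X)\le O(\sqrt{m^*k})$, an explicit ``grouped'' construction realizing the upper bound, and purification forced by the excess of $N(X)$ over $\|\theta_{\mathcal X}\|_2$. Your $\|v\|_2^2\le\|v\|_1\|v\|_\infty$ bound with $\|v\|_\infty=\Theta(1)$ from sign coherence and $\|v\|_1=\Theta(m^*k)$ from node counting is a cleaner route to the upper bound than the paper's direct $\Theta(\sqrt m\,\|\theta_{\mathcal X}\|_2)$ computation, and your satellite construction is the paper's grouping in different words.

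The genuine gap is your last step, the $\Theta(\epsilon\sqrt k)$ clean-loss penalty for the adversarially trained model. Your variational argument does not close: if the gain from shrinking $\theta$ is linear in the shrinkage $t$ with coefficient $\epsilon\sqrt k$ and the clean cost is quadratic in $t$, the optimizer sits at $t=\Theta(\epsilon\sqrt k)$ and the clean cost there is $\Theta(t^2)=\Theta(\epsilon^2k)$, not $\Theta(\epsilon\sqrt k)$ --- ``propagating'' a $\Theta(\epsilon\sqrt k)$ shrinkage through a quadratic loss cannot produce a first-power gap, and in the regime $\epsilon=\Theta(1/(\log d\sqrt{m^*k}))$ these two orders are genuinely different. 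Worse, this route requires characterizing the adversarial minimizer, which you have not done. The paper avoids all of this with a one-line sandwich: $\mathbb{E}l_0(Z,Y;W,b)\le\mathbb{E}l_\epsilon(Z,Y;W,b)\le\min_{\mathcal M}\mathbb{E}l_\epsilon+O(\psi)$, and the purified $m=1$ construction shows $\min_{\mathcal M}\mathbb{E}l_\epsilon\le\mathbb{E}l_0(X,Y;\theta_0)+O(\epsilon\sqrt k)+O(\psi)$ (equation (\ref{eqn:super_upper})), which is exactly the claimed bound with no optimization over $\theta$ at all. A secondary issue: your derivation of $\psi$ accounts only for the Gaussian noise $\xi^\top W_h$ inside correctly-fired nodes, but the paper's $\psi$ in (\ref{eqn:psi}) is dominated by the mis-activation events $E(h)$ (nodes wrongly switched by $\xi$ or by cancellation among several active features), whose rare but large contribution to the loss requires the case decomposition over $\{\exists E(h)=1\}$ and $\{|Y|>d\}$ in the proof of Theorem \ref{thm:clean}; without that, the comparison ``$\Theta(\epsilon\sqrt{m^*k})$ versus $O(\psi)$'' that the theorem relies on is not grounded.
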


\section{Purification in Contrastive Learning}\label{sec:contra}
In this section, we show that in contrastive learning, clean training does not intend to purify the neural networks, and adversarial training does.

\vspace{-0.05in}
\subsection{Model Setup}
\vspace{-0.05in}

The contrastive learning aims to learn a $g: \mathbb R^d\otimes \mathbb R^d\rightarrow \mathbb R$ to minimize the following loss
\begin{eqnarray}\label{eqn:contra_loss}
&&\mathbb{E}_Z \mathbb{E}_{Y} l(Z,Z'(Y),Y ;g)
\\&:=&\mathbb{E}_Z \mathbb{E}_{Y} \log\left(1+\exp[-Y g(Z,Z'(Y))]\right)\nonumber
\end{eqnarray}
where $Z'(Y):=Z'=MX'+\xi'$ for a noise $\xi'$ that is i.i.d. to $\xi$, and $Y$ determines whether the pair $(Z,Z')$ is similar or not, i.e., if $Y=1$, $X'=X$, otherwise, $X'$ is an independent copy of $X$. In other words, when $Y=1$, $Z$ and $Z'$ share the same true features, can be interpreted as two views of the same object $X$; when $Y=-1$,  $Z$ and $Z'$ are independent and correspond to different true features $X$ and $X'$. Note that the label $Y=\pm 1$ in contrastive learning is not the class label in the original data set. It is an artificial label, manually generated following marginal distribution  $P(Y=1)=P(Y=-1)=0.5$. The label $Y$ represents whether the two views correspond to the same sample or not.
Given a neural network parameterized by $W$, $b$ and $A$ that outputs multiple responses, the loss function $l$ is in the format of
$$l(z,z',y;W,b)= \log\left(1+\exp[-y g_{W,b}(z,z')]\right),$$
where 
\vspace{-0.1in}\begin{eqnarray}\label{eqn:gwb}
&&g_{W,b}(z,z')\\
&=&
\left(\sum_{h=1}^H A_{h,:}\sigma(W_h^{\top} z,b_h)\right)^{\top}\left(\sum_{h=1}^H A_{h,:}\sigma(W_h^{\top} z',b_h)\right)\nonumber
\end{eqnarray}
with the output layer $A\in\mathbb{R}^{H\times d}$ with the same output dimension as the data dimension. Note that parameter $A$ is not a trainable parameter since we will consider a lazy training scenario. The details will be discussed later.  Unlike the supervised task where the neural network outputs a single value, in contrastive learning, the neural network outputs a vector. 

For adversarial attack, we again consider the FGM attack, i.e., $\delta_2=\epsilon(\partial l/\partial z)/\|\partial l/\partial z\|_2$, and the corresponding adversarial loss can be written as \vspace{-0.05in}$$l_\epsilon(z,z',y;W,b)=l(z+\delta_2,z',y;W,b).$$






\vspace{-0.05in}
\subsection{Optimal Solution and Lazy Training}
\vspace{-0.05in}

The optimal solutions for supervised learning loss and contrastive learning loss are different. 
But for contrastive learning, by \cite{tosh2021contrastive}, the optimal solution of contrastive learning (\ref{eqn:contra_loss}) is 
\vspace{-0.1in}$$g^*(z,z')=\log\left(\frac{f_{Z,Z'}(z,z')}{f_{Z}(z)f_{Z'}(z')}\right),$$
where $f_{Z}$, $f_{Z'}$, and $f_{Z,Z'}$ are the marginal and joint density functions and are not linear functions. Thus, under our Definition \ref{assumption:nn}, which imposes restrictions on $W$ and $b$ such that $\sigma(W_h^{\top} z,b_h)$ has a linear function behavior, we cannot achieve good contrastive loss with the two-layer network modeling of $g_{W,b}$. 

Based on the following lemma, the best solution of contrastive loss, among linear networks $Tx$, still enjoys a nice and tractable form under simple settings.

\begin{lemma}[Basic Properties of Contrastive Learning]\label{lem:contra_basic}
Consider the class of functions $g_T(x,x')=x^{\top}T^{\top}Tx'$ using the ground-truth feature $X$ for some matrix $T^{\top}T=PDP^{\top}$ with an orthonormal matrix $P$ and a diagonal matrix $D$. Assuming $tr(D)$ is fixed, then the best model to minimize contrastive loss $\mathbb{E}_X \mathbb{E}_{Y} \log\left(1+\exp[-Y g_T(X,X')]\right)$ satisfies $D\propto I_d$. 
\end{lemma}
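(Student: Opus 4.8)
The plan is to exploit the fact that the objective depends on $T$ only through the Gram matrix $S:=T^\top T$, which turns a nonconvex problem in $T$ into a \emph{convex} problem in $S$, and then to pin down the minimizer by a symmetrization argument driven by the i.i.d.\ symmetric structure of $X$ in Assumption~\ref{assumption:x}.

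First I would rewrite the objective. Using $P(Y=1)=P(Y=-1)=\tfrac12$, and recalling that $X'=X$ when $Y=1$ while $X'$ is an independent copy when $Y=-1$, the contrastive loss becomes
\[
L(S)=\tfrac12\,\mathbb{E}_X\log\!\big(1+e^{-X^\top S X}\big)+\tfrac12\,\mathbb{E}_{X,X'}\log\!\big(1+e^{X^\top S X'}\big),
\]
where $g_T(x,x')=x^\top S x'$ with $S=T^\top T$. Since every PSD matrix arises as $T^\top T$ (take $T=S^{1/2}$) and conversely $T^\top T\succeq 0$, minimizing over $T$ is exactly minimizing $L$ over the convex set $\mathcal C=\{S\succeq 0:\operatorname{tr} S=c\}$, where $c=\operatorname{tr}(D)=\operatorname{tr}(S)$ is the fixed trace. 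For each fixed $(X,X')$ the maps $S\mapsto X^\top S X$ and $S\mapsto X^\top S X'$ are linear, and $u\mapsto\log(1+e^{\pm u})$ is convex, so every integrand is convex in $S$; taking expectations preserves convexity, hence $L$ is convex on $\mathcal C$.

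Next I would establish a symmetry of $L$. Let $G$ be the group of signed permutation matrices. Because the coordinates of $X$ are i.i.d.\ and symmetric, $gX\stackrel{d}{=}X$ for every $g\in G$, and likewise $(gX,gX')\stackrel{d}{=}(X,X')$ for the independent pair. Substituting these identities shows $L(g^\top S g)=L(S)$ for all $g\in G$, and $g^\top S g\in\mathcal C$ because $g$ is orthogonal (so $\operatorname{tr}(g^\top S g)=\operatorname{tr} S$ and PSDness is preserved). Now take any minimizer $S^*$ and form the group average $\bar S=\tfrac{1}{|G|}\sum_{g\in G} g^\top S^* g\in\mathcal C$. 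Convexity gives $L(\bar S)\le\tfrac{1}{|G|}\sum_{g}L(g^\top S^* g)=L(S^*)$, so $\bar S$ is also a minimizer. But averaging over the sign flips $g=\operatorname{diag}(\pm1)$ annihilates every off-diagonal entry (since $(g^\top S g)_{ij}=\pm S_{ij}$ averages to $\delta_{ij}S_{ij}$), and averaging over coordinate permutations equalizes the diagonal, so $\bar S=(c/d)\,I_d$. Hence $(c/d)I_d$ minimizes $L$; writing this optimum as $P D P^\top$ forces $D=P^\top\!\big((c/d)I_d\big)P=(c/d)I_d$, i.e.\ $D\propto I_d$, independently of $P$.

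The main obstacle is not the convexity, which is routine once the Gram-matrix reparametrization is spotted, but the symmetry step: it is precisely the i.i.d.\ symmetric assumption on the coordinates of $X$ that makes $L$ invariant under the full signed-permutation group, and this invariance is what collapses the symmetrized optimum to a scalar matrix. A secondary point concerns the wording ``the best model satisfies $D\propto I_d$'': the averaging argument already exhibits $(c/d)I_d$ as optimal, and to upgrade this to ``\emph{every} optimizer has this form'' one additionally needs strict convexity of $L$ along directions in the affine hull of $\mathcal C$, which holds provided the quadratic statistics $X^\top S X$ and $X^\top S X'$ are nondegenerate enough to make the expected logistic loss strictly convex.
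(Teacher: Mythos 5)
Your proof is correct, but it takes a genuinely different route from the paper's. The paper splits the loss into the positive-pair term $V_1$ and negative-pair term $V_2$, conditions on the sufficient statistics $\|P^{\top}X\|^2=v$ and $X^{\top}X'=v$ respectively, computes $\mathbb{E}[X^{\top}PDP^{\top}X\mid\cdot]=\mathrm{tr}(D)v/d$ via exchangeability and sign-symmetry of the coordinates, and applies conditional Jensen's inequality to each term, with equality attained when $D\propto I_d$ because the quadratic form then degenerates to a deterministic function of the conditioning variable. You instead reparametrize by the Gram matrix $S=T^{\top}T$, observe that the loss is convex on $\{S\succeq 0:\operatorname{tr}S=c\}$ and invariant under conjugation by signed permutations (using exactly the same i.i.d.\ symmetric structure of $X$), and average a minimizer over that group to collapse it to $(c/d)I_d$. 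The two arguments use the same distributional ingredients but package them differently: the paper's conditional-Jensen computation yields an explicit lower bound $\log(1+\exp(\pm\,\mathrm{tr}(D)v/d))$ for each level set, while your symmetrization is shorter, avoids the conditional-expectation calculation entirely, and generalizes immediately to any loss convex in $S$ under any signed-permutation-invariant distribution. Your closing caveat---that the argument exhibits $(c/d)I_d$ as \emph{a} minimizer and that uniqueness would require strict convexity---applies equally to the paper's proof, which likewise only verifies that equality in Jensen holds at $D\propto I_d$, so this is not a gap relative to what the paper establishes.
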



Lemma \ref{lem:contra_basic} motivates us to utilize a new lazy-training method in contrastive learning to simplify the analysis. Unlike supervised pre-training, where we fix weight $a\equiv 1$, in contrastive learning, we would ensure that $M^{\top}WAA^{\top}W^{\top}M\propto I_d$ is fixed, i.e., $WAA^{\top}W^{\top}\propto I_d$. As a result, instead of completely fixing last layer parameters, in contrastive learning, we take $A=\tau W^+$ for a fixed $\tau$ while updating weight matrix $W$, where $W^+$ is the pseudo inverse of $W$.

\vspace{-0.05in}
\subsection{Similar vs Dissimlar Pairs}
\vspace{-0.05in}

For the above setting, adversarial training will help feature purification. { However, different from supervised adversarial training where the attack of all samples contributes to feature purification, in contrastive learning, only the attack on dissimilar data pairs are affected by feature purification.}
\vspace{-0.1in}

\begin{figure}
    \centering
    \includegraphics[width=\linewidth]{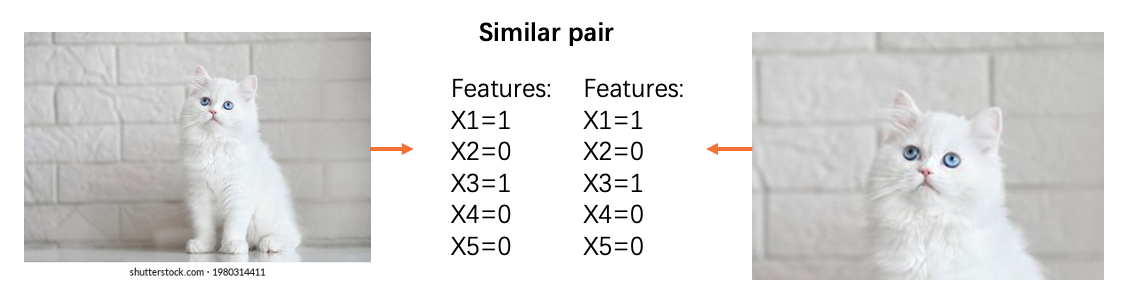}
    \includegraphics[width=\linewidth]{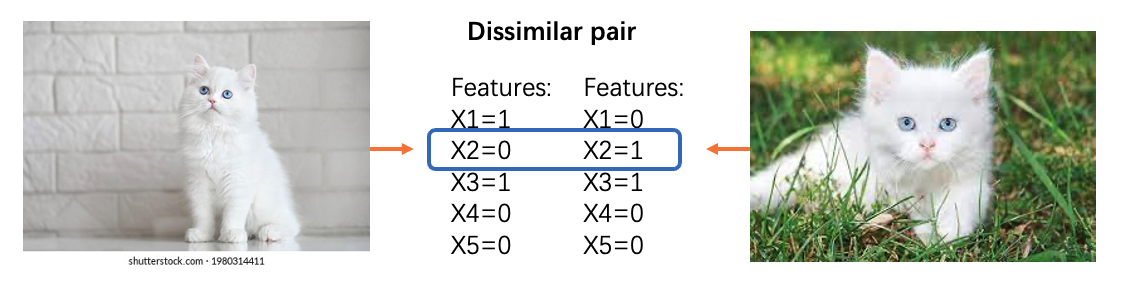}
    \vspace{-0.3in}
    \caption{Adversary attacks on dissimilar pairs, but have little effect on similar pairs.}
    \label{fig:pair}
    \vspace{-0.1in}
\end{figure}

\paragraph{Intuition}{To explain why adversarial training affects more on the loss of dissimilar pairs, we use Figure \ref{fig:pair} as an example. In Figure \ref{fig:pair}, we attack the left data and keep the right data unchanged.  Suppose that the attack changes the inactive features from 0 to another value of the left image, Given a similar pair (i.e., two views of the same data via different data augmentation and their underlying features are the same), the change of the left features is canceled when multiplying the zero feature of the right data. However, for dissimilar pairs, there is a mismatch between the features. For example, the attack changes $X_2$ of the left data to another value, when multiplying with $X_2=1$ on the right data, the product gets changed, leading to a big change in the loss value.}

\vspace{-0.1in}
\paragraph{Simulation}We also conduct a toy simulation and plot the result in the left panel of Figure \ref{fig:contra}. 

To generate $X$, we consider the following distribution. First, each coordinate of $X$ is independent of each other, and has $k/d$ probability to be non-zero. Second, given $X_i$ is non-zero, it has 1/2 probability to be positive, and we take the distribution as $\min(1,|\varepsilon|/\sqrt{k}+1/\sqrt{k})$ with $\varepsilon\sim N(0,1)$. The distribution is symmetric to $X_i<0$.

To generate $Z$, we randomly generate a unitary matrix $M$, and take $Z=MX+\xi$, with $\xi\sim N(0,\zeta^2 I_d)$. To generate $M$, we use library \texttt{pracma} in \texttt{R}. We take $(d,k,\zeta)=(1000,10,0.005)$, and generate 1000 samples in each simulation and repeat 30 times to obtain an average. To generate $Y$ for supervised learning, we take $\theta_0=\textbf{1}$, and $Y=X^{\top}\theta+ N(0,\sigma^2I_d)$ with $\sigma=0.1$. And in terms of the neural network, we take $H=10000$ hidden nodes.

 We control the average $\|U_h\|_0$ and evaluate the clean and adversarial loss. 
We plot four curves, representing the change of clean and adversarial contrastive losses for similar data pairs (i.e., $Z$ and $Z'(1)$) and dissimilar data pairs (i.e., $Z$ and $Z'(-1)$), as the number of features in each hidden node increases.
As the number of features in each hidden node gets larger, the adversarial loss for dissimilar pairs gets larger. The detailed setup, numbers in the figure, and standard errors can be found in Appendix \ref{sec:appendix:simulation}.

\begin{figure*}
    \centering
    \includegraphics[scale=0.47]{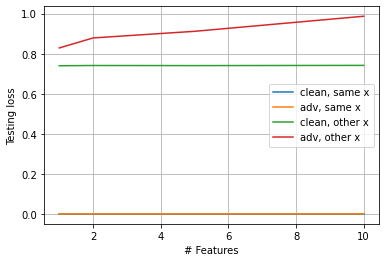}\includegraphics[scale=0.47]{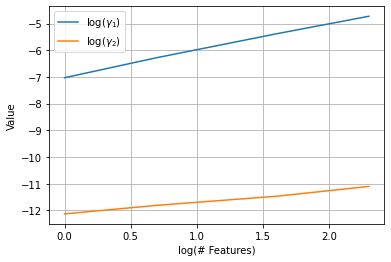}
\vspace{-0.in}
    \caption{Left: Clean/adversarial contrastive testing loss under different levels of purification of the hidden nodes, for similar data pairs (i.e., $Y=1$) and dissimilar data pairs (i.e., $Y=-1$). Note that the blue and yellow curves overlap. Right: How $\alpha$ is related to $m$. The values of $\gamma_1$ and $\gamma_2$ are assumed to be in $\Theta(\alpha)$ and  $\Theta(\alpha^2)$ respectively in Theorem \ref{lem:robust_similar}.}
    \label{fig:contra}
    \vspace{-0.2in}
\end{figure*}

\vspace{-0.1in}
\paragraph{Theory}Based on the above simulation observation and intuition, the following theorem demonstrates how an adversarial attack impacts contrastive learning.

\begin{theorem}\label{lem:robust_similar} 
Assume that $(W,b)$ satisfies Definition \ref{assumption:nn} and $|\mathcal{X}|=\Theta(k)$ where $\mathcal{X}$ denotes the set $\{i:|X_i|>0\}$.
Let $A=\tau W^+$ for some fixed $\tau>0$. If
$$U_{i,:}\text{diag}(\mathbb{I}(U_{\mathcal{X},:},\textbf{0}))U^{\top}(UU^\top)^{-1}_{:,j}=\Theta(\alpha)$$
for $i\in\mathcal{X}^c$ and $j\in\mathcal{X}$, and
$$U_{i,:}\text{diag}(\mathbb{I}(U_{\mathcal{X},:},\textbf{0}))U^{\top}(UU^\top)^{-1}_{:,j}=\Theta(\alpha^2)$$ for $i\neq j$ and $i,j\in\mathcal{X}^c$m and furthermore $\alpha=o(1/\sqrt{d})$,
then when $\epsilon=\Theta( 1/(\log(d)\sqrt{m^*k} ))$,
\begin{eqnarray}
    &&\mathbb{E}l_\epsilon(Z,Z'(1),1; W,b)\label{eqn:diff:similar}\\
   &=&\min_{(W',b')\in\mathcal{M}} \mathbb{E}l_0(Z,Z'(1),1; W',b')+\Theta(\epsilon)+O(\psi),\label{eqn:similar}\nonumber
   \end{eqnarray}
   and
\vspace{-0.1in}   \begin{eqnarray}
&&\mathbb{E}l_\epsilon(Z,Z'(-1),-1; W,b)\label{eqn:diff}\\
&=&\min_{(W',b')\in\mathcal{M}}\mathbb{E}l_0(Z,Z'(-1),-1; W',b')+O(\psi)\nonumber\\
&&+\Theta(\epsilon k^{3/2}/d)+\Theta(\epsilon\alpha^2\sqrt{d} ).\nonumber
\end{eqnarray}
\end{theorem}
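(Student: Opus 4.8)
The plan is to linearize the adversarial loss via the FGM first-order expansion, collapse the network onto its feature-space action using the activation lemmas behind Lemma~\ref{thm:adv_ideal}, and then evaluate the resulting attack magnitude separately for the two labels. First I would write $l=\log(1+\exp(-y g_{W,b}(z,z')))$ and use $\delta_2=\epsilon(\partial l/\partial z)/\|\partial l/\partial z\|_2$ to get, to leading order, $l_\epsilon-l_0=\epsilon\,|\partial l/\partial g|\,\|\partial g/\partial z\|_2+o$, where the curvature remainder $o$ is negligible for exactly the reason used in Lemma~\ref{thm:adv_ideal}: the gate activation has no second-order contribution away from its measure-zero boundary, and $\epsilon=\Theta(1/(\log d\sqrt{m^*k}))$ is small. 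I would then take expectations of this identity over $(X,X',\xi,\xi',Y)$ on the high-probability event described next.

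Next I would reduce to feature space. Setting $U=M^\top W$ and invoking Lemmas~\ref{lem:2} and~\ref{lem:3}, with probability tending to $1$ a node is activated iff it carries one of the active features, the attack flips no activation, and the noise $\xi^\top W_h$ is negligible inside every activated node; hence $\sigma(W^\top z,b)=\text{diag}(\mathbb{I}(U_{\mathcal{X},:},\textbf{0}))U^\top x$ up to $o(\psi)$, and similarly for $z'$ with its own active set. Substituting the lazy-training choice $A=\tau W^+$ together with $W=MU$ and the identity $WAA^\top W^\top=\tau^2 I_d$, a short matrix computation gives $g_{W,b}(z,z')=\tau^2\,x^\top Q (Q')^\top x'$ and $\|\partial g/\partial z\|_2=\tau^2\|Q(Q')^\top x'\|_2$, where $Q:=U\,\text{diag}(\mathbb{I}(U_{\mathcal{X},:},\textbf{0}))\,U^\top(UU^\top)^{-1}$ and $Q'$ is the analogous matrix built from the active set of $x'$. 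By construction $Q_{ij}$ is precisely the quantity the hypotheses control: $\approx 1$ on the active block, $\Theta(\alpha)$ from inactive rows to active columns, and $\Theta(\alpha^2)$ among distinct inactive coordinates. Since the chosen network is already $O(\psi)$-close to $\min_{(W',b')\in\mathcal{M}}\mathbb{E}l_0$ (the optimum being pinned down through Lemma~\ref{lem:contra_basic} and the leakage entering only at lower order), it then suffices to evaluate $\epsilon\,\mathbb{E}[\,|\partial l/\partial g|\,\tau^2\|Q(Q')^\top x'\|_2\,]$ for each label.

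For similar pairs ($x'=x$, $Q'=Q$) I would use that $g\approx\tau^2\|x\|^2=\Theta(1)$, so $|\partial l/\partial g|=\Theta(1)$, while $QQ^\top x$ is dominated by the faithful identity block on $\mathcal{X}$, giving $\|QQ^\top x\|_2=\|x\|_2(1+o(1))=\Theta(1)$ with the $\alpha$-leakage only lower order; this produces the $\alpha$-free bound $\Theta(\epsilon)$. For dissimilar pairs, $x,x'$ are independent with mostly disjoint active sets, $g\approx0$ and $|\partial l/\partial g|=\Theta(1)$, and I would split $\|Q(Q')^\top x'\|_2$ into mass on the accidental overlap $\mathcal{X}\cap\mathcal{X}'$ and mass spread over inactive coordinates by the $\Theta(\alpha)$/$\Theta(\alpha^2)$ leakage. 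The overlap term must be computed as $P(\text{overlap}\ge1)\times(\text{typical magnitude})\approx(k^2/d)\cdot k^{-1/2}=\Theta(k^{3/2}/d)$, since when $k^2\ll d$ the overlap is usually empty and $\mathbb{E}\|\cdot\|\neq\sqrt{\mathbb{E}\|\cdot\|^2}$. The leakage term gives $\approx\Theta(\alpha^2)$ per inactive output coordinate (about $k$ terms of size $\Theta(\alpha^2 k^{-1/2})$ with random signs) over $\approx d$ coordinates, hence $\Theta(\alpha^2\sqrt d)$; together these yield $\Theta(\epsilon k^{3/2}/d)+\Theta(\epsilon\alpha^2\sqrt d)$.

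The hard part will be the bookkeeping in the dissimilar case. I must show that the cross terms—the leakage of $(Q')^\top$ landing on the identity block of $Q$, and the $\Theta(\alpha)$ active-to-inactive entries interacting with the rare overlap—do not dominate the two reported contributions, and I must rigorously justify the heavy-tailed expectation for the overlap term, where the naive second-moment estimate $\sqrt{k/d}$ is misleading. Controlling these requires $\alpha=o(1/\sqrt d)$ to keep the leakage magnitude subdominant to the identity block and $|\mathcal{X}|=\Theta(k)$ to pin the overlap probability, so that the two mechanisms separate cleanly into the $k^{3/2}/d$ and $\alpha^2\sqrt d$ scales.
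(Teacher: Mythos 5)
Your proposal follows essentially the same route as the paper's proof: first-order FGM linearization with the curvature remainder absorbed into $o$, reduction to feature space via the activation lemmas, the identity $A=\tau W^+=\tau U^\top(UU^\top)^{-1}M^\top$ collapsing the attack magnitude to $\tau^2\|U\,\text{diag}(\mathbb{I}(U_{\mathcal{X},:},\bm{0}))\,U^\top(UU^\top)^{-1}x'\|$ (your $Q$ is exactly the matrix the hypotheses control, and your extra $(Q')^\top$ acts as the identity on $x'$), and then the same three-way split into active block, leakage, and noise. You also correctly identify the one genuinely delicate step — evaluating the overlap contribution for dissimilar pairs as a first moment, $P(\text{overlap}\ge 1)\cdot k^{-1/2}=\Theta(k^{3/2}/d)$ rather than the misleading $\sqrt{\mathbb{E}\|\cdot\|^2}$ — which is precisely how the paper obtains that term, and your flagged cross term ($\Theta(\alpha)$ entries meeting the rare overlap) is the paper's $\alpha k/\sqrt{d}$ remainder, shown subdominant using $\alpha=o(1/\sqrt d)$.
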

\vspace{-0.1in}To make a connection between $\alpha$ and the level of purification of $U$,
we perform a simulation in Figure \ref{fig:contra} and calculate the average $U_{i,:}\text{diag}(\mathbb{I}(U_{\mathcal{X}},\textbf{0}))U^{\top}(UU^\top)^{-1}_{:,j}$ for ($i\in\mathcal{X}^c$,$j\in\mathcal{X}$) and ($i\neq j,\;i,j\in\mathcal{X}^c$) respectively, and denote $\gamma_1$ and $\gamma_2$ as the corresponding average value. From the right panel of Figure \ref{fig:contra}, one can see that  $\log(\gamma_1)$ and $\log(\gamma_2)$ are approximately linearly increasing functions of $\log(m)$. With a larger $m$, $\alpha$ will be larger. In addition, one can also see that $\log(\gamma_2)\approx \log(\gamma_1^2)$, which validates the appropriateness of our assumption in Theorem \ref{lem:robust_similar}.

To connect Theorem \ref{lem:contra_basic} and the intuition in Figure \ref{fig:contra}, when designing an attack on $Z$, $Z'(1)$ carries the information of true features $X$. As a result, the best attack on $Z$ that aims to make $Z+\delta$ dissimilar to $Z'(1)$, corresponds to the active features in $X$. For similar pairs, $\alpha$, which quantifies the associations between active and non-active features, will only have negligible effect.

On the other hand, the effect of the adversarial attack is different in (\ref{eqn:diff}) for dissimilar pairs. When $\alpha$ gets larger, $\epsilon\alpha^2\sqrt{d}$ can dominate the fixed $\epsilon k^{3/2}/d$, indicating that the neural network is more vulnerable to adversarial attack for dissimilar data pairs.

The above theorems and simulation evidences together answer our question: contrastive learning can also benefit from adversarial training.



\subsection{Discussion}
\label{rem:li}
\vspace{-0.05in}
    While we study how adversarial training purifies features in contrastive learning, another work, \cite{wen2021toward}, studies how random data augmentation improves feature purification. This augmentation is simpler to implement with a smaller computation cost, but there are two advantages of adversarial training. 
    
    First, for random data augmentation, the intercept term $b$ in the hidden node is taken to purify features only. For adversarial robustness, when $\epsilon$ gets larger, we need a larger $b$ to avoid the adversarial attack activate/deactivate hidden nodes. The intercept $b$ in adversarial training can better improve the robustness.
    
    Second, the random augmentation purifies features via decoupling the features in similar pairs, rather than in dissimilar pairs as in adversarial training. In Figure \ref{fig:contra}, the loss for similar pairs is smaller than dissimilar pairs, implying that adversarial training is more sensitive in purification.

    The data augmentation in \cite{wen2021toward} is also used in our experiments, and the clean-trained contrastive models are vulnerable to adversarial attack.



\section{Robustness in Downstream Tasks}\label{sec:downstream}
After obtaining the pre-trained model $(W,b)$, we further utilize it in a downstream supervised task.

The downstream training aims to minimize the clean loss of  downstream data $(Z_{\text{down}},Y_{\text{down}})$ w.r.t. $a$ given pre-trained weights $(W,b)$
\begin{eqnarray}
L^{W,b}(a):=\mathbb{E} L(\sigma(Z_{\text{down}}^{\top}W,b)a,Y_{\text{down}}),
\end{eqnarray}
where the loss function $L$ can be different from the one in pre-training, $Z_{\text{down}}=MX_{\text{down}}+\xi_{\text{down}}$ uses the same $M$ but possibly different $X_{\text{down}}$ satisfying the sparse coding model, $Y_{\text{down}}$ can also be different from $Y$. Denote $L_\epsilon$ as the corresponding adversarial loss, and $a^*$ as the corresponding optimal solution.


The following proposition indicates that the robustness in pre-training can be inherited.
\begin{proposition}\label{thm:downstream}
When $\epsilon=\Theta(1/(\log (d)\sqrt{m^*}k))$,

~(1) There exists $(W,b)$ that minimizes pre-training clean (supervised or contrastive) loss s.t.
        \begin{eqnarray*}
L_\epsilon^{W,b}(a^*)- L_0^{W,b}(a^*) \gg \epsilon \sqrt{k} + O(\psi).
\end{eqnarray*}
~(2) Assume $(W,b)\in\mathcal{M}$ minimizes the adversarial pre-training loss and $\sup_h \|U_h\|_0=1$, then
    \begin{eqnarray*}
L_\epsilon^{W,b}(a^*)- L_0^{W,b}(a^*)  = \Theta(\epsilon \sqrt{k})+O(\psi).
\end{eqnarray*}
\end{proposition}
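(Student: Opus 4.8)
The plan is to prove both parts of Proposition~\ref{thm:downstream} by reducing the downstream adversarial-vs-clean loss gap to the geometric quantity $\|a^{\top}\text{diag}(\mathbb{I}(W^{\top}z,b))W^{\top}\|_2$ controlled by Lemma~\ref{thm:adv_ideal}, and then invoking the purification structure established in Theorem~\ref{thm:informal} (supervised) and Theorem~\ref{lem:robust_similar} (contrastive). First I would fix an arbitrary downstream optimal weight $a^*$ minimizing the clean downstream loss $L_0^{W,b}(a^*)$, and write the downstream first-order expansion of the FGM adversarial loss exactly as in equation~(\ref{eqn:lem:1}): the per-sample gap is $\epsilon\,(\partial L/\partial f)\,\|(a^*)^{\top}\text{diag}(\mathbb{I}(W^{\top}z,b))W^{\top}\|_2 + o$, with the same curvature-based negligible term. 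The key observation is that the downstream task shares the pre-trained $(W,b)$ and the same $M$, so the activation-pattern analysis of Lemma~\ref{lem:2} and Lemma~\ref{lem:3} carries over verbatim: every hidden node whose learned features intersect the active set $\mathcal{X}_{\text{down}}$ is activated and stays activated under attack, so the effective attack magnitude is governed by the same sandwich bound~(\ref{eqn:lem:2}), namely $\|\theta_{\mathcal{X}}\|_2 \le \|(a^*)^{\top}\text{diag}(\mathbb{I}(U^{\top}X,b))U^{\top}\|_2 \le \|\theta\|_2$ where now $\theta = U a^*$ is the downstream-induced feature-space weight.

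For part~(2), I would use the hypothesis $\sup_h\|U_h\|_0 = 1$, i.e. perfect purification, to force the \textbf{left} equality in~(\ref{eqn:lem:2}). Under full purification each activated node contributes exactly one feature, so the diagonal-indicator matrix selects precisely the active coordinates and the norm collapses to $\|\theta_{\mathcal{X}}\|_2$. Since there are $\Theta(k)$ active features each contributing a $\Theta(1)$ weight (because $(\theta_0)_i=\Theta(1)$ propagates through the purified representation to give $\|\theta\|_\infty=\Theta(1)$ coordinate contributions), we get $\|\theta_{\mathcal{X}}\|_2 = \Theta(\sqrt{k})$, and taking expectation over $X$ and $\xi$ with the probability-tending-to-one event of Lemma~\ref{thm:adv_ideal} yields $L_\epsilon^{W,b}(a^*) - L_0^{W,b}(a^*) = \Theta(\epsilon\sqrt{k}) + O(\psi)$. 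The $O(\psi)$ term absorbs both the curvature remainder and the screening error from $\xi$ leaking through the rare multi-feature overlaps, exactly as in the pre-training statements.

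For part~(1), I would instead exhibit a pre-trained $(W,b)$ that is clean-optimal (so it satisfies $Ua=\theta_0$ and lies in $\mathcal{M}$) but is \emph{unpurified}, with a constant fraction of nodes activated simultaneously. By the \textbf{right}-equality regime of Lemma~\ref{thm:adv_ideal}, when all hidden nodes sharing a nonzero feature are activated the attack norm saturates at $\|\theta\|_2$; because the unpurified construction spreads each true feature across many collinear nodes, the attack can steer along the full $\theta$ direction rather than merely $\theta_{\mathcal{X}}$, and the aggregate off-diagonal leakage makes the effective norm grow strictly faster than $\sqrt{k}$. Concretely I would show the gap scales like $\epsilon\sqrt{m^* k}$ (the supervised bound in Theorem~\ref{thm:informal}) or picks up the $\epsilon\alpha^2\sqrt{d}$ dissimilar-pair term (the contrastive bound in Theorem~\ref{lem:robust_similar}), both of which dominate $\epsilon\sqrt{k}$ under the standing assumptions $m^*=o(d/k)$, $m^*\to\infty$, giving $L_\epsilon^{W,b}(a^*)-L_0^{W,b}(a^*)\gg \epsilon\sqrt{k}+O(\psi)$.

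The main obstacle I anticipate is \emph{verifying that $a^*$ for the downstream clean objective preserves the purification geometry} — i.e. that the clean-optimal downstream readout does not redistribute weight in a way that re-entangles the per-node feature selection. In part~(2) this is what lets the left equality of~(\ref{eqn:lem:2}) hold for $\theta = Ua^*$ rather than only for the pre-training $\theta_0$; I would handle it by arguing that, with purified $U$, the map $a\mapsto Ua$ is coordinate-wise decoupled across features, so the clean-loss minimizer over $a$ acts independently on each feature block and cannot create spurious node-level mixing. A secondary difficulty is controlling the expectation of the square-root norm in~(\ref{eqn:lem:1}) uniformly over the randomness of the downstream active set $\mathcal{X}_{\text{down}}$; I would discharge this with the same concentration argument (the third-moment bound from Assumption~\ref{assumption:x} feeding Lemma~\ref{lem:lem}) used to establish the probability-tending-to-one event in Lemma~\ref{thm:adv_ideal}.
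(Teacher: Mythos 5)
Your proposal is correct and follows essentially the same route as the paper: the paper's own proof is simply the observation that the argument of Lemma~\ref{thm:adv_ideal} holds for any $\theta=Ua^*$ (not just $\theta_0$), combined with the constructions of Theorem~\ref{thm:clean} and Theorem~\ref{thm:adv} for the unpurified and purified cases respectively. Your write-up fleshes out the same reduction in considerably more detail (including the correct caveat that the $\gg$ in part~(1) needs $m^*\to\infty$), but it is the same argument.
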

\vspace{-0.1in}The proof of Proposition \ref{thm:downstream} is similar to Theorem \ref{thm:clean}.

Proposition \ref{thm:downstream} illustrates two observations. First, using clean loss in the pre-training, since one cannot purify the neural network, the corresponding downstream training is not robust. Second, if we obtain a purified neural network, the downstream model is robust.

\section{Real-Data Experiments}\label{sec:experiment}

{Our experiments aim to justify (1) the robustness inheritance phenomenon in Section \ref{sec:downstream}; (2) Adversarial training purifies the features  (Section \ref{sec:pretrain_super} and \ref{sec:contra}).}

\vspace{-0.05in}
\subsection{Experimental Setups}
\vspace{-0.05in}

We perform supervised learning \cite{rice2020overfitting}\footnote{\url{https://github.com/locuslab/robust_overfitting} } and contrastive learning\cite{kim2020adversarial}\footnote{\url{https://github.com/Kim-Minseon/RoCL} } pre-training (i.e., pre-training consists of a clean training phase, followed by an adversarial training phase) to verify that the hidden nodes are purified. After pre-training the neural network, we remove its last layer, train a new last layer using a supervised task (clean training), and test the adversarial robustness. 

Our tests are conducted on ResNet-18~\cite{he2016deep}. The attack method used for training and evaluation is PGD under $l_{\infty}$ norm and $\epsilon=8/255$. We use CIFAR-10, CIFAR-100, or Tiny-Imagenet in pre-training and CIFAR-10 for downstream training and testing. 
Details on training configurations are in Table \ref{tab:config} in the appendix, and we retain the same configuration used by the original GitHub repositories. We perform the training on a RTX-2080 GPU with 12GB RAM.

\begin{table}[ht]
\centering

  \begin{tabular}{lccrr}
  \hline Pre-train & Pre-train & Down  & Acc & Robust \\ 
  \hline
  \multirow{6}{*}{CIFAR10} & Clean & Clean  & {0.955} & 0.001 \\ 
  & Clean & Adv Sup  & 0.477 & 0.109 \\ 
   & Adv Sup & -  & 0.810 & 0.495 \\
   & Adv Sup& Clean  & {0.847} & {0.429} \\
   & Adv Sup & Adv Sup  & 0.836 & 0.484 \\
   & Adv Contra& Clean  & {0.831} & {0.393} \\
   & Adv Contra&  Adv Sup & 0.807 & 0.462 \\
  
   \hline
\end{tabular}

\caption{Robustness and accuracy in CIFAR-10 downstream task for different pre-training setups.  ``Pre-train'' and ``Downstream'' indicate the method of pre-training and the downstream task. ``Adv'' stands for adversarial training. ``Sup'' and ``Contra'' stands for supervised and constrastive learning. } 
\label{tab:adv_pretrain}

\end{table}

{Table \ref{tab:adv_pretrain} shows the training results for CIFAR-10. For the results of using CIFAR-100 or Tiny-Imagenet in the pre-training, we postpone them to Table \ref{tab:robust} in the appendix due to the page limit. In Table \ref{tab:adv_pretrain}, we evaluate the clean accuracy (Acc) and robust accuracy (Robust) in the testing dataset. For both supervised and contrastive adversarial pre-training training + clean downstream training, we observe higher robustness against PGD attacks than clean pre-training, despite minor losses in standard accuracy. This verifies the robustness inheritance phenomenon.}

We also provide benchmarks for comparison. First, clean pre-training + clean downstream training together result in near-zero robustness. Second, clean pre-training + adversarial downstream training increased robustness by 10\%, but at the cost of drastically decreased clean accuracy (-47.8\%), since the learning capacity of downstream linear layer is limited. Third, when the downstream tasks are also trained in an adversarial manner, compared with clean downstream training, the robustness increases by 5.5\% in the task following supervised adversarial pre-training and 6.9\% in the task following adversarial contrastive pre-training, which means that we are not losing too much from using clean training in the downstream tasks. Finally, the robustness is only slightly higher compared to adversarial training from scratch (49.5\%). 

Table \ref{tab:adv_pretrain} also provides the results when using CIFAR-100 in the pre-training. The observations are similar to the case of CIFAR-10. Similar results can be found in Table \ref{tab:advsup_pretrain_kernel7} in the appendix for a different input layer kernel size. {In contrast, Section \ref{exp:aug} shows that data augmentation method \citep{wen2021toward} solely cannot effectively improve robustness.}

\begin{figure}[!ht]
    \centering\vspace{-0.1in}
    \includegraphics[width=0.4\textwidth]{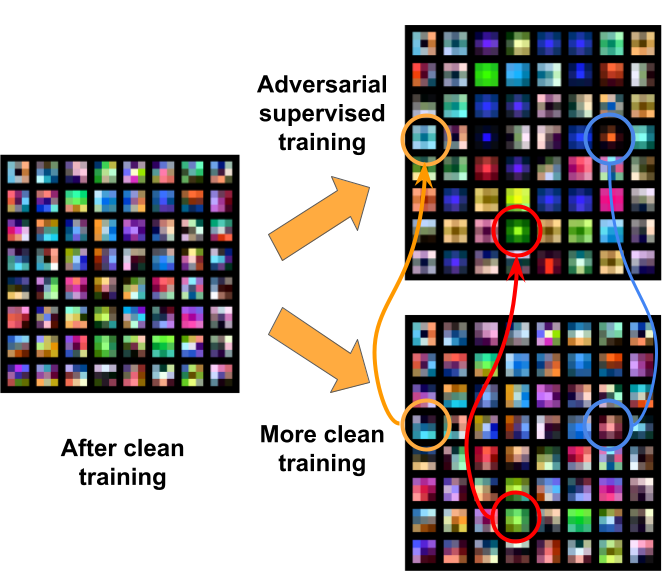}
    \caption{Learned features in the input convolutional layer trained on CIFAR-10.  } \vspace{-0.1in}
    \label{fig:conv1_pretrained}
\end{figure}

In addition to the numerical robustness result, we also visualize the trained neural networks to demonstrate the feature purification effect. Figure \ref{fig:conv1_pretrained} visualizes the features in the input convolutional layer learned from adversarial and clean pre-training. The features learned from adversarial training tend to have fewer types of colors in one cell, showing purification effects. Features in multiple filters (see the blocks marked with circles for examples) become highly concentrated, reducing the small perturbations around the center points. More figures of purification can be found in Figure \ref{fig:conv1_pretrained_cifar100}, \ref{fig:conv1_pretrained_cifar10_contrastive} and \ref{fig:conv1_pretrained_cifar100_contrastive} in Appendix \ref{sec:exp}.


\section{Conclusion}\label{sec:discussion}

In this study, we consider the feature purification effect of adversarial training in supervised/self-supervised pre-training, and the robustness inheritance in the downstream clean trained task. Both theory and experiments  demonstrate the feature purification phenomenon. As for future direction, while we consider adversarial pre-training and clean fine-tuning, it can still be burdensome for the pre-trained model provider to train a robust model. Thus, it is interesting to study the performance of clean pre-training and adversarial fine-tuning, which is needed when the pre-training is expensive, e.g., foundation models. As mentioned in Table \ref{tab:adv_pretrain}, when simply using clean pre-training with adversarial fine-tuning, the robustness cannot be effectively improved. Other methods may be considered to improve the robustness.
\bibliographystyle{asa}
\bibliography{regression}

\onecolumn
\appendix

\section*{Checklist}
 \begin{enumerate}

 \item For all models and algorithms presented, check if you include:
 \begin{enumerate}
   \item A clear description of the mathematical setting, assumptions, algorithm, and/or model. \textbf{Yes}.
   \item An analysis of the properties and complexity (time, space, sample size) of any algorithm. \textbf{Not Applicable}.
   \item (Optional) Anonymized source code, with specification of all dependencies, including external libraries. \textbf{No}, we are using Github repositories from other literature.
 \end{enumerate}

 \item For any theoretical claim, check if you include:
 \begin{enumerate}
   \item Statements of the full set of assumptions of all theoretical results. \textbf{Yes}.
   \item Complete proofs of all theoretical results. \textbf{Yes}.
   \item Clear explanations of any assumptions. \textbf{Yes}. 
 \end{enumerate}

 \item For all figures and tables that present empirical results, check if you include:
 \begin{enumerate}
   \item The code, data, and instructions needed to reproduce the main experimental results (either in the supplemental material or as a URL). \textbf{No}.
   \item All the training details (e.g., data splits, hyperparameters, how they were chosen). \textbf{No}, we are using default settings in the existing code and highlighting the changes in our paper.
         \item A clear definition of the specific measure or statistics and error bars (e.g., with respect to the random seed after running experiments multiple times). \textbf{Yes}.
         \item A description of the computing infrastructure used. (e.g., type of GPUs, internal cluster, or cloud provider). \textbf{No}, we only use a single GPU, and the computation is not expensive.
 \end{enumerate}

 \item If you are using existing assets (e.g., code, data, models) or curating/releasing new assets, check if you include:
 \begin{enumerate}
   \item Citations of the creator If your work uses existing assets. \textbf{Yes}.
   \item The license information of the assets, if applicable. \textbf{Not Applicable}.
   \item New assets either in the supplemental material or as a URL, if applicable. \textbf{Not Applicable}.
   \item Information about consent from data providers/curators. \textbf{Not Applicable}.
   \item Discussion of sensible content if applicable, e.g., personally identifiable information or offensive content. \textbf{Not Applicable}.
 \end{enumerate}

 \item If you used crowdsourcing or conducted research with human subjects, check if you include:
 \begin{enumerate}
   \item The full text of instructions given to participants and screenshots. \textbf{Not Applicable}.
   \item Descriptions of potential participant risks, with links to Institutional Review Board (IRB) approvals if applicable. \textbf{Not Applicable}.
   \item The estimated hourly wage paid to participants and the total amount spent on participant compensation. \textbf{Not Applicable}.
 \end{enumerate}

 \end{enumerate}
\newpage
Below is a list of the contents in this appendix:
\begin{itemize}
    \item Section \ref{sec:sup}: detailed theorem for supervised learning.
    \item Section \ref{sec:dics}: discussion on potential relaxations of the assumptions.
    \item Section \ref{sec:appendix:linf}: discussion when using $\mathcal{L}_{\infty}$ attack.
    \item Section \ref{sec:exp}: real-data experiments.
    \item Section \ref{sec:appendix:simulation}: simulation details.
    \item Section \ref{sec:appendix:proof_l2}: the proof for theorems and lemmas using $\mathcal{L}_2$ attack. 

\end{itemize}

\section{Details for Supervised Learning}\label{sec:sup}

\paragraph{Clean training does not purify features} 
The following theorem indicates that clean training can achieve good clean performance without feature purification. 
\begin{theorem}\label{thm:clean}

For some $(W,b)\in\mathcal{M}$ satisfying $Ua=\theta_0$, for square loss, absolute loss, and logistic regression,
\begin{eqnarray*}
    \mathbb{E}l_0(Z,Y;W,b)=\mathbb{E}l_0(X,Y;\theta_0)+O(\psi),
\end{eqnarray*}
where $\psi$ is a vanishing term induced by the noise $\xi$ and the activation gate, i.e., the discrepancy between $\mathbb{I}(x^{\top}U_{h},0)$ and $\mathbb{I}(x^{\top}U_{h}+\xi^{\top}W_{h},b_h)$. If $Hk^3(m^*)^3=O(d^{2-\varepsilon})$ for some $\varepsilon>0$, then $\psi\rightarrow 0$. 

There are many choices of  $(W,b)\in\mathcal{M}$ with \textbf{good clean performance}, i.e.,
\begin{eqnarray}\label{eq:cleanloss}
    \mathbb{E}l_0(Z,Y;W,b)
    =\min_{(W',b')\in\mathcal{M}}\mathbb{E}l_0(Z,Y;W',b')+O(\psi)=\mathbb{E}l_0(X,Y;\theta_0)+O(\psi).
\end{eqnarray}

Meanwhile, \textbf{their robustness is poor}:
When taking $\epsilon =\Theta( 1/(\log(d)\sqrt{m^*k} ))$,
\begin{eqnarray*}
\mathbb{E}l_\epsilon(Z,Y;W,b)-\min_{(W',b')\in\mathcal{M}}\mathbb{E}l_\epsilon(Z,Y;W',b')=O(\psi)+\Theta( \epsilon \sqrt{m^* k}),
\end{eqnarray*}
The notation $\Theta$ belongs to the family of Big-O notation, and it is the same as $\asymp$. For two sequences $\{a_n\}, \{b_n\}$, $b_n=\Theta(a_n)$ (or $b_n\asymp a_n$) means that when $n\rightarrow\infty$, there exists some constants $c_0,c_1>0$ so that $c_0a_n\leq b_n\leq c_1a_n$.

Note that when $m^*$ and $k$ are small enough, and $H\gg d$ in a suitable range, $\epsilon \sqrt{m^* k}\gg O(\psi)$.

\end{theorem}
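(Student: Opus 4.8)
The plan is to prove the three assertions in turn, all resting on the pointwise approximation $f_{W,b}(z)\approx x^\top\theta_0$ that drives this section. Writing $U=M^\top W$ and recalling $a\equiv\mathbf 1$, we have $f_{W,b}(z)=\sum_h \sigma(x^\top U_h+\xi^\top W_h,b_h)$, while the oracle predictor is $x^\top\theta_0=\sum_h x^\top U_h$ since $U\mathbf 1=\theta_0$. First I would decompose the node-wise error into a gating part and a residual-noise part. For the gating part, Lemma \ref{lem:2} guarantees that every node carrying an active feature fires, and the screening argument of Section \ref{sec:intuition_nonline} (formalized through Lemma \ref{lem:3}) guarantees that a node carrying no active feature stays silent with high probability, so the surviving discrepancy on a firing node is exactly $\xi^\top W_h$. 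For the noise part, the aggregate $\sum_{h\ \mathrm{active}}\xi^\top W_h$ is a mean-zero Gaussian whose variance I would control with the concentration/third-moment bounds of Lemma \ref{lem:lem} together with a count of the active nodes (at most $\Theta(Hm^\ast k/d)$). Collecting these gives $f_{W,b}(z)=x^\top\theta_0+O_p(\psi)$, and passing to the loss via the local Lipschitz/smoothness of square, absolute, and logistic loss yields $\mathbb E l_0(Z,Y;W,b)=\mathbb E l_0(X,Y;\theta_0)+O(\psi)$. Propagating the variance and misfire probability through the counting bound is precisely what produces the sufficient condition $Hk^3(m^\ast)^3=O(d^{2-\varepsilon})\Rightarrow\psi\to0$.

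For the multiplicity claim I would supply the matching lower bound: by the same screening/concentration argument every $(W',b')\in\mathcal M$ outputs $f_{W',b'}(z)=x^\top\theta'+O_p(\psi)$ with $\theta'=U'a'$, so its clean risk is $\mathbb E l_0(X,Y;\theta')+O(\psi)$; since $\theta_0$ is Bayes optimal (for square loss $\mathbb E(Y-\theta'^\top X)^2=\sigma^2+\mathbb E((\theta'-\theta_0)^\top X)^2$ is minimized at $\theta_0$, and analogously for the other two losses), no admissible network beats $\mathbb E l_0(X,Y;\theta_0)$ up to $O(\psi)$. Combined with the first part this pins the minimum. The multiplicity is then purely an over-parametrization phenomenon: with $a\equiv\mathbf 1$ the only binding constraint is the $d$ linear equations $U\mathbf 1=\theta_0$ on the $H\gg d$ columns of $U$, so admissible solutions range continuously from the fully purified one ($\|U_h\|_0\le1$, a coordinatewise split of $\theta_0$) to highly mixed ones ($\|U_h\|_0=m^\ast$), all attaining the clean minimum to $O(\psi)$.

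For the robustness claim I would invoke Lemma \ref{thm:adv_ideal}: because at $\epsilon=\Theta(1/(\log d\sqrt{m^\ast k}))$ no firing node is toggled by the attack, $\mathbb E l_\epsilon-\mathbb E l_0=\epsilon\,\mathbb E[\tfrac{\partial l}{\partial f}\,\|v\|_2]+o$, where $v=a^\top\mathrm{diag}(\mathbb I(U^\top X,b))U^\top$ is the leaked gradient and $\|\theta_\mathcal X\|_2\le\|v\|_2\le\|\theta\|_2$. Since the previous two parts make the clean risks of all $Ua=\theta_0$ networks agree to $O(\psi)$, the adversarial minimum over $\mathcal M$ is attained (to $O(\psi)$) by the purified network, for which Lemma \ref{thm:adv_ideal} gives $\|v\|_2=\|\theta_\mathcal X\|_2=\Theta(\sqrt k)$, so $\min_{\mathcal M}\mathbb E l_\epsilon=\mathbb E l_0(X,Y;\theta_0)+\Theta(\epsilon\sqrt k)+O(\psi)$. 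It then remains to exhibit one of the mixed clean-optimal networks of the second part whose leaked gradient is strictly larger, namely $\mathbb E[\tfrac{\partial l}{\partial f}\,\|v\|_2]=\Theta(\sqrt{m^\ast k})$; subtracting and using $\tfrac{\partial l}{\partial f}=\Theta(1)$ in expectation delivers the advertised gap $O(\psi)+\Theta(\epsilon\sqrt{m^\ast k})$, which dominates $\psi$ in the stated range.

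The \emph{main obstacle} is this last step: establishing that the mixed network leaks a gradient of size \emph{exactly} $\Theta(\sqrt{m^\ast k})$, a precise rate rather than the worst-case bound $\|\theta\|_2$. For a fixed admissible $U$ with $m^\ast$ features per node and same-sign weights (Definition \ref{assumption:nn}(4)), I would have to track how the indicator $\mathbb I(U^\top X,b)$ correlates the firing of nodes that share a coordinate, and how the resulting same-sign contributions accumulate \emph{coherently} on both active and inactive coordinates of $v$. Showing this coherent accumulation concentrates at the $\Theta(\sqrt{m^\ast k})$ level, rather than cancelling or being dominated by the trivial $\Theta(\sqrt k)$ active part, uniformly enough to survive the expectation against $\tfrac{\partial l}{\partial f}$ and to keep the Lemma \ref{thm:adv_ideal} curvature term negligible relative to $\Theta(\epsilon\sqrt{m^\ast k})$, is the crux; the clean-loss and multiplicity parts are comparatively routine once the screening and concentration machinery of Lemmas \ref{lem:2}, \ref{lem:3}, and \ref{lem:lem} is in place.
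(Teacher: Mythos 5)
Your overall architecture matches the paper's: construct mixed networks in $\mathcal{M}$ with $Ua=\theta_0$, show their clean loss matches $\mathbb{E}l_0(X,Y;\theta_0)$ up to $\psi$ via the screening lemmas, and then use Lemma \ref{thm:adv_ideal} to separate the adversarial loss of the mixed network from that of the purified one. However, the step you yourself flag as the crux --- showing the mixed network's leaked gradient satisfies $\mathbb{E}\|a^{\top}\mathrm{diag}(\mathbb{I}(U^{\top}X,b))U^{\top}\|_2=\Theta(\sqrt{m^*k})$ rather than merely lying between $\|\theta_{\mathcal X}\|_2$ and $\|\theta\|_2$ --- is left unresolved, and it is exactly where the paper supplies the one idea your proposal is missing. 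The paper does not take a generic admissible $U$ and track correlations among firing nodes; it \emph{constructs} $U$ by partitioning the $d$ coordinates into disjoint groups of $m$ features and assigning each group wholesale to $Hm/d$ hidden nodes. With this block structure, whenever one feature of a group is active, every node containing that group fires, so all $m$ coordinates of the group appear in $v=a^{\top}\mathrm{diag}(\mathbb{I}(U^{\top}X,b))U^{\top}$ at their full magnitude $|\theta_j|=\Theta(1)$; since the $\Theta(k)$ active features fall into distinct groups with high probability (as $mk\ll d$), one gets $\|v\|_2^2=\Theta(mk)$ with no cancellation or correlation analysis at all. The ``coherent accumulation'' you worry about is built into the construction rather than proved for arbitrary members of $\mathcal{M}$ --- the theorem only asserts the existence of clean-optimal networks with poor robustness, so a single explicit family suffices.

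A secondary caution: your plan to pass from the pointwise statement $f_{W,b}(z)=x^{\top}\theta_0+O_p(\psi)$ to the risk bound ``via local Lipschitz/smoothness'' is too quick for the square loss, which is not globally Lipschitz. The paper's proof spends most of its length on precisely this conversion: it splits the expectation over the events that all gates are correct, that some gate misfires while only $O(k\log k)$ features are active, that more than $k\log k$ features are active, and that $|Y|>d$, bounding the tails with the Berry--Esseen estimate of Lemma \ref{lem:4} and the worst-case output magnitude $b_{\max}H$. This case analysis is what actually produces the explicit form $\psi=Hm^3k^{3}\log^2 k/d^2+\sqrt{k/d}$ and hence the sufficient condition $Hk^3(m^*)^3=O(d^{2-\varepsilon})$; a purely high-probability argument would not. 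On the positive side, your Bayes-optimality argument for lower-bounding $\min_{(W',b')\in\mathcal M}\mathbb{E}l_0$ is sound and is in fact spelled out more explicitly than in the paper.
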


The proof of Theorem \ref{thm:clean} and the following Theorem \ref{thm:adv} mainly utilize Lemma \ref{thm:adv_ideal}. In Lemma \ref{thm:adv_ideal}, the results hold in probability. To prove Theorem \ref{thm:clean} and \ref{thm:adv}, the main goal is to quantify the effect when the exceptions happen.

\paragraph{Adversarial training purifies features}
Based on the idea in Lemma \ref{thm:adv_ideal}, the following theorem shows how adversarial training improves robustness and how purification happens. One can purify the neural network using adversarial training while achieving a good performance in both clean and adversarial testing.
\begin{theorem}\label{thm:adv}
Assume $\epsilon=\Theta(1/(\sqrt{km^*}\log d))$ and $H=o(\epsilon d^{3/2})$, then if $W,b\in\mathcal{M}$ leads to \textbf{a small adversarial loss}, i.e.,
\begin{eqnarray*}
\mathbb{E}l_\epsilon(Z,Y;W,b) 
= \min_{W',b'\in\mathcal{M}}\mathbb{E}l_\epsilon(Z,Y;W',b')+O(\psi),
\end{eqnarray*}
then (1) \textbf{its clean performance is also good}:
\begin{eqnarray*}
    \mathbb{E}l_0(Z,Y;W,b) - \mathbb{E}l_0(X,Y;\theta_0)=O(\psi)+O(\epsilon \sqrt{k}),
\end{eqnarray*}
and (2) when $d/H\gg\psi$, $(1-o(1))H$ hidden nodes satisfy $\|U_h\|_0=1$.
\end{theorem}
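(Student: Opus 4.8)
The plan is to work entirely through the decomposition $l_\epsilon = l_0 + \Delta_{W,b}$ and to exploit the characterization of the attack term $\Delta_{W,b}$ supplied by Lemma \ref{thm:adv_ideal}. Writing $v := a^{\top}\text{diag}(\mathbb{I}(U^{\top}X,b))U^{\top}$, that lemma gives $\mathbb{E}\Delta_{W,b} = \epsilon\,c\,\mathbb{E}\|v\|_2 + (\text{negligible})$ with $c=\Theta(1)$ the loss-derivative factor, together with the sandwich $\|\theta_{\mathcal{X}}\|_2 \le \|v\|_2 \le \|\theta\|_2$ holding with probability tending to one, where the left equality is attained exactly when every activated node is purified. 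Since $|\mathcal{X}|=\Theta(k)$ and $\|\theta\|_\infty=\Theta(1)$, the minimal value of $\mathbb{E}\|v\|_2$ is $\Theta(\sqrt{k})$ and is achieved by a purified network with $Ua=\theta_0$; by the clean-loss computation behind Theorem \ref{thm:clean}, such a purified network also has clean loss $\mathbb{E}l_0(X,Y;\theta_0)+O(\psi)$. This produces the competitor bound $\min_{W',b'}\mathbb{E}l_\epsilon \le \mathbb{E}l_0(X,Y;\theta_0)+\Theta(\epsilon\sqrt{k})+O(\psi)$.

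For part (1) I would combine this competitor bound with the near-optimality hypothesis. Using $\mathbb{E}\Delta_{W,b}\ge 0$ and discarding it,
\[
\mathbb{E}l_0(Z,Y;W,b) \le \mathbb{E}l_\epsilon(Z,Y;W,b) = \min_{W',b'}\mathbb{E}l_\epsilon + O(\psi) \le \mathbb{E}l_0(X,Y;\theta_0) + \Theta(\epsilon\sqrt{k}) + O(\psi).
\]
The matching lower bound $\mathbb{E}l_0(Z,Y;W,b)\ge \mathbb{E}l_0(X,Y;\theta_0)-O(\psi)$ holds because $\theta_0$ is the optimal feature predictor for each of the three losses and the $O(\psi)$ absorbs the discrepancy between acting on $Z$ versus $X$ (the same $\psi$ as in Theorem \ref{thm:clean}). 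Together these give $\mathbb{E}l_0(Z,Y;W,b)-\mathbb{E}l_0(X,Y;\theta_0)=O(\psi)+O(\epsilon\sqrt{k})$, the claimed clean-performance bound.

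The crux is part (2), where I would upgrade the order bound into a tight \emph{excess} bound. Subtracting the two adversarial losses and inserting the clean-loss estimates, near-optimality forces $\mathbb{E}\Delta_{W,b}-\mathbb{E}\Delta_{\text{pure}} = O(\psi)$, since the clean-loss difference between the given network and the purified competitor is itself $\ge -O(\psi)$. Translating through Lemma \ref{thm:adv_ideal}, this means $\mathbb{E}\|v\|_2 - \|\theta_{\mathcal{X}}\|_2 = O(\psi/\epsilon)$. Because $\|v\|_2\ge\|\theta_{\mathcal{X}}\|_2$ with high probability, the left side is a nonnegative expected excess, which I would lower-bound by the leakage created by unpurified nodes: by Lemma \ref{lem:2} every node carrying an active feature is activated, so an unpurified node that contains an active feature (probability $\Theta(\|U_h\|_0 k/d)$) injects its remaining inactive coordinates, each of size $\Theta(\gamma)$, into $v$, raising $\|v\|_2$ above $\|\theta_{\mathcal{X}}\|_2$. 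Summing over $S=\{h:\|U_h\|_0\ge 2\}$ and using the expansion $\|v\|_2-\|\theta_{\mathcal{X}}\|_2\approx(\sum_{j\in\mathcal{X}^c}v_j^2)/(2\|\theta_{\mathcal{X}}\|_2)$ yields a bound of the form $|S|\cdot\Theta(\gamma^2 k/d)/\Theta(\sqrt{k})\lesssim \psi/\epsilon$; solving for $|S|$ and substituting $\gamma=\Theta(d/H)$ and $\epsilon=\Theta(1/(\sqrt{km^*}\log d))$ shows $|S|=o(H)$ precisely under the stated scaling $d/H\gg\psi$. Thus all but an $o(1)$ fraction of the (non-zero, by Definition \ref{assumption:nn}) hidden nodes satisfy $\|U_h\|_0=1$.

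The main obstacle is the leakage-counting step: one must make precise how much each unpurified node inflates $\mathbb{E}\|v\|_2$, accounting for (a) the randomness of which of a node's $\le m^*$ features are active, (b) possible overlap of leaked inactive coordinates across nodes, which only helps the lower bound but must be shown not to cancel given the same-sign condition in Definition \ref{assumption:nn}(4), and (c) the accumulation over all $H$ nodes of the per-node exception events and curvature ($o$) terms from Lemma \ref{thm:adv_ideal} — which is exactly what the hypotheses $H=o(\epsilon d^{3/2})$ and $d/H\gg\psi$ are there to control. Verifying that these error terms stay below the $O(\psi)$ excess budget, rather than the leakage estimate itself, is where the delicate bookkeeping lies.
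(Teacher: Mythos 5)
Your proposal follows essentially the same route as the paper: part (1) via the sandwich $\mathbb{E}l_0\le\mathbb{E}l_\epsilon=\min+O(\psi)$ against the purified competitor of (\ref{eqn:super_upper}), and part (2) by lower-bounding the excess attack norm $\mathbb{E}\|v\|_2-\|\theta_{\mathcal X}\|_2$ through the leaked inactive coordinates of size $\Theta(\gamma)=\Theta(d/(Hm))$ contributed by unpurified nodes activated with probability $\Theta(mk/d)$, exactly as in the paper's count of $\Theta(Hkm(m-1)/d)$ leaked elements. The only difference is cosmetic bookkeeping (you bound $|S|=|\{h:\|U_h\|_0\ge 2\}|$ where the paper tracks the average $m$), and your explicit worry about the $\sqrt k$ factor in the denominator is legitimate — the paper itself is loose there — but it does not change the argument's structure.
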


\section{Potential Relaxations}\label{sec:dics}
Below is a list on the potential relaxations in the theory:

    For the activation function, our choice (\ref{eqn:activation}) simplifies the analysis to highlight the feature purification. For other activation functions, e.g., ReLU, if they can work as a gate to screen out noise, the idea of feature purification still works.

In terms of the architecture of the neural network, under the sparse coding model, as long as the first layer purifies the features, the neural network is always robust to adversarial attacks regardless of the number of layers. Thus, one can extend our analysis to multi-layer neural networks.

For the sparse coding model, this is a key assumption of the feature purification phenomenon. If all features are always active, there is no need to purify them in the hidden nodes to minimize adversarial loss. All features will contribute to the adversarial loss together. For future study, one may consider better connecting sparse models with real data distribution. In addition, one may also relax the linear model assumption between $X$ and $Y$. Intuitively, if the features are not purified, the attacker will attack the weights of the inactive features. Thus from this perspective, feature purification will also work beyond linear data models.


\section{Using \texorpdfstring{$\mathcal{L}_{\infty}$}{Linf} Attack}\label{sec:appendix:linf}
We consider fast gradient sign attack (FGSM) 
$$\delta_\infty = \epsilon \sgn(\partial l/\partial z).$$

We have
    \begin{eqnarray*}
f_{W,b}(z+\delta_\infty)&=& \sigma\left(\left(z+\epsilon\sgn\left(\frac{\partial l}{\partial f}\right)\sgn{(M U\text{diag}(\mathbb{I}(x^{\top}U+\xi^{\top} W,b))a)}\right)^{\top}W,b\right)a\\
&=& \sigma\left(\left(x+\xi^{\top} M+\epsilon\sgn\left(\frac{\partial l}{\partial f}\right)\sgn{(M U\text{diag}(\mathbb{I}(x^{\top}U+\xi^{\top} W,b))a)}\right)^{\top}U,b\right)a\\
&=&\left(x+\xi^{\top} M\right)^{\top}U\text{diag}(\mathbb{I}( (z+\delta_2)^{\top}W ,b))a\\
&&+\epsilon\sgn\left(\frac{\partial l}{\partial f}\right)\sgn{(M U\text{diag}(\mathbb{I}(x^{\top}U+\xi^{\top} W,b))a)}^{\top}MU\text{diag}(\mathbb{I}( (z+\delta_2)^{\top}W ,b))a\\
&=&f_{W,b}(z)+\epsilon\sgn\left(\frac{\partial l}{\partial f}\right)\|M U\text{diag}(\mathbb{I}(x^{\top}U+\xi^{\top} W,b))a\|_1.
\end{eqnarray*}

Assume the first coordinate of $x$ is non-zero. Since with probability tending to 1 (Lemma \ref{lem:2}), all the hidden nodes receiving $x_1$ are activated, we have $$a^{\top}\text{diag}(\mathbb{I}(U^{\top}x+\xi^{\top} W,b))U_{1,:} = a^{\top}U_{1,:}=\theta_1.$$

Assume the second coordinate of $x$ is zero, since we minimized $\|U\|_F$, each non-zero element of $U_{2,:}$ has the same sign as $\theta_2$, and
$$0\leq |a^{\top}\text{diag}(\mathbb{I}(U^{\top}x+\xi^{\top} W,b))U_{2,:}| \leq |\theta_2|,$$
and the left/right equation is satisfied if every node containing $x_2$ is not/is activated.

For $\mathcal{L}_2$ attack, $\|M U\text{diag}(\mathbb{I}(x^{\top}U+\xi^{\top} W,b))a\|_2$ becomes $\|U\text{diag}(\mathbb{I}(x^{\top}U+\xi^{\top} W,b))a\|_2$, so the attack is directly related to the each coordinate of $\theta$.

For $\mathcal{L}_{\infty}$ attack, we want to investigate $\|MD\theta\|_1$ where $D$ is a diagonal matrix with $D_{i,i}=a^{\top}\text{diag}(\mathbb{I}(U^{\top}x+\xi^{\top} W,b))U_{i,:}/\theta_i$. 

One can see that the relationship between $D$ and $\|MD\theta\|_1$ is more complicated because of the existence of $M$. To discuss about $\|MD\theta\|_1$, one need some information about $M$.

\begin{itemize}
    \item Assume $M$ is the identity matrix, then similar to $\mathcal{L}_2$ attack case, we have
    $$ \|\theta_{\mathcal{X}}\|_1\leq \|D\theta\|_1\leq \|\theta\|_1 .$$
    \item If the unitary matrix $M$ satisfies $\|MD\theta\|_1=\Theta(\|D\theta\|_1)$ for all $D,\theta$, then although we cannot claim $\sup \|U_{:,h}\|_0\leq 1$ lead to the minimal $\Delta_{W,b}$, we can still claim that a constant $\sup \|U_{:,h}\|_0$ is preferred than dense mixtures.
\end{itemize}

Under these two cases, all the observations for $\mathcal{L}_2$ apply to $\mathcal{L}_{\infty}$ attack.


\section{Real-Data Experiments}\label{sec:exp}

\subsection{Experiment Results}

Additional results can be found in Table \ref{tab:robust}, Figure \ref{fig:conv1_pretrained_cifar100}, \ref{fig:conv1_pretrained_cifar10_contrastive}, \ref{fig:conv1_pretrained_cifar100_contrastive}. The settings of the experiments can be found in \ref{tab:config}.

\begin{table}[!ht]
\centering

  \begin{tabular}{lccrr}
  \hline Pre-train & Pre-train & Down  & Acc & Robust \\ 
  \hline
  \multirow{6}{*}{CIFAR10} & Clean & Clean  & {0.955} & 0.001 \\ 
  & Clean & Adv Sup  & 0.477 & 0.109 \\ 
   & Adv Sup & -  & 0.810 & 0.495 \\
   & Adv Sup& Clean  & {0.847} & {0.429} \\
   & Adv Sup & Adv Sup  & 0.836 & 0.484 \\
   & Adv Contra& Clean  & {0.831} & {0.393} \\
   & Adv Contra&  Adv Sup & 0.807 & 0.462 \\
  \hline
  \multirow{3}{*}{CIFAR100} & Clean&  Clean & {0.786} & 0.000 \\ 
   & Adv Sup&  Clean & 0.649 & {0.108} \\
   & Adv Contra&  Clean & {0.749} & 0.185 \\\hline
   \multirow{3}{*}{Tiny-Imagenet} & Clean&  Clean & 0.840 & 0.001 \\ 
   & Adv Sup&  Clean & 0.323 & 0.131 \\
   & Adv Contra&  Clean & {0.774} & 0.150 \\

   \hline
\end{tabular}
\caption{Robustness inheritance.}\label{tab:robust}
\end{table}

\begin{sidewaystable}[]
    \centering
    \begin{tabular}{c|c|c}
    \hline
     &  Supervised  & Contrastive   \\
     \hline
     Clean Epochs & 200 & 200 \\
    Adv Epochs & 200 & 1,200 \\
    Downstream Epochs & 200 & 200\\
    Batch Size & 128 & 256  \\
    Transform & Croppring+Horizontal Flipping   &  Croppring+Horizontal Flipping+Color Jittering  \\
    Learning Rate & 0.1 & 0.1  \\
     \multirow{2}{*}{LR Updating Schedule} & \multirow{2}{*}{Divide by 10 after 50\% / 75\% of total epochs}     & Cosine learning rate annealing (SGDR) and   \\
    & & learning rate warmup. Warmup epoch = 10 \\
    Downstream Learning Rate & 0.1 & 0.1 \\
    Attack Iteration & 10 & 20 \\
    Attack Initialization & The original point & Within a random ball near the original point \\
    \hline
    \end{tabular}
    \caption{Detail configurations of different adversarial training experiments. To maximize robustness, the clean training use the ``Supervised'' configuration when pre-training on CIFAR10, and the ``Contrastive'' configuration when pre-training on CIFAR100. }
    \label{tab:config}
\end{sidewaystable}
\begin{figure}[!ht]
    \centering
    \includegraphics[width=0.6\textwidth]{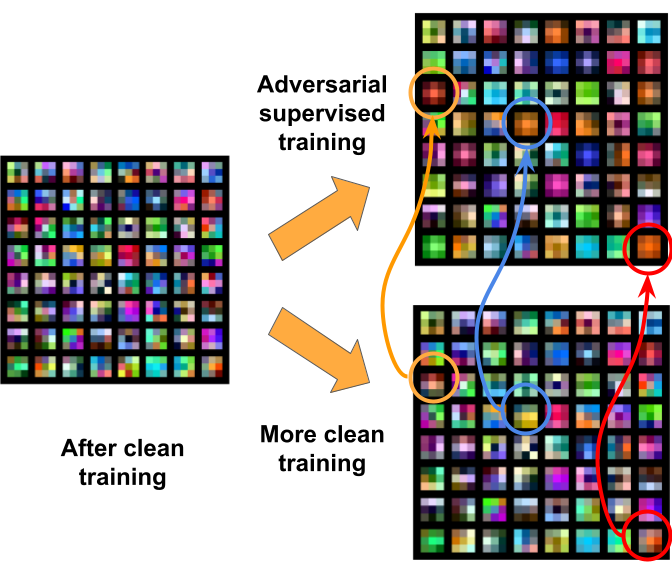}
    \caption{Learned features in the first convolutional layer with and without the adversarial supervised pre-training. The training is performed on CIFAR-100 dataset. Parameters in each filter are normalized to [0,1] separately.}
    \label{fig:conv1_pretrained_cifar100}
\end{figure}


\begin{figure}[!ht]
    \centering
    \includegraphics[width=0.6\textwidth]{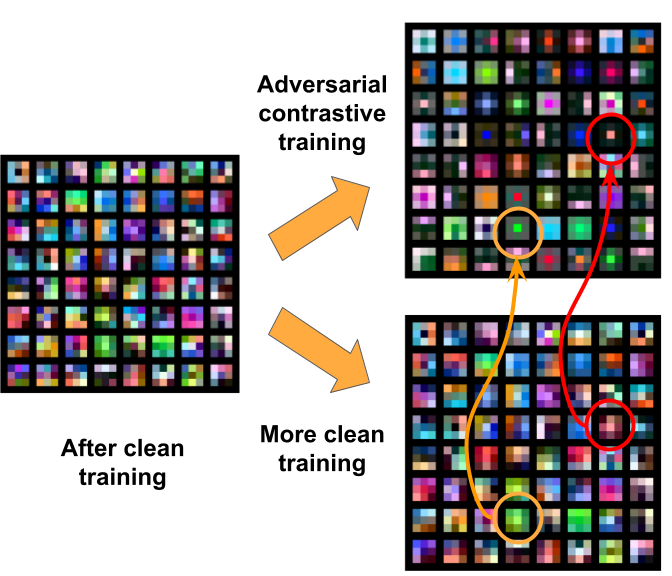}
    \caption{Learned features in the input convolutional layer with and without the adversarial contrastive pre-training. The training is performed on CIFAR-10 dataset. Parameters in each filter are normalized to [0,1] separately. }
    \label{fig:conv1_pretrained_cifar10_contrastive}
\end{figure}

\begin{figure}[!ht]
    \centering
    \includegraphics[width=0.6\textwidth]{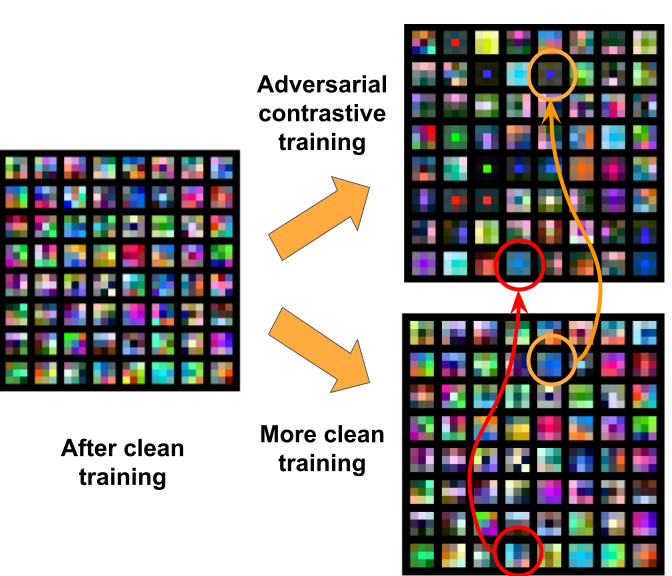}
    \caption{Learned features in the input convolutional layer with and without the adversarial contrastive pre-training. The training is performed on CIFAR-100 dataset. Parameters in each filter are normalized to [0,1] separately. }
    \label{fig:conv1_pretrained_cifar100_contrastive}
\end{figure}

Further, Figure \ref{fig:conv1_pretrained_change} illustrates how the learned features evolve during the contrastive pre-training. 
After 30 epochs of adversarial training, the features show the same purification effects as in supervised learning. This purifying process continues throughout the adversarial training. 

\begin{figure}[!ht]
    \centering
    \includegraphics[width=0.6\textwidth]{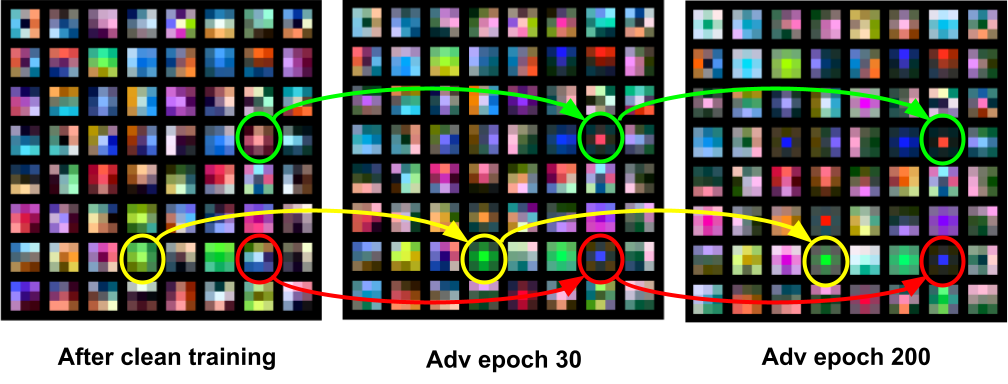}
    \caption{Changes of learned features in the input convolutional layer over adversarial contrastive training on CIFAR-10. Parameters in each filter are normalized to [0,1] separately. }
    \label{fig:conv1_pretrained_change}

\end{figure}

\subsection{Experiment on Resnet-18 with a different input layer}

To better visualize the purification effect in the learned features, we repeat our real data test with an input convolutional layer with a larger kernel size. Specifically, the input layer in this test has Kernel Size = 7, Stride=2, Padding=3. All other layers used the same configuration. We pre-train this modified network using clean training and then adversarial supervised learning on CIFAR-10 dataset. Then we fine-tune the downstream task on CIFAR-10 dataset. 

\begin{table}[ht]
\centering
  \begin{tabular}{lccrr}
  \hline Pre-train Data & Pre-train & Downstream  & Acc & Robust \\ 
  \hline
  \multirow{3}{*}{CIFAR-10} & Clean & Clean  & {0.888} & 0.024 \\ 
   & Adv Sup& Clean  & {0.783} & {0.363} \\
   & Adv Sup& Adv Sup  & 0.789 & 0.401 \\
   \hline
\end{tabular}
\caption{Adversarial robustness and accuracy in CIFAR-10 downstream task. In this test, the input convolutional layer is changed to have Kernel Size = 7, Stride=2, Padding=3.} 
\label{tab:advsup_pretrain_kernel7}

\end{table}

\begin{figure}[!ht]
    \centering
    \includegraphics[width=0.6\textwidth]{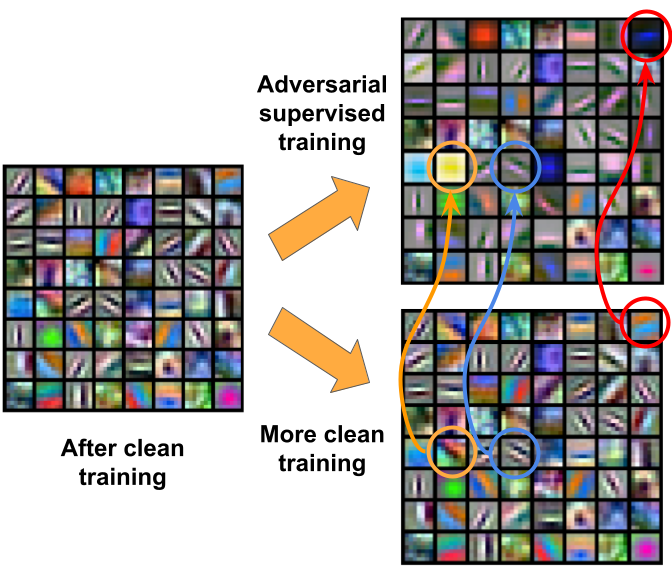}
    \caption{Learned features in the input convolutional layer with and without the adversarial contrastive pre-training. The training is performed on CIFAR-100 dataset. Parameters in each filter are normalized to [0,1] separately. }
    \label{fig:conv1_pretrained_cifar100_kernel7}

\end{figure}

Table \ref{tab:advsup_pretrain_kernel7} shows the standard and robust accuracy. We observe the same inherited robustness in the downstream tasks. Figure \ref{fig:conv1_pretrained_cifar100_kernel7} shows the learned features in the filter. In addition to the reduction of the number of colors, the shapes of features are also simplified, often from multiple parallel lines to one single line, demonstrating purification effects (see blocks marked with blue circles, in which a feature with 4 lines becomes 2 lines.) 

%




\subsection{Effect of Augmentation on Inherited Robustness}\label{exp:aug}

Our primary experiments employ crop\&resize and color distortion augmentations, as highlighted in \cite{wen2021toward}, to enhance feature learning. We evaluate their impact on downstream robustness by comparing test outcomes both with and without these augmentations. As demonstrated in Table~\ref{tab:contratNoAug}, although omitting augmentations diminishes robustness, their presence alone does not substantially improve it. This underscores the pivotal role of adversarial training in achieving robustness.

\begin{table}[ht]
\centering
  \begin{tabular}{lccrr}
  \hline Pre-train Data & Pre-train  & Augmentation & Acc & Robust \\ 
  \hline
  \multirow{5}{*}{CIFAR-10}  & Clean & \checkmark & {0.955} & 0.001 \\ 
   & {Adv Contra} & \checkmark & {0.831}  & {0.393} \\
   & Adv Contra&    $\times$ & 0.822 & 0.389\\
   & {Clean Contra} & \checkmark & 0.897  & 0.004 \\
   & Clean Contra &   $\times$ & 0.889 & 0.001\\
  \hline
  \multirow{5}{*}{CIFAR-100} & Clean  & \checkmark  & {0.786} & 0.000 \\ 
   & {Adv Contra}   & \checkmark  & {0.749} & 0.185 \\
   & Adv Contra&    $\times$ & 0.741 & 0.167 \\
   &  {Clean Contra}  & \checkmark  & 0.801 & 0.005 \\
   & Clean Contra &   $\times$ & 0.797 & 0.000 \\
   \hline
\end{tabular}
\caption{Downstream Task Robustness on Different Augmentation Settings.}
\label{tab:contratNoAug}
\end{table}
\section{Simulation Studies}\label{sec:appendix:simulation}

\subsection{Controlling Feature Purification}
\paragraph{Neural network for supervised learning}
To control the average number of features $m$ in each hidden node, we  
\begin{itemize}
    \item Calculate the number of times each coordinate of $X$ appears: $Hm/d$.
    \item For each coordinate of $X$, we randomly pick $H*m/d$ hidden nodes out of the total $H$ hidden nodes, and take the corresponding elements in $U$ as $d/(Hm)$.
    \item Transform $U$ to $W$ via $W=M^{\top}U $.
    \item Take $b$ as $(\zeta(\log d)/\sqrt{d})(d\sqrt{m}/H)$, where $\zeta(\log d)/\sqrt{d}$ is a probability bound to screen out $\xi$, and $d\sqrt{m}/H$ is the adjustment based on the strength of the features.
\end{itemize}

\paragraph{Neural network for contrastive learning}
To control the average number of features $m$ in contrastive learning, we
\begin{itemize}
    \item Follow the above procedure the generate the hidden layer.
    \item Calculate $A$ as the pseudo inverse of $W$, and take $A=\sqrt{5}A$. 
\end{itemize}

\subsection{Detailed Numbers for Figure  \ref{fig:contra}}

We list all the exact numbers (both average and the corresponding standard error) of Figure \ref{fig:contra} in Table \ref{tab:contra_avg}, \ref{tab:contra_std}, and \ref{tab:alpha}.


\begin{table}[!ht]
\centering
\begin{tabular}{l|llll}
\hline\# Features & Clean Loss, Similar  & Adv Loss, Similar  & Clean Loss, Dissimilar  & Adv Loss, Dissimilar \\\hline
1  & 0.05 & 0.08 & 73.98 & 82.86 \\
2  & 0.05 & 0.08 & 74.11 & 87.87 \\
5  & 0.05 & 0.08 & 74.06 & 91.17 \\
10 & 0.05 & 0.08 & 74.17 & 98.69\\\hline
\end{tabular}
\caption{ The exact average loss ($\times100$) corresponding to Figure \ref{fig:contra}.}\label{tab:contra_avg}
\end{table}

\begin{table}[!ht]
\centering
\begin{tabular}{l|llll}
\hline\# Features & Clean Std, Similar  & Adv Std, Similar  & Clean Std, Dissimilar  & Adv Std, Dissimilar \\\hline
1  & 0.02 & 0.03 & 1.98 & 2.13 \\
2  & 0.02 & 0.03 & 1.92 & 2.04 \\
5  & 0.02 & 0.03 & 1.80 & 1.92 \\
10 & 0.02 & 0.03 & 2.00 & 2.64\\\hline
\end{tabular}
\caption{ The exact standard error ($\times100$) corresponding to Figure \ref{fig:contra}.}\label{tab:contra_std}
\end{table}

\begin{table}[!ht]
\centering
\begin{tabular}{l|llll}
\hline\# Features & Average $\gamma_1$ & Average $\gamma_2$ & Std $\gamma_1$ & Std $\gamma_2$\\\hline
1 & 8.86E-04 & 5.39E-06 & 8.56E-05 & 1.81E-06 \\
2 & 1.87E-03 & 7.42E-06 & 1.15E-04 & 2.42E-06 \\
5 & 4.62E-03 & 1.05E-05 & 1.32E-04 & 3.52E-06 \\
10 & 8.87E-03 & 1.51E-05 & 3.32E-04 & 5.52E-06\\\hline
\end{tabular}
\caption{The average and standard error of  $\gamma_1$ and $\gamma_2$. }\label{tab:alpha}
\end{table}

\section{Proof for \texorpdfstring{$\mathcal{L}_2$}{L2} Attack}\label{sec:appendix:proof_l2}
In this section, we present the proofs using $\mathcal{L}_2$ adversarial training for all the theorems and lemmas in Section \ref{sec:intuition}, \ref{sec:pretrain_super}, \ref{sec:contra}, and \ref{sec:downstream}. 
\subsection{Some Lemmas and Probability Bounds}
\begin{lemma}\label{lem:1}
Denote a vector $m\in\mathbb{R}^d$ with $\|m\|_2=1$ and a random vector $\xi\sim N(0,I_d/d)$, then for any $t>1$,
\begin{eqnarray}
P\left( |m^{\top}\xi|>t\sqrt{\frac{1}{d}} \right)\leq \sqrt{\frac{2}{\pi}} \frac{1}{t}\exp(-t^2/2),
\end{eqnarray}
and
\begin{eqnarray}
\mathbb{E}\left[ |m^{\top}\xi|\bigg| |m^{\top}\xi|>t\sqrt{\frac{1}{d}} \right] = \frac{2}{\sqrt{d}} \frac{\phi(t)}{\Phi(t)}.
\end{eqnarray}
\end{lemma}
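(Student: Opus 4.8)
The plan is to reduce both statements to facts about a single standard normal variable and then evaluate explicit one-dimensional integrals. First I would note that since $\|m\|_2 = 1$ and $\xi \sim N(0, I_d/d)$, the scalar $m^{\top}\xi$ is a zero-mean Gaussian with variance $\|m\|_2^2/d = 1/d$. Hence $V := \sqrt{d}\,m^{\top}\xi \sim N(0,1)$, and the event $\{|m^{\top}\xi| > t/\sqrt{d}\}$ coincides exactly with $\{|V| > t\}$. This standardization converts both claims into statements about a standard normal, after which everything is elementary Gaussian calculus.

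For the tail probability, I would use symmetry to write $P(|V|>t) = 2\,(1-\Phi(t))$ and then invoke the Mills ratio bound $1-\Phi(t) \le \phi(t)/t$ for $t>0$, which itself follows from $\int_t^{\infty}\phi(v)\,dv \le \int_t^{\infty}(v/t)\phi(v)\,dv$ since $v/t \ge 1$ on the domain of integration. Substituting $\phi(t) = (2\pi)^{-1/2}\exp(-t^2/2)$ gives $P(|V|>t) \le 2\phi(t)/t = \sqrt{2/\pi}\,t^{-1}\exp(-t^2/2)$, using $2/\sqrt{2\pi} = \sqrt{2/\pi}$, which is the claimed inequality.

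For the conditional expectation, I would exploit that $v\phi(v)$ admits the elementary antiderivative $-\phi(v)$, so that $\mathbb{E}[\,|V|\,\mathbb{1}\{|V|>t\}\,] = 2\int_t^{\infty} v\phi(v)\,dv = 2\phi(t)$; dividing by the tail mass $P(|V|>t)$ and then rescaling by $1/\sqrt{d}$ to pass from $V$ back to $m^{\top}\xi$ yields the stated ratio of $\phi(t)$ to the appropriate tail factor (matching the paper's constant once the normalization convention for $\Phi$ is fixed). There is no genuine obstacle here; the only points requiring care are the variance bookkeeping in the standardization step and consistently tracking the factor of two that arises from the two-sided absolute value, so that the constants in both the inequality and the conditional mean come out correctly.
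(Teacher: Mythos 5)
Your proof is correct and follows essentially the same route as the paper, whose entire argument is to observe that $\sqrt{d}\,m^{\top}\xi\sim N(0,1)$ and apply the standard Gaussian tail bound; you simply supply the Mills-ratio details and the elementary integral $\int_t^{\infty}v\phi(v)\,dv=\phi(t)$. Your caveat about the normalization of $\Phi$ is well taken: the computation gives $d^{-1/2}\,\phi(t)/(1-\Phi(t))$ for the conditional expectation, so the displayed formula $\frac{2}{\sqrt{d}}\frac{\phi(t)}{\Phi(t)}$ only matches if the paper's $\Phi(t)$ is read as the two-sided tail mass $P(|V|>t)=2(1-\Phi(t))$ rather than the standard CDF.
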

\begin{proof}[Proof of Lemma \ref{lem:1}]
Observe that $\sqrt{d}m^{\top}\xi$ follows $N(0,1)$. One can directly bound $|m^{\top}\xi|$ using Gaussian tail bound.
\end{proof}

\subsection{Proof for Section \ref{sec:intuition}}
To prove Lemma \ref{thm:adv_ideal}, we show the following things:
\begin{itemize}
    \item The distribution of $X^{\top}U_h$. (Lemma \ref{lem:lem})
    \item Without attack, whether the hidden nodes are activated as long as corresponding features are non-zero. (Lemma \ref{lem:2})
    \item Further adding attack, whether the nodes are not activated/deactivated additionally by the attack. (Lemma \ref{lem:3})
    \item After proving Lemma \ref{lem:lem}, \ref{lem:2}, and \ref{lem:3}, we finally present the proof of Lemma \ref{thm:adv_ideal}.
\end{itemize}

\begin{lemma}\label{lem:lem}
Consider the case where $\zeta=0$, i.e., $Z$ is a linear transformation of $X$. Denote $m= \sup_h\|U_h\|_0\leq m^*$. Given at least one feature received by the hidden node $h$ is non-zero, for $v=o(\|U_h\|_{\infty}/\sqrt{k})$, the conditional distribution of $X^{\top}U_{h}$ satisfies
\begin{eqnarray*}
&&P\left( |X^{\top}U_{h}|<v\mid X_1=x_1, |U_{1,h}|>0 \right)\\
&=&\begin{cases}
0 & m=1\\
O\left(\frac{vk^{3/2}}{d\|U_h\|}+\left(\frac{k}{d}\right)^2\right) & m=\Theta(1)\\
\Phi\left( \frac{v-U_{1,h}x_1}{\sqrt{\sigma^2\|U_{-1,h}\|^2/d }} \right)+ c_u'\frac{1}{1+\left|\frac{v-U_{1,h}x_1}{\sqrt{\sigma^2\|U_{-1,h}\|^2/d }}\right|^3}\sqrt{\frac{d}{mk}}&m\rightarrow\infty
\end{cases}.
\end{eqnarray*}
As a result,
\begin{eqnarray}
&&P(\exists h=1,\ldots,H,\; s.t. |X^{\top}U_{h}|\in(0,v/\|U_h\|)\text{ and }\exists |X_iU_{i,h}|>0)\label{eqn:bound}\\
&=&\begin{cases}
0 & m=1\\
O\left(vH\frac{k^{5/2}}{d^2}+H\left(\frac{k}{d}\right)^3\right) & m=\Theta(1)\\
O\left(\frac{mkH}{d}\Phi\left( \frac{v\|U_h\|-U_{1,h}/\sqrt{k}}{\sqrt{\sigma^2\|U_{-1,h}\|^2/d }} \right)+ c_u'\frac{H}{1+\left|\frac{v\|U_h\|-U_{1,h}/\sqrt{k}}{\sqrt{\sigma^2\|U_{-1,h}\|^2/d }}\right|^3}\sqrt{\frac{mk}{d}}\right)& m\rightarrow\infty
\end{cases},\nonumber
\end{eqnarray}
and when taking $v$ such that $v/\|U_h\| \gg 1/\sqrt{d}$ and $v=o(1/\sqrt{mk})$, if $H=o(d^2/(k^2m^3))$ when $m\rightarrow\infty$, 
$$P(\exists h=1,\ldots,H,\; s.t. |X^{\top}U_{h}|\in(0,v/\|U_h\|)\text{ and }\exists |X_iU_{i,h}|>0)\rightarrow 0.$$
\end{lemma}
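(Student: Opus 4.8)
The plan is to write $X^{\top}U_h=\sum_{i:\,U_{i,h}\neq0}X_iU_{i,h}$, a sum of at most $m\le m^{*}$ independent, symmetric, mean-zero terms, to relabel the distinguished active coordinate as $1$, and to set $S=\sum_{i\neq1}X_iU_{i,h}$ so that the event of interest becomes $\{|x_1U_{1,h}+S|<v\}$. The single controlling fact is that $v=o(\|U_h\|_\infty/\sqrt k)$ lies below the magnitude $|x_1U_{1,h}|\ge\|U_h\|_\infty/\sqrt k$ contributed by any one active feature (using $|X_i|\ge1/\sqrt k$ on its support, Assumption \ref{assumption:x}(1)); hence the small-sum event forces the random part $S$ to cancel a \emph{macroscopic} quantity rather than merely to be small.

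This immediately settles $m=1$: there is no $S$ and $|X^{\top}U_h|\ge\|U_h\|_\infty/\sqrt k>v$, so the probability is $0$. For $m=\Theta(1)$ I would condition on how many of the remaining $\le m-1$ coordinates are active. No other active coordinate is impossible by the same single-feature lower bound; the contribution of $\ge2$ active coordinates is at most the probability of two simultaneous activations, $O((k/d)^2)$ via $P(X_i\neq0)=\Theta(k/d)$; and for exactly one other active coordinate $j$ I would use anti-concentration. The needed input is that the law of $X_j$ given $X_j\neq0$ concentrates near the support edge $1/\sqrt k$ with conditional density $O(\sqrt k)$, so $X_j$ landing in the width-$\Theta(v/|U_{j,h}|)$ window that cancels $x_1U_{1,h}$ has probability $O(v\sqrt k/\|U_h\|)$; weighting by the activation probability $\Theta(k/d)$ gives the stated $O\!\big(vk^{3/2}/(d\|U_h\|)+(k/d)^2\big)$. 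For $m\to\infty$ I would apply the \emph{non-uniform} Berry--Esseen theorem to $S/\mathrm{sd}$ with $\mathrm{sd}^2=\sigma^2\|U_{-1,h}\|^2/d$ (legitimate since $\mathbb{E}X_i^2=\Theta(1/d)$), which reproduces the $\Phi\!\big((v-U_{1,h}x_1)/\mathrm{sd}\big)$ leading term; the Lyapunov ratio $\sum_{i\neq1}|U_{i,h}|^3\mathbb{E}|X_i|^3/\mathrm{sd}^3$ evaluates to $\Theta(\sqrt{d/(mk)})$ because $\mathbb{E}|X_i|^3=\Theta(1/(d\sqrt k))$ and $|U_{i,h}|=\Theta(\gamma)$, giving precisely the prefactor on the $1/(1+|\cdot|^3)$ correction.

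To reach (\ref{eqn:bound}) I would combine each single-node conditional bound with the probability $\asymp mk/d$ that node $h$ carries an active feature and a union bound over $h=1,\dots,H$, tracking the window size $v/\|U_h\|$ throughout (the dependence on $x_1$ only contributes constants and, in the Gaussian regime, is absorbed by the tail-decaying factor). Finally I would verify the two constraints on $v$: the lower constraint $v/\|U_h\|\gg1/\sqrt d$ ties the window back to the screening scale of the gate parameter in Definition \ref{assumption:nn}, while $v=o(1/\sqrt{mk})$ together with $H=o(d^2/(k^2m^3))$ is the scaling under which the summed bound---including the Berry--Esseen correction---tends to $0$.

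I expect the $m\to\infty$ case to be the main obstacle: one must invoke the non-uniform (not merely uniform) Berry--Esseen bound, get its dependence on the third absolute moment right, and check that the argument $U_{1,h}x_1/\mathrm{sd}$ is typically far enough into the tail that the decaying factor $1/(1+|x|^3)$ makes the union over all $H$ nodes summable. The $m=\Theta(1)$ anti-concentration is secondary but still requires extracting the $O(\sqrt k)$ density bound near the edge $1/\sqrt k$ from Assumption \ref{assumption:x}(1).
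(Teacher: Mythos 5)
Your proposal follows essentially the same route as the paper's proof: the trichotomy over $m$, the impossibility argument for $m=1$, conditioning on the number of additional active coordinates plus an $O(v\sqrt{k}/\|U_h\|)$ anti-concentration step for $m=\Theta(1)$, the non-uniform Berry--Esseen bound with the same Lyapunov-ratio computation $\Theta(\sqrt{d/(mk)})$ for $m\to\infty$, and a union bound over features and nodes to reach (\ref{eqn:bound}). The one point you flag as needing extraction --- a conditional density bound of order $\sqrt{k}$ near the support edge --- is asserted equally tersely in the paper itself, so your treatment is faithful to the intended argument.
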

\begin{proof}
We consider three regimes: (1) $\|U_h\|_{0}=1$, (2) $\|U_h\|_0=m$ for some constant $m$, and (3) $\|U_h\|_0\rightarrow\infty$.

\paragraph{Case 1, $\|U_h\|_{0}=1$} If a node only contains a single feature, then the conditional distribution of $X^{\top}U_h$ given the feature is non-zero is a single value.
\paragraph{Case 2, $\|U_h\|_{0}=m$} If there are $m$ features for some constant $m$, then the probability that all features are zero is $(1-k/d)^m$, and the probability that only one of the features is non-zero is in $O(mk/d)$.

Given at least one feature is non-zero in the node, the probability of two or more features being non-zero is $1-(1-k/d)^{m-1}$, and the probability of exactly two features being non-zero is $(m-1)(1-k/d)^{m-2}(k/d)$. 

When there are two features activated, denoting $\mathcal{X}$ as the set of non-zero features, under Assumption \ref{assumption:nn}, the probability of $|X^{\top}U_h|<v$ for $v=o(\|(U_h)_{\mathcal{X}}\|)$ is in $O(v\sqrt{k}/\|U_h\|)$.

As a result, for $v=o(\|U_h\|)$, since $m$ is a constant, we have
\begin{eqnarray*}
&&P(|X^{\top}U_h|<v\mid\text{At least one feature is non-zero})\\
&\leq&P(\text{ Two features are non-zero }, |X^{\top}U_h|<v\mid\text{At least one feature is non-zero})\\
&&+P(\text{ Three or more features are non-zero}\mid\text{At least one feature is non-zero})\\
&=& O\left(\frac{v\sqrt{k}}{\|U_h\|}(m-1)\frac{k}{d}\left(1-\frac{k}{d}\right)^{m-2}\right)+1-\left(1-\frac{k}{d}\right)^{m-1}-(m-1)\frac{k}{d}\left(1-\frac{k}{d}\right)^{m-2}\\
&=&O\left(\frac{vk^{3/2}}{d\|U_h\|}+\left(\frac{k}{d}\right)^2\right).
\end{eqnarray*}
\paragraph{Case 3, $\|U_h\|_{0}\rightarrow\infty$}
If there are $m\rightarrow\infty$ features and $m=o(d/k)$, then assuming the first coordinate $X_1=x_1$ is nonzero and $U_{1,h} \neq 0$, $U_h^{\top}X$ conditionally approximately follows a normal distribution $N(U_{1,h}x_1, \sigma^2\|U_{-1,h}\|^2/d )$. In this case, denoting $\Phi$ and the probability function of the standard Gaussian distribution, using non-uniform Berry-Esseen bound in \cite{grigor2012upper}, for some universal constant $c_u$,
\begin{eqnarray*}
&&P(X^{\top}U_h<v\mid  X_1=x_1, U_{1,h}x_1>0)\\
&\leq& \Phi\left( \frac{v-U_{1,h}x_1}{\sqrt{\sigma^2\|U_{-1,h}\|^2/d }} \right)+ c_u\frac{\sum \mathbb{E}|X_iU_{i,h}|^3}{\left(1+\left|\frac{v-U_{1,h}x_1}{\sqrt{\sigma^2\|U_{-1,h}\|^2/d }}\right|^3\right)}\frac{1}{( \sum \mathbb{E}|X_iU_{i,h}|^2 )^{3/2}},
\end{eqnarray*}
where
\begin{eqnarray*}
&&\sum_{i>1}\mathbb{E}|X_iU_{i,h}|^2 = \sigma^2\|U_{-1,h}\|^2/d,\\
&&\sum_{i>1}\mathbb{E}|X_iU_{i,h}|^3 = O\left(\sum_{i>1} |U_{i,h}|^3\frac{1}{d\sqrt{k}} \right).
\end{eqnarray*}
Under Assumption \ref{assumption:nn}, we have
$$\sum_{i>1} |U_{i,h}|^3=O\left(m \left(\frac{\|U_{-1,h}\|^2}{m} \right)^{3/2}\right)=O\left(\frac{\|U_{-1,h}\|^3}{\sqrt{m}}\right).$$
As a result, we conclude that for some constant $c_u'>0$,
\begin{eqnarray*}
&&P(X^{\top}U_h<v\mid  X_1=x_1, U_{1,h}x_1>0)\\
&\leq& \Phi\left( \frac{v-U_{1,h}x_1}{\sqrt{\sigma^2\|U_{-1,h}\|^2/d }} \right)+ c_u'\frac{ \|U_{-1,h}\|^3/(d\sqrt{km})}{\left(1+\left|\frac{v-U_{1,h}x_1}{\sqrt{\sigma^2\|U_{-1,h}\|^2/d }}\right|^3\right)}\frac{1}{(\|U_{-1,h}\|^2/d)^{3/2}}\\
&=&\Phi\left( \frac{v-U_{1,h}x_1}{\sqrt{\sigma^2\|U_{-1,h}\|^2/d }} \right)+ c_u'\frac{1}{\left(1+\left|\frac{v-U_{1,h}x_1}{\sqrt{\sigma^2\|U_{-1,h}\|^2/d }}\right|^3\right)}\sqrt{\frac{d}{mk}}.
\end{eqnarray*}
We use Berry-Esseen bound rather than Hoeffding/Berstein bounds because the latter ones involve the range $M=\sup |X^{\top}U_h|-\inf |X^{\top}U_h|$ which is too broad.

The final (\ref{eqn:bound}) is a union bound taken for the $m$ features in each node for all $H$ hidden nodes.
\end{proof}
After discussing the distribution of $X^{\top}U_h$, we further add the noise $\xi$ (Lemma \ref{lem:2}) and the attack (Lemma \ref{lem:3}) into the model.
\begin{lemma}\label{lem:2}
For any $v\geq 1/\sqrt{d}$,
\begin{eqnarray*}
P\left(\sup_h |\xi^{\top}W_h|/\|W_h\| >v \right)=O\left( \frac{H\sqrt{d}}{v}\exp(-v^2 d/(2\zeta^2)) \right).
\end{eqnarray*}
When taking $v\gg \sqrt{(\log d)/d}$ and $H=\text{poly}(d)$, the probability bound goes to zero.

Furthermore, under the conditions of Lemma \ref{lem:lem}, when taking $b_h$ such that $b_h\gg \sqrt{(\log d)/d} \|U_h\|$ and $b_h=o(1/\sqrt{k\sup_h\|U_h\|_0})$,
$$P(\exists h=1,\ldots, H,\; s.t. \text{ Node }h\text{ is activated by }\xi\text{ or deactivated})\rightarrow 0.$$
\end{lemma}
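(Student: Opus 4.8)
The plan is to establish the two displays in turn: the first is a uniform Gaussian tail bound obtained by standardization and a union bound, and the second follows by combining it with Lemma~\ref{lem:lem}.

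For the first display I would standardize the noise projections. Since $\xi\sim N(0,\zeta^2 I_d/d)$ and $M$ is unitary so that $\|W_h\|=\|U_h\|$, each $\xi^\top W_h/\|W_h\|$ is centered Gaussian with variance $\zeta^2/d$; equivalently $\sqrt d\,\xi^\top W_h/(\zeta\|W_h\|)\sim N(0,1)$. Applying the tail estimate of Lemma~\ref{lem:1} to the unit vector $W_h/\|W_h\|$ (after rescaling the noise by $\zeta$) bounds $P(|\xi^\top W_h|/\|W_h\|>v)$ by a constant multiple of $(\zeta/(v\sqrt d))\exp(-v^2 d/(2\zeta^2))$, and a union bound over $h=1,\dots,H$ introduces the factor $H$, giving the stated prefactor (polynomial in $H,v,d$) times $\exp(-v^2 d/(2\zeta^2))$. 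The vanishing claim is then immediate: writing $v^2 d=c\log d$ with $c\to\infty$ makes $\exp(-v^2 d/(2\zeta^2))=d^{-c/(2\zeta^2)}$ decay faster than any fixed power of $d$, so it dominates both $H=\mathrm{poly}(d)$ and the polynomial prefactor.

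For the second display I would split the bad event ``node $h$ is activated by $\xi$ or deactivated'' into two failure modes and control each by a union bound over the at most $H$ nodes. \textbf{False activation.} If every feature reaching node $h$ is zero, then the clean signal $X^\top U_h$ vanishes and the node fires only if $|\xi^\top W_h|\geq b_h$; choosing $v=\min_h b_h/\|U_h\|$, which by hypothesis is $\gg\sqrt{(\log d)/d}$, the first display shows that with probability tending to one no zero-feature node is activated. \textbf{False deactivation.} If some feature in node $h$ is non-zero, Lemma~\ref{lem:lem} (applied with the interval upper endpoint $v/\|U_h\|$ chosen just above $b_h$, which is admissible because $b_h=o(1/\sqrt{k\sup_h\|U_h\|_0})$ lies below the scale $1/\sqrt{mk}$ appearing there) guarantees, uniformly over $h$, that $|X^\top U_h|\geq \tau$ for a threshold $\tau$ strictly exceeding $b_h$. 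The first display simultaneously gives $|\xi^\top W_h|=o(b_h)$ uniformly, so $|X^\top U_h+\xi^\top W_h|\geq |X^\top U_h|-|\xi^\top W_h|\geq \tau-o(b_h)\geq b_h$, and the node remains activated on the intersection of the two high-probability events.

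The main obstacle is the reconciliation of scales that makes both directions simultaneously feasible. On one side the union bound over $H=\mathrm{poly}(d)$ nodes forces the intercept a $\sqrt{\log d}$ factor above the per-node noise scale $\|U_h\|/\sqrt d$, i.e.\ $b_h\gg\sqrt{(\log d)/d}\,\|U_h\|$; on the other side $b_h$ must stay strictly below the minimal genuine signal, of order $1/\sqrt{mk}$, so that active features survive screening, i.e.\ $b_h=o(1/\sqrt{k\sup_h\|U_h\|_0})$. I would verify that this window is non-empty under the standing sparsity assumptions (with $\|U_h\|=\Theta(\gamma\sqrt m)$ and $\gamma\sqrt m=\Theta(1)$, compatibility reduces to $mk\log d=o(d)$, consistent with $m^*=o(d/k)$ and $H=\mathrm{poly}(d)$), and crucially that the interval endpoint $v/\|U_h\|$ in Lemma~\ref{lem:lem} can be placed strictly between $b_h+\sup_h|\xi^\top W_h|$ and $o(1/\sqrt{mk})$; this is possible precisely because $b_h$ and the noise are both $o(1/\sqrt{mk})$. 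Once these inclusions are checked, the result is a routine intersection of the two union bounds.
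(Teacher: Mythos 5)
Your proposal follows essentially the same route as the paper's proof: the first display via standardization of $\xi^{\top}W_h/\|W_h\|\sim N(0,\zeta^2/d)$, the Gaussian tail bound of Lemma \ref{lem:1}, and a union bound over the $H$ nodes; and the second display by splitting into false activation (controlled by the first display with $v$ chosen between $\sqrt{(\log d)/d}$ and $\inf_h b_h/\|U_h\|$) and false deactivation (controlled by the anti-concentration bound of Lemma \ref{lem:lem} applied at threshold comparable to $2b_h$, after absorbing $|\xi^{\top}W_h|=o(b_h)$). The only cosmetic differences are that the paper conditions explicitly on the number of non-zero features in a node while you invoke Lemma \ref{lem:lem} directly, and that you add an explicit check that the window for $b_h$ is non-empty; neither changes the argument.
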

\begin{proof}[Proof of Lemma \ref{lem:2}]
Since $\xi\sim N(\textbf{0},\zeta^2 I_d/d)$, we have $\zeta^{\top}W_h/\|W_h\|\sim N(0,\zeta^2/d)$. From Lemma \ref{lem:1}, we obtain
\begin{eqnarray}\label{eqn:lem:2:bound}
P\left(\sup_h |\xi^{\top}W_h|/\|W_h\| >v \right)\leq HP(|\xi^{\top}W_h|/\|W_h\| >v )=O\left( \frac{H\sqrt{d}}{v}\exp(-v^2 d/(2\zeta^2)) \right),
\end{eqnarray}
which provides the tail distribution of $|\xi^{\top}W_h|/\|W_h\|$. When taking $v\gg\sqrt{(\log d)/d}$, the above probability goes to zero, i.e., with probability tending to 1, no extra nodes are activated due to $\xi$.

Furthermore, for the nodes which are activated by non-zero features, we have
\begin{eqnarray*}
P(\exists h=1,\ldots,H,\; s.t.,\;\text{Node }h\text{ is deactivated})=O(H P(\text{Node }h\text{ is deactivated})),
\end{eqnarray*}
and
\begin{eqnarray*}
&& P(\text{Node }h\text{ is deactivated})\\
&\leq& P(\text{Node }h\text{ is deactivated}\mid \text{One feature in node }h\text{ is non-zero})\\
&&\qquad\qquad\qquad\qquad\qquad\qquad\qquad\qquad\qquad\times P(\text{One feature in node }h\text{ is non-zero})\\
&&+P(\text{Two or more features in node }h\text{ is non-zero}),
\end{eqnarray*}
where taking $m=\|U_h\|_0$,
\begin{eqnarray*}
&&P(\text{Node }h\text{ is deactivated}\mid \text{One feature in node }h\text{ is non-zero})\\
&=&O\left( m P(|X^{\top}U_h|< b_h+|\xi^{\top}U_h|\mid X_1=x_1, U_{1,h}x_1>0) \right)\\
&=&O\left( m P(|X^{\top}U_h|< 2 b_h\mid X_1=x_1, U_{1,h}x_1>0) \right)+O\left( m P(|\xi^{\top}U_h|>b_h) \right).
\end{eqnarray*}
Therefore, taking $b_h\gg \sqrt{(\log d)/d}\|U_h\|$ and $b_h=o(\|U_h\|/\sqrt{k\sup_h\|U_h\|_0})$, when taking proper $H$, one can show that 
\begin{eqnarray*}
&&P(\exists h=1,\ldots, H,\; s.t. \text{ Node }h\text{ is activated by }\xi\text{ or deactivated})\\
&=&P(\exists h=1,\ldots,H,\; s.t.,\;\text{Node }h\text{ is activated by }\xi)+P(\exists h=1,\ldots,H,\; s.t.,\;\text{Node }h\text{ is deactivated})\\
&\leq& O\left( \frac{H\sqrt{d}\|U_h\|}{b_h}\exp(-b_h^2 d/(2\zeta^2\|U_h\|^2)) \right)\\
&&+O\left( Hm P(|X^{\top}U_h|< 2 b_h\mid X_1=x_1, U_{1,h}x_1>0) \right)+ O\left( \frac{Hm\sqrt{d}\|U_h\|}{b_h}\exp(-b_h^2 d/(2\zeta^2\|U_h\|^2)) \right)\\
&&+O(H P(\text{Two or more features in node }h\text{ is non-zero in node }h)),
\end{eqnarray*}
where all the four terms go to zero based on Lemma \ref{lem:lem} and (\ref{eqn:lem:2:bound}).
\end{proof}

\begin{lemma}\label{lem:3}
Under the conditions of Lemma \ref{lem:lem}, when taking $b_h$ such that $b_h\gg \sqrt{(\log d)/d} \|U_h\|$ and $b_h=o(1/\sqrt{k\sup_h\|U_h\|_0})$, and $\epsilon=o(\inf_h b_h/\|U_h\|)$ for $\mathcal{L}_{2}$ attack,
$$P(\exists h=1,\ldots, H,\; s.t. \text{ Node }h\text{ is activated by }\xi\text{ and the attack or deactivated})\rightarrow 0.$$
For $\mathcal{L}_{\infty}$ attack, when $\epsilon=o( \inf_h b_h/\|U_h\|/\sqrt{\|U_h\|_0} )$, the above inequality also holds.
\end{lemma}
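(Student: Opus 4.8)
The plan is to treat Lemma~\ref{lem:3} as a perturbation of Lemma~\ref{lem:2}. The only structural difference between the two events is that the pre-activation of node $h$, which in Lemma~\ref{lem:2} equals $z^{\top}W_h = x^{\top}U_h + \xi^{\top}W_h$, now carries the additional summand $\delta^{\top}W_h$. Consequently it suffices to (i) bound $\sup_h|\delta^{\top}W_h|$ by $o(\inf_h b_h)$, and (ii) re-run the union bound of Lemma~\ref{lem:2} with the activation threshold $b_h$ replaced by $b_h(1\pm o(1))$. The point of step (ii) is that Lemma~\ref{lem:2} already leaves a margin of order $b_h$: the noise contributes $|\xi^{\top}W_h|\le v\|U_h\|\ll b_h$ for some $v$ with $\sqrt{(\log d)/d}\ll v\ll b_h/\|U_h\|$, while each genuinely active node satisfies $|x^{\top}U_h|>2b_h$ with probability tending to $1$ by Lemma~\ref{lem:lem}. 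Since the hypotheses on $b_h$ inherited from Lemma~\ref{lem:2} are stable under a $(1\pm o(1))$ rescaling, adding an $o(b_h)$ term to the pre-activation cannot push it across $b_h$ in either direction, and the union bound over $h=1,\dots,H$ still drives the probability to zero. The whole argument thus reduces to controlling $\delta^{\top}W_h$.

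For the $\mathcal{L}_2$ attack this is immediate. Because $\|\delta_2\|_2=\epsilon$ and $W_h=MU_h$ with $M$ unitary, Cauchy--Schwarz gives $|\delta_2^{\top}W_h|\le\epsilon\|W_h\|_2=\epsilon\|U_h\|_2$, and the assumption $\epsilon=o(\inf_h b_h/\|U_h\|)$ makes this $o(b_h)$ uniformly in $h$. This is the entire content of the $\mathcal{L}_2$ case.

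For the $\mathcal{L}_\infty$ attack the crude H\"older estimate $|\delta_\infty^{\top}W_h|\le\epsilon\|W_h\|_1$ is too weak, since $W_h=MU_h$ is dense and $\|W_h\|_1$ is of order $\sqrt{d}\,\|U_h\|_2$; this would force a threshold on $\epsilon$ worse by a factor $\sqrt d$. The refinement is to exploit the explicit attack direction together with the sparsity of $U_h$. Writing $\partial l/\partial z=(\partial l/\partial f)\,MU\,\text{diag}(\mathbb{I}(z^{\top}W,b))\,a$ and setting $q:=\sgn\!\big(MU\,\text{diag}(\mathbb{I}(z^{\top}W,b))\,a\big)\in\{\pm1\}^d$, we have $\delta_\infty^{\top}W_h=\pm\epsilon\,q^{\top}MU_h=\pm\epsilon\,(M^{\top}q)^{\top}U_h$. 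Putting $p:=M^{\top}q$, so that $\|p\|_2=\|q\|_2=\sqrt d$, and restricting to the at most $m:=\|U_h\|_0$ nonzero coordinates of $U_h$ gives $|\delta_\infty^{\top}W_h|\le\epsilon\,\|U_h\|_1\,\|p\|_\infty\le\epsilon\sqrt{m}\,\|U_h\|_2\,\|p\|_\infty$. It then remains to show $\|p\|_\infty=O(1)$ (up to logarithmic factors), i.e. that each $p_i=M_{:,i}^{\top}q$ is a well-concentrated inner product of a unit vector with a sign vector; this follows from Gaussian-type concentration for a generic unitary $M$. The resulting bound $|\delta_\infty^{\top}W_h|=O(\epsilon\sqrt{m}\,\|U_h\|)$ is $o(b_h)$ precisely under the stated $\epsilon=o(\inf_h b_h/\|U_h\|/\sqrt{\|U_h\|_0})$, which is exactly why the $\sqrt{\|U_h\|_0}$ appears.

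The hard part will be the last step of the $\mathcal{L}_\infty$ case, because the sign vector $q$ is itself data-dependent: it is determined through the activation pattern $\text{diag}(\mathbb{I}(z^{\top}W,b))$ by the same $X$ and $\xi$ that enter the pre-activations we are trying to control, so the coordinates of $p=M^{\top}q$ are not independent of the bad event. I would resolve this by first conditioning on the high-probability activation pattern supplied by Lemma~\ref{lem:2}, which fixes $q$ outside a negligible exception set, and then invoking the concentration of $M^{\top}q$ for the now-frozen $q$. Once $\sup_h|\delta^{\top}W_h|=o(b_h)$ is established in both norms, the concluding union bound over $h=1,\dots,H$ is identical to that in Lemma~\ref{lem:2}.
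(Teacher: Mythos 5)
For the $\mathcal{L}_2$ case, which is the lemma's main claim, your argument is exactly the paper's: by Cauchy--Schwarz the attack shifts the pre-activation of node $h$ by at most $\epsilon\|W_h\|_2=\epsilon\|U_h\|_2=o(b_h)$ under the hypothesis $\epsilon=o(\inf_h b_h/\|U_h\|)$, and the union bound of Lemma~\ref{lem:2} is then rerun with the threshold perturbed by this $o(b_h)$ term (the paper writes this as replacing $v$ by $v-\epsilon$ in the Gaussian tail for inactive nodes and as the event $|X^{\top}U_h|<b_h+\epsilon\|U_h\|+|\xi^{\top}U_h|\le 2b_h$ for active ones). Nothing more is needed there.

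For the $\mathcal{L}_\infty$ addendum you take a genuinely different, and more honest, route than the paper --- but it contains the one real gap in your proposal. The paper's proof is the single line $|\delta_\infty^{\top}U_h|=\epsilon\|U_h\|_1=O(\epsilon\sqrt{m}\|U_h\|)$, i.e.\ H\"older applied \emph{in feature space}; this silently assumes $\|M^{\top}\delta_\infty\|_\infty\le\epsilon$, which holds for $M=I$ but not for a general unitary $M$, and is consistent with the restrictions on $M$ the authors themselves impose in their Appendix discussion of the $\mathcal{L}_\infty$ attack. You correctly diagnose that in the observed coordinates the crude bound $\epsilon\|W_h\|_1$ loses a factor $\sqrt{d}$, and you try to recover the paper's bound by showing $\|M^{\top}q\|_\infty=O(1)$ for the sign vector $q$. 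That step cannot be established under the stated assumptions: Assumption~\ref{assumption:x} only requires $M$ to be unitary, and for a worst-case unitary a column of $M$ can be proportional to a $\pm 1$ vector, making $\|M^{\top}q\|_\infty$ as large as $\sqrt{d}$. Your appeal to ``Gaussian-type concentration for a generic unitary $M$'' therefore needs an additional hypothesis (randomness of $M$, or incoherence of its columns against sign vectors) that the paper does not grant, on top of the conditioning argument you sketch to decouple the data-dependent $q$ from the bad event. In short: your $\mathcal{L}_2$ argument is complete and identical to the paper's; your $\mathcal{L}_\infty$ argument is sharper in intent than the paper's but rests on an unproven concentration claim --- though the paper's own one-liner hides exactly the same difficulty.
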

\begin{proof}[Proof of Lemma \ref{lem:3}]
Since $\epsilon=o(\inf_h b_h/\|U_h\|)$, if all the features in node $h$ are zero, then we have
\begin{eqnarray*}
P\left(\sup_h |\xi^{\top}W_h|/\|W_h\|+\epsilon >v \right)=O\left( \frac{H\sqrt{d}}{v-\epsilon}\exp(-(v-\epsilon)^2 d/(2\zeta^2)) \right)=O\left( \frac{H\sqrt{d}}{v}\exp(-v^2 d/(2\zeta^2)) \right),
\end{eqnarray*}
Thus with probability tending to 1, all the nodes will not be additionally activated by $\xi$ and the attack.

Furthermore, if a node $h$ is activated by non-zero features, then we have the following decomposition:
\begin{eqnarray*}
&& P(\text{Node }h\text{ is deactivated})\\
&\leq& P(\text{Node }h\text{ is deactivated}\mid \text{One feature in node }h\text{ is non-zero})\\
&&\qquad\qquad\qquad\qquad\qquad\qquad\qquad\qquad\qquad\times P(\text{One feature in node }h\text{ is non-zero})\\
&&+P(\text{Two or more features in node }h\text{ is non-zero}),
\end{eqnarray*}
where taking $m=\|U_h\|_0$,
\begin{eqnarray*}
&&P(\text{Node }h\text{ is deactivated}\mid \text{One feature in node }h\text{ is non-zero})\\
&=&O\left( m P(|X^{\top}U_h|< b_h+\epsilon\|U_h\|+|\xi^{\top}U_h|\mid X_1=x_1, U_{1,h}x_1>0) \right)\\
&=&O\left( m P(|X^{\top}U_h|< 2 b_h\mid X_1=x_1, U_{1,h}x_1>0) \right)\\
&&+O\left( m P(\epsilon\|U_h\|+|\xi^{\top}U_h|>b_h) \right),
\end{eqnarray*}
and the final steps are the same as in Lemma \ref{lem:2}.

For $\mathcal{L}_{\infty}$ attack, $|\delta_{\infty}^{\top}U_h|=\epsilon\|U_h\|_1=O(\epsilon\|U_h\|\sqrt{m}) $, and one can replace the $\epsilon$ in the derivations of $\mathcal{L}_2$ attack case with $\epsilon\sqrt{m}$ to go through the proof.
\end{proof}
\begin{proof}[Proof of Lemma \ref{thm:adv_ideal}]
For $\mathcal{L}_2$ attack, since we consider the FGM attack, we first calculate the gradient of $l$ w.r.t. $z$. Denote $f_{W,b}$ as the non-linear neural network, then 
\begin{eqnarray*}
\frac{\partial}{\partial z} l_0(z,y;W,b) = \frac{\partial l_0}{\partial f} \frac{\partial f}{\partial z},
\end{eqnarray*}
where
\begin{eqnarray*}
\frac{\partial f}{\partial z} = \frac{\partial}{\partial z} \sigma( z^{\top}W,b )a= W\text{diag}(\mathbb{I}(z^{\top}W,b))a=M U\text{diag}(\mathbb{I}(x^{\top}U+\xi^{\top} W,b))a.
\end{eqnarray*}
As a result, the attack becomes
\begin{eqnarray*}
\delta_2(z,y,f_{W,b},l) = \epsilon\sgn\left(\frac{\partial l}{\partial f}\right)\frac{M U\text{diag}(\mathbb{I}(x^{\top}U+\xi^{\top} W,b))a}{\|M U\text{diag}(\mathbb{I}(x^{\top}U+\xi^{\top} W,b))a\|},
\end{eqnarray*}
and
\begin{eqnarray*}
\delta_2(z,y,f_{W,b},l) = \epsilon\sgn\left(\frac{\partial l}{\partial f}\right)\text{sign}{\left[M U\text{diag}(\mathbb{I}(x^{\top}U+\xi^{\top} W,b))a\right]}.
\end{eqnarray*}
Using $\mathcal{L}_2$ attack, the attacked fitted value becomes
\begin{eqnarray*}
f_{W,b}(z+\delta_2)&=& \sigma\left(\left(z+\epsilon\sgn\left(\frac{\partial l}{\partial f}\right)\frac{M U\text{diag}(\mathbb{I}(x^{\top}U+\xi^{\top} W,b))a}{\|M U\text{diag}(\mathbb{I}(x^{\top}U+\xi^{\top} W,b))a\|}\right)^{\top}W,b\right)a\\
&=& \sigma\left(\left(x+\xi^{\top} M+\epsilon\sgn\left(\frac{\partial l}{\partial f}\right)\frac{ U\text{diag}(\mathbb{I}(x^{\top}U+\xi^{\top} W,b))a}{\| U\text{diag}(\mathbb{I}(x^{\top}U+\xi^{\top} W,b))a\|}\right)^{\top}U,b\right)a\\
&=&\left(x+\xi^{\top} M+\epsilon\sgn\left(\frac{\partial l}{\partial f}\right)\frac{ U\text{diag}(\mathbb{I}(x^{\top}U+\xi^{\top} M,b))a}{\| U\text{diag}(\mathbb{I}(x^{\top}U+\xi^{\top} W,b))a\|}\right)^{\top}U\text{diag}(\mathbb{I}( (z+\delta_2)^{\top}W ,b))a.
\end{eqnarray*}
In order to cancel some terms in the above representation, we need that $\mathbb{I}( (z+\delta_2)^{\top}W ,b)=\mathbb{I}( z^{\top}W ,b)$ in probability when $\epsilon=o(b^*)$, which has been shown in Lemma \ref{lem:3}.

As a result,  with probability tending to 1,
\begin{eqnarray*}
f_{W,b}(z+\delta_2)=
f_{W,b}(z)+\epsilon\sgn\left(\frac{\partial l}{\partial f}\right)\| U\text{diag}(\mathbb{I}(x^{\top}U+\xi^{\top} W,b))a\|.
\end{eqnarray*}

Thus we have
\begin{eqnarray*}
\Delta_{W,b}(z,y)&=&l_\epsilon(z,y; W,b)-l_0(z,y; W,b)\\
&=&l_0(z+\delta_2,y; W,b)-l_0(z,y; W,b)\\
&=&\frac{\partial l_0}{\partial f_{w,b}}(f_{W,b}(z+\delta_2)-
f_{W,b}(z)) + O(\epsilon^2) \\
&=&
\epsilon\frac{\partial l}{\partial f_{W,b}}\left\|a^{\top}\text{diag}(\mathbb{I}(W^{\top}z ,b)) W^{\top}\right\|_2+o,
\end{eqnarray*}
where $o$ represents the remainder term.

Assume the first coordinate of $x$ is non-zero. Since with probability tending to 1 (Lemma \ref{lem:2}), all the hidden nodes receiving $x_1$ are activated, we have $$a^{\top}\text{diag}(\mathbb{I}(U^{\top}x+\xi^{\top} W,b))U_{1,:} = a^{\top}U_{1,:}=\theta_1.$$

Assume the second coordinate $x_2$ of $x$ is zero, when $(W,b)\in\mathcal{M}$, all non-zero elements in $U_{2,:}$ have the same sign as $\theta_2$, and
$$0\leq |a^{\top}\text{diag}(\mathbb{I}(U^{\top}x+\xi^{\top} W,b))U_{2,:}| \leq |\theta_2|,$$
and the left/right equation holds if every node containing $x_2$ is not/is activated.

As a result, we conclude that, with probability tending to 1,
$$\|\theta_{\mathcal{X}}\|_2\leq \|a^{\top}\text{diag}(\mathbb{I}(U^{\top}X+\xi^{\top} W,b))U^{\top}\|_2\leq\|\theta\|_2.$$


\end{proof}

\subsection{Proof for Supervised Pre-training}

Denote
\begin{eqnarray}\label{eqn:psi}
\psi=\frac{Hm^3k^{3}\log^2 k}{d^2}+\sqrt{\frac{k}{d}}.
\end{eqnarray}

Before we start the proof of the theorems, we provide an additional lemma to characterize $X^{\top}\theta_0$. Different from the results in Section \ref{sec:intuition}, since we directly work on the risk, rather than probability bounds, we need to know the distribution of $X^{\top}\theta_0$.
\begin{lemma}\label{lem:4}
Under Assumption \ref{assumption:x}, 
\begin{eqnarray*}
P(|X^{\top}\theta_0-\mathbb{E}(X^{\top}\theta_0)|>v)=O\left(\Phi(-v)\right)+O\left(\frac{1}{\sqrt{k}(1+v^3)} \right).
\end{eqnarray*}
\end{lemma}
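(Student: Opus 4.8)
The plan is to recognize $X^{\top}\theta_0 = \sum_{i=1}^d (\theta_0)_i X_i$ as a sum of $d$ independent random variables and to control its tail with a non-uniform Berry--Esseen bound, exactly the tool already invoked in Lemma~\ref{lem:lem} (from \cite{grigor2012upper}). Set $Y_i = (\theta_0)_i X_i$. Since the $X_i$ are i.i.d.\ and symmetric, each $Y_i$ has mean zero, so $\mathbb{E}(X^{\top}\theta_0)=0$ and the summands are centered; this is essentially the only place symmetry enters.

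First I would record the three moment orders that drive the bound. The variance is $\sigma^2 := \mathrm{Var}(X^{\top}\theta_0) = \sum_{i} (\theta_0)_i^2 \mathbb{E}X_i^2 = \Theta(1)$, using $(\theta_0)_i=\Theta(1)$ and $\mathbb{E}X_i^2=\Theta(1/d)$ summed over $d$ coordinates. The sum of third absolute moments is $\sum_i \mathbb{E}|Y_i|^3 = \sum_i |(\theta_0)_i|^3 \mathbb{E}|X_i|^3 = \Theta(1/\sqrt{k})$, using $\mathbb{E}|X_i|^3 = \Theta(1/(d\sqrt{k}))$. Hence the Lyapunov ratio $\sum_i \mathbb{E}|Y_i|^3 / \sigma^3 = \Theta(1/\sqrt{k})$, which is precisely the origin of the $1/\sqrt{k}$ term; this is exactly why Assumption~\ref{assumption:x} imposes a third-moment constraint.

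Then I would apply the non-uniform Berry--Esseen inequality to the standardized sum $S = (X^{\top}\theta_0)/\sigma$, which yields $(1+|x|^3)\,|P(S\le x)-\Phi(x)| = O(1/\sqrt{k})$ uniformly in $x$. Reading this off at $\pm x$ gives $P(S> x)\le \Phi(-x) + O(1/(\sqrt{k}(1+x^3)))$ and the mirror bound for $P(S<-x)$ via $\Phi(-x)=1-\Phi(x)$; adding the two and substituting $x = v/\sigma$ produces $P(|X^{\top}\theta_0|>v) = O(\Phi(-v/\sigma)) + O(1/(\sqrt{k}(1+v^3)))$, the claimed estimate.

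The only delicate point is the passage from $\Phi(-v/\sigma)$ to the stated $O(\Phi(-v))$: since $\sigma=\Theta(1)$ is only bounded above and below rather than equal to $1$, a crude $\Phi(-v/\sigma)\le O(\Phi(-v))$ fails for large $v$ when $\sigma>1$. I would handle this either by absorbing the $\Theta(1)$ standard deviation into the Gaussian-tail term (the reading consistent with how the lemma is later used, where only moderate $v$ matter), or by tracking the exact constant $\sigma$ and restricting attention to the range of $v$ relevant in the downstream risk computations. Apart from this normalization bookkeeping, the argument is a direct moment computation followed by an off-the-shelf non-uniform Berry--Esseen application, so no genuinely hard step arises.
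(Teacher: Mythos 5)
Your proposal is correct and follows essentially the same route as the paper: a non-uniform Berry--Esseen bound applied to the centered sum $X^{\top}\theta_0$, with the variance of order $\Theta(1)$ and the sum of third absolute moments of order $\Theta(1/\sqrt{k})$ producing exactly the two terms in the stated bound. The normalization issue you flag (the paper writes $\Phi(-v)$ where the argument should strictly be $v$ divided by the $\Theta(1)$ standard deviation) is real but is glossed over in the paper's proof as well, and is harmless in the range of $v$ where the lemma is used.
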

\begin{proof}[Proof of Lemma \ref{lem:4}]
Using Berry-Esseen bound, we have
\begin{eqnarray*}
&&P(X^{\top}\theta_0-\mathbb{E}(X^{\top}\theta_0)<v)\\
&\leq& \Phi\left( \frac{v-0}{\sqrt{\sigma^2\|\theta_0\|^2/d }} \right)+ c_u\frac{\sum \mathbb{E}|X_i\theta_i|^3}{\left(1+\left|\frac{v}{\sqrt{\sigma^2\|\theta_0\|^2/d }}\right|^3\right)}\frac{1}{( \sigma^2\|\theta_0\|^2/d )^{3/2}},
\end{eqnarray*}
where
\begin{eqnarray*}
\sum \mathbb{E}|X_i\theta_i|^3=\Theta\left(d\frac{k}{d}\frac{1}{k^{3/2}}\right)=\Theta\left(\frac{1}{\sqrt{k}}\right).
\end{eqnarray*}
As a result, for $v<0$,
\begin{eqnarray*}
P(X^{\top}\theta_0-\mathbb{E}(X^{\top}\theta_0)<v)=O\left(\Phi(v)\right)+O\left(\frac{1}{\sqrt{k}(1+v^3)} \right).
\end{eqnarray*}
Similarly, we also have for $v>0$,
\begin{eqnarray*}
P(X^{\top}\theta_0-\mathbb{E}(X^{\top}\theta_0)>v)=O\left(\Phi(-v)\right)+O\left(\frac{1}{\sqrt{k}(1+v^3)} \right),
\end{eqnarray*}
and merging the two sides we have
\begin{eqnarray*}
P(|X^{\top}\theta_0-\mathbb{E}(X^{\top}\theta_0)|>v)=O\left(\Phi(-v)\right)+O\left(\frac{1}{\sqrt{k}(1+v^3)} \right).
\end{eqnarray*}
\end{proof}

\begin{proof}[Proof of Theorem \ref{thm:clean}]

The proof idea is to design some $(W,b)$ such that $(W,b)\in\mathcal{M}$ with good clean performance but poor adversarial performance.

There are two claims for clean performance in Theorem \ref{thm:clean}. In the proof, we merge the two proofs together, and finally figure out what is $\psi$.

\paragraph{Construction} 
Assume in each node, there are $m$ learned features. Here we can take any $m\leq m^*$. Then since there are $H$ hidden nodes, every feature appears in $Hm/d$ nodes. 

To design a $(W,b)$, we split the total $d$ features into groups of $m$ features. For each group, we randomly pick $Hm/d$ hidden nodes and assign non-zero weights for all features in this group. Thus $\|U_h\|=\Theta(d/(H\sqrt{m}))$. The intercept $b_h$ can be determined correspondingly.

Through the above construction, one can obtain a neural network with good clean performance. The adversarial performance is related to $m$.

\paragraph{Clean performance}We first analyze how the noise $\xi$ affects the performance.

Based on Lemma \ref{lem:2}, when taking all $b_h$ as the same value $b_h= t \zeta\sqrt{(\log d)/d}\|U_h\|=t\zeta\sqrt{d\log d}/(H\sqrt{m})$ such that $t^2/2>1$, we have
\begin{eqnarray*}
P\left( \sup_h |U_h^{\top}\xi|>b_h \right)=O\left( \frac{H}{d^{t^2/2}}\right).
\end{eqnarray*}

Besides the noise $\xi$, another error is caused by the event that the hidden nodes are deactivated when more than one feature is active. Based on Lemma \ref{lem:lem}, since we are constructing neural networks whose $m\rightarrow\infty$, we have
$$
P(\exists h=1,\ldots, H,\; s.t.\;|X^{\top}U_h|\in(0,b_h) \text{ while }\exists |X_iU_{i,h}|>0)=O\left(\frac{k^2m^2 H}{d^2} \right),
$$
while we want the above union bound to be small enough, we also require $km/d\rightarrow 0$ so that for each node, the probability goes to zero.

Denote $E(h)=1\{ |\xi^{\top}W_h|>b_h\text{ or }(|X^{\top}U_h|\in(0,2b_h)\text{ while }\exists |X_iU_{i,h}|>0 )\}$. If node $h$ is activated/deactivated by noise or the non-zero features cancel with each other, we always have $E(h)=1$. As a result, we use $E(h)$ as the upper bound of the event $\mathbb{I}(Z^{\top}W_h, b_h)\neq \mathbb{I}(X^{\top}U_h, 0)$. When taking $t$ large enough, we have
$$
P(\exists h=1,\ldots, H,\; s.t.\; E(h)=1)=O\left(\frac{k^2m^2 H}{d^2} \right)+O\left( \frac{H}{d^{t^2/2}}\right)=O\left(\frac{k^2m^2 H}{d^2} \right).
$$

For each hidden node, if $E(h)=1$, then it leads to at most $b_h$ of error. For square loss, when there is only one or several nodes activated/deactivated by the noise, then there will only be $\Theta(b_h)$ error in the fitted value, which is negligible and leads to $O(b_h)$ increase in loss. In the worst case, when all hidden nodes are activated/deactivated by the noise, the fitted value could involve $\Delta$ error, leading to an increase of $(\Delta+Y)^2-Y^2=\Delta (\Delta+2Y)$ in the loss.

Since each hidden node has at most $m$ features and $m\ll d$, when there are $k$ features are nonzero and $E(h)=1$ for some $h$, there are $O(kHm/d)$ hidden nodes which are mistakenly deactivated in the worst case. We ignore the effect of the noise $\xi$ because it is negligible.  

We have
\begin{eqnarray*}
&&\mathbb{E}l_0(Z,Y;W,b)\\
&\leq&\mathbb{E}l_0(Z,Y;W,b)1\left\{\forall h=1,\ldots, H,\; s.t.\;E(h)=0\right\}\\
&&+\mathbb{E}l_0(Z,Y;W,b)1\left\{\exists h=1,\ldots, H,\; s.t.\; E(h)=1\right\}1\{ \exists O(k\log k)\text{ choices of }i,\; s.t.\; X_i\neq 0 \}\\
&&+\mathbb{E}l_0(Z,Y;W,b)1\{ \exists \geq k\log k\text{ choices of }i,\; s.t.\; X_i\neq 0 \}1\{ |Y|< d \}\\
&&+\mathbb{E}l_0(Z,Y;W,b)1\{ |Y|> d \}.
\end{eqnarray*}
We bound the above terms one by one. Denote $b_{\max}=\max_h b_h$. Since we use the same upper bound for all hidden nodes to bound the probability that the hidden node gets unexpected zero/nonzero, we have
\begin{eqnarray*}
&&\mathbb{E}l_0(Z,Y;W,b)1\left\{\forall h=1,\ldots, c,\; s.t.\;E(h)=0\right\}\\
&&+\mathbb{E}l_0(Z,Y;W,b)1\left\{\exists h=1,\ldots, H,\; s.t.\; E(h)=1\right\}1\{ \exists O(k\log k)\text{ choices of }i,\; s.t.\; X_i\neq 0 \}\\\
&\leq&\mathbb{E}l_0(X,Y;\theta_0)+\underbrace{O(\sqrt{k/d})}_{\text{Noise in the active features}}\\
&&+O\left(\mathbb{E}(|X^{\top}(\theta_0-Ua)| + k\log k(Hm/d) b_{\max})^2 1\{ \exists O(k\log k)\text{ choices of }i,\; s.t.\; X_i\neq 0 \}\right)+o\\
&=&\mathbb{E}l_0(X,Y;\theta_0) +O(\sqrt{k/d})+O(\|\theta_0-Ua\|/\sqrt{d}) + O\left(  b_{\max}^2\left(\frac{k^2m^2 H}{d^2} \right)\left(\frac{Hmk\log k}{d}\right)^2 \right)+o,
\end{eqnarray*}
where $o$ is a negligible term and is caused by the noise $\xi$. We ignore this term in the following derivations.

Second,
\begin{eqnarray*}
&&\mathbb{E}l_0(Z,Y;W,b)1\{ \exists \geq k\log k\text{ choices of }i,\; s.t.\; X_i\neq 0 \}1\{ |Y|< d \}=o.
\end{eqnarray*}
And finally,
\begin{eqnarray*}
&&\mathbb{E}l_0(Z,Y;W,b)1\{ |Y|>  d \}\\
&=&O( \mathbb{E}(b_{\max} H+Y)^2 1\{ |Y|> d \} )\\
&=&O\left( \mathbb{E}_X\mathbb{E}_Y\left[(b_{\max} H+X^{\top}\theta_0+(Y-X^{\top}\theta_0))^2 1\{ |Y|>d \} \bigg| X=x\right]\right)\\
&=&O\left( \mathbb{E}(b_{\max} H+\|\theta_0-Ua\|+X^{\top}\theta_0)^2 1\{ |X^{\top}\theta_0|> d \}\right)\\
&=&O\left( \mathbb{E}(b_{\max} H+\|\theta_0\|_1)^2 1\{ |X^{\top}\theta_0|> d \}\right)\\
&=&O\left( (b_{\max} H+d)^2\frac{1}{\sqrt{k}(1+d^3)}\right),
\end{eqnarray*}
where the second line is because of the distribution of the noise $Y-X^{\top}\theta_0$, and the last line is based on Lemma \ref{lem:4}.

Since $b_{\max}=o(d/(H\sqrt{mk}))$ when $k\gg \log d$, we have
\begin{eqnarray*}
&&\mathbb{E}l_0(Z,Y;W,b)\\
&=& \mathbb{E}l_0(X,Y;\theta_0)+O(\|\theta_0-Ua\|/\sqrt{d}) +O(\sqrt{k/d})+ O\left(  b_{\max}^2\left(\frac{k^2m^2 H}{d^2} \right)\left(\frac{Hmk\log k}{d}\right)^2 \right)\\&&+O\left(\frac{(b_{\max} H+d)^2}{\sqrt{k}d^3}\right)+o\\
&=& \mathbb{E}l_0(X,Y;\theta_0)+O(\sqrt{k/d})+O(\|\theta_0-Ua\|/\sqrt{d}) + O\left(  b_{\max}^2\left(\frac{k^2m^2 H}{d^2} \right)\left(\frac{Hmk\log k}{d}\right)^2 \right)+o,
\end{eqnarray*}
and
\begin{eqnarray}
    \mathbb{E}l_0(Z,Y;W,b)=\mathbb{E}l_0(X,Y;\theta_0)+O(\|\theta_0-Ua\|/\sqrt{d})+O\left(\frac{Hm^3k^{3}\log^2 k}{d^2}\right)+O(\sqrt{k/d}),
\end{eqnarray}
from which we define $\psi$. 


For absolute loss and logistic regression, the error $\phi$ still holds.


\paragraph{Adversarial performance } There are on average $\Theta(k)$ active features in each data point, which is far less than the total $d$ features. As a result, each data point on average activates $\Theta( kHm/d )$ hidden nodes, and
$$
\mathbb{E}\|a^{\top}\text{diag}(\mathbb{I}(U^{\top}X,b))U^{\top}\|_2=\Theta(\mathbb{E}\sqrt{m}\|\theta_{\mathcal{X}}\|_2)=\Theta( \sqrt{mk} ).
$$
One the other hand, when we bound the error in clean model, we consider $|Y|<d$ and $|Y|>d$ cases. In the worse case, the increase of loss caused by the attack is $\epsilon\|\theta\|$, which is much smaller than $d$. As a result, whether or not we have the attack or not does not affect  $\psi$. 

As a result,
\begin{eqnarray*}
\mathbb{E}l_\epsilon(X,Y;W,b) = \mathbb{E}l_0(X,Y;\theta_0) + O(\psi)+ \Theta(\epsilon \sqrt{mk} ).
\end{eqnarray*}
On the other hand, taking $m=1$, one can also design a neural network such that
\begin{eqnarray}\label{eqn:super_upper}
\inf_{W',b'\in\mathcal{M}}\mathbb{E}l_\epsilon(X,Y;W',b')\leq \mathbb{E}l_\epsilon(X,Y;W,b)=\mathbb{E}l_0(X,Y;\theta_0)+O(\epsilon \sqrt{k})+O(\psi),
\end{eqnarray}
which finally indicates that
\begin{eqnarray*}
\mathbb{E}l_\epsilon(X,Y;W,b) = \inf_{W',b'\in\mathcal{M}}\mathbb{E}l_\epsilon(X,Y;W',b')+ O(\psi) + \Theta( \epsilon\sqrt{mk} ).
\end{eqnarray*}
\end{proof}

\begin{proof}[Proof of Theorem \ref{thm:adv}]
When $(W,b)\in\mathcal{M}$, the minimal non-zero value of $|U_{i,j}|$ is in $\Theta(d/(Hm^*))$. Assume on average there are $m$ features in each hidden node, then on average, there are $\Theta(Hkm/d)$ nodes are activated. In addition to the activated features, there are $\Theta(Hkm(m-1)/d)$ elements of $U_{i,h}$ leaked to the attacker. 

When these additional elements are all from different features, the increase of the loss is the smallest, which means that
$$ \epsilon\left\|\sum_{\text{activated h}}U_h\right\|\geq \Theta\left(\epsilon\sqrt{ k+\left(\frac{d}{Hm}\right)^2\left(\frac{Hkm(m-1)}{d}\right) }\right)=\Theta\left(\epsilon\left(\sqrt{k} + \frac{1}{2}\frac{d(m-1)}{Hm}\right)\right)+o. $$
When $H=o(\epsilon d^{3/2})$, $\epsilon d/H\gg \psi$. As a result, when an solution $(W,b)$ has an adversarial loss $O(\psi)$-close to $\min_{W',b'\in\mathcal{M}}\mathbb{E}l_\epsilon(Z,Y; M',b' )$, it also purifies most features, i.e, $m=1+o(1)$.

In terms of the clean performance, the result holds as
\begin{eqnarray*}
    \mathbb{E}l_0(Z,Y; W, b)
    \leq\min_{W',b'\in\mathcal{M}}\mathbb{E}l_\epsilon(Z,Y; W', b')+\Theta(\epsilon\sqrt{k})+O(\psi)
    \leq \mathbb{E}l_0(X,Y;\theta_0)+\Theta(\epsilon\sqrt{k})+O(\psi)+o.
\end{eqnarray*}
\end{proof}

\subsection{Proof for Contrastive Learning}
\begin{proof}[Proof of Lemma \ref{lem:contra_basic}]
Given $g(z,z')=x_1^{\top}T^{\top}Tx_2$, the contrastive loss becomes
\begin{eqnarray*}
&&\mathbb{E}\log(1+\exp(-X^{\top}T^{\top}TX)) + \mathbb{E}\log(1+\exp(X^{\top}T^{\top}TX')) \\
&=&\underbrace{\mathbb{E}\log(1+\exp(-X^{\top}PDP^{\top}X))}_{:=V_1} + \underbrace{\mathbb{E}\log(1+\exp(X^{\top}PDP^{\top}X'))}_{:=V_2}.
\end{eqnarray*}
To prove Lemma \ref{lem:contra_basic}, the key is to show that, fixing $tr(D)$, both $V_1$ and $V_2$ are minimized when $D\propto I_d$.

For $V_1$, as $\log(1+\exp(-v))$ is a convex function w.r.t. $v$, to show that $D\propto I_d$, we would like to show that for any $v\geq 0$,
\begin{eqnarray}\label{eqn:lem1}
\mathbb{E}\left[ X^{\top}PDP^{\top}X \big| \|P^{\top}X\|^2=v \right]=tr(D)v/d.
\end{eqnarray}
Based on the distribution of $X$, i.e., the distribution of each coordinate is symmetric and identical, we have
\begin{eqnarray*}
\mathbb{E}\left[ X^{\top}PDP^{\top}X \big| \|P^{\top}X\|^2=v \right]&=&\mathbb{E}\left[ \sum_{i=1}^d X_i^2 (PDP^{\top})_{i,i} \bigg| \|P^{\top}X\|^2=v \right]\\
&=&\mathbb{E}\left[ X_1^2\sum_{i=1}^d  (PDP^{\top})_{i,i} \bigg| \|P^{\top}X\|^2=v \right]\\
&=&\mathbb{E}\left[ X_1^2 tr(D) \bigg| \|P^{\top}X\|^2=v \right]\\
&=& tr(D)v/d.
\end{eqnarray*}
Consequently,
\begin{eqnarray*}
\mathbb{E}\left[\log(1+\exp(-X^{\top}PDP^{\top}X))\bigg| \|X\|^2=v\right]&\geq& \log\left(1+\exp\left(\mathbb{E}\left[-X^{\top}PDP^{\top}X\big| \|X\|^2=v\right]\right)\right)\\
&=& \log\left(1+\exp\left(-tr(D)v/d\right)\right),
\end{eqnarray*}
the equation holds when $D\propto I_d$.

Similarly, for $V_2$, $\log(1+\exp(v))$ is a convex function w.r.t. $v$, and
\begin{eqnarray*}
\mathbb{E}\left[\log(1+\exp(X^{\top}PDP^{\top}X'))\bigg| X^{\top}X'=v\right]&\geq& \log\left(1+\exp\left(\mathbb{E}\left[X^{\top}PDP^{\top}X'\big| X^{\top}X'=v\right]\right)\right)\\
&=& \log\left(1+\exp\left(tr(D)v/d\right)\right).
\end{eqnarray*}
As a result, fixing $tr(D)$, both $V_1$ and $V_2$ are minimized when taking $D\propto I_d$.
\end{proof}

\begin{proof}[Proof of Theorem \ref{lem:robust_similar}, loss for similar pairs]

For $A=\tau W^+$, we have
$$A=\tau W^+=\tau W^{\top}(WW^{\top})^{-1}=\tau U^{\top} M^{\top}( MUU^{\top}M^{\top}  )^{-1}=\tau U^{\top} ( UU^{\top} )^{-1}M^{\top} ,$$
which indicates that with probability tending to 1, $\sigma(Z^{\top} W,b)A=\tau X^{\top}UU^{\top} ( UU^{\top} )^{-1}M^{\top} + o=\tau X^{\top}M^{\top} + o$, where the term $o$ is negligible and is caused by the noise $\xi$.

With probability tending to 1, we have
    \begin{eqnarray*}
&&g_{W,b}(z+\delta_2,z')\\
&=&
g_{W,b}(z,z')+\epsilon\sgn\left(\frac{\partial l}{\partial f}\right)\| U\text{diag}(\mathbb{I}(x^{\top}U+\xi^{\top} W,b))AA^{\top}\text{diag}(\mathbb{I}(x^{\top}U+\xi^{\top} W,b))(U^{\top}x+M^{\top}\xi)\|,
 \end{eqnarray*}
 where with probability tending to 1, $\text{diag}(\mathbb{I}(x^{\top}U+\xi^{\top} W,b))=\text{diag}(\mathbb{I}(x^{\top}U ,\textbf{0}))$, and
 \begin{eqnarray*}
     &&\| U\text{diag}(\mathbb{I}(x^{\top}U+\xi^{\top} W,b))AA^{\top}\text{diag}(\mathbb{I}(x^{\top}U+\xi^{\top} W,b))(U^{\top}x+W^{\top}\xi)\|\\
     &\leq&\| U\text{diag}(\mathbb{I}(x^{\top}U+\xi^{\top} W,b))AA^{\top}\text{diag}(\mathbb{I}(x^{\top}U+\xi^{\top} W,b))U^{\top}x\|\\
     &&+\| U\text{diag}(\mathbb{I}(x^{\top}U+\xi^{\top} W,b))AA^{\top}\text{diag}(\mathbb{I}(x^{\top}U+\xi^{\top} W,b))W^{\top}\xi\|\\
     &=&\| U\text{diag}(\mathbb{I}(x^{\top}U+\xi^{\top} W,b))AMx\|+\| U\text{diag}(\mathbb{I}(x^{\top}U+\xi^{\top} W,b))AA^{\top}\text{diag}(\mathbb{I}(x^{\top}U+\xi^{\top} W,b))W^{\top}\xi\|.
 \end{eqnarray*}

Now we look into $\| U\text{diag}(\mathbb{I}(x^{\top}U+\xi^{\top} W,b))AMx \|$.

\paragraph{Active features} Assume the first coordinate of $x$ is non-zero. Since with probability tending to 1, all hidden nodes involving $x_1$ are activated, we have $U_{1,:}\text{diag}(\mathbb{I}(x^{\top}U+\xi^{\top} W,b))=U_{1,:}$, and
\begin{eqnarray}\label{eqn:contra_active}
U_{1,:}\text{diag}(\mathbb{I}(x^{\top}U+\xi^{\top} W,b))A Mx = U_{1,:}AMx= \tau x_1 .
\end{eqnarray}
\paragraph{Inactive features} Assume the second coordinate of $x$ is zero. We have for any active feature $i\in\mathcal{X}$,
\begin{eqnarray}\label{eqn:contra_inactive}
&&|U_{2,:}\text{diag}(\mathbb{I}(x^{\top}U+\xi^{\top} W,b))AM_{:,i}x_i|\\
&=&|\tau U_{2,:}\text{diag}(\mathbb{I}(x^{\top}U+\xi^{\top} W,b))U^{\top}(UU^{\top})^{-1}M^{\top}M_{:,i}x_i|\nonumber\\
&=&|\tau U_{2,:}\text{diag}(\mathbb{I}(x^{\top}U+\xi^{\top} W,b))U^{\top}(UU^{\top})^{-1}_{:,i}x_i|\nonumber\\
&=&\Theta(\tau |x_i|\alpha).\nonumber
\end{eqnarray}
For $i\in\mathcal{X}^c$, the value of the $i$th element of $U_{2,:}\text{diag}(\mathbb{I}(x^{\top}U+\xi^{\top} W,b))$ does not matter because $x_i=0$, i.e.,
\begin{eqnarray}\label{eqn:contra_inactive_other}
U_{2,:}\text{diag}(\mathbb{I}(x^{\top}U+\xi^{\top} W,b))AM_{:,i}x_i\equiv 0.
\end{eqnarray}
As a result, we have
\begin{eqnarray*}
    |U_{2,:}\text{diag}(\mathbb{I}(x^{\top}U+\xi^{\top} W,b))Ax|= O(\tau \alpha ),
\end{eqnarray*}
and thus the adversarial loss does not affected by whether the neural network is purified or not.
\paragraph{Noise} We now look into $\| U\text{diag}(\mathbb{I}(x^{\top}U+\xi^{\top} W,b))AA^{\top}\text{diag}(\mathbb{I}(x^{\top}U+\xi^{\top} W,b))W^{\top}\xi\|$.

From the assumptions, we know that 
\begin{equation}
    U_{i,:}\text{diag}(\mathbb{I}(x^{\top}U+\xi^{\top} W,b))A_{:,j} =\begin{cases} \tau &i=j\in\mathcal{X}\\
0 & i\neq j,i\in\mathcal{X}\\
\Theta(\alpha)& i\neq j,i\in\mathcal{X}^c,j\in\mathcal{X}\\
\Theta(\alpha^2)& \text{otherwise}
\end{cases}.\label{eqn:case}
\end{equation}
Consequently, taking proper value of $\alpha$, we get
$$\mathbb{E}\| U\text{diag}(\mathbb{I}(x^{\top}U+\xi^{\top} W,b))AA^{\top}\text{diag}(\mathbb{I}(x^{\top}U+\xi^{\top} W,b))W^{\top}\xi\|^2=o(k),$$
which indicates that the attack is dominated by $\| U\text{diag}(\mathbb{I}(x^{\top}U+\xi^{\top} W,b))A A^{\top}Mx \|$.

Next, we examine the performance of $g_{W,b}(z,z')$. We know that
\begin{eqnarray*}
    g_{W,b}(z,z')&=& (x^{\top}U+\xi^{\top}W)\text{diag}(\mathbb{I}(x^{\top}U+\xi^{\top} W,b))AA^{\top}\text{diag}(\mathbb{I}(x^{\top}U+(\xi')^{\top} W,b))(U^{\top}x+W^{\top}\xi')\\
    &=& x^{\top}U\text{diag}(\mathbb{I}(x^{\top}U+\xi^{\top} W,b))AA^{\top}\text{diag}(\mathbb{I}(x^{\top}U+(\xi')^{\top} W,b))U^{\top}x\\
    &&+\xi^{\top}W\text{diag}(\mathbb{I}(x^{\top}U+\xi^{\top} W,b))AA^{\top}\text{diag}(\mathbb{I}(x^{\top}U+(\xi')^{\top} W,b))W^{\top}\xi'\\
    &&+\xi^{\top}W\text{diag}(\mathbb{I}(x^{\top}U+\xi^{\top} W,b))AA^{\top}\text{diag}(\mathbb{I}(x^{\top}U+(\xi')^{\top} W,b))U^{\top}x\\
    &&+x^{\top}\text{diag}(\mathbb{I}(x^{\top}U+\xi^{\top} W,b))AA^{\top}\text{diag}(\mathbb{I}(x^{\top}U+(\xi')^{\top} W,b))W^{\top}\xi'.
\end{eqnarray*}
Based on (\ref{eqn:case}), one can see that with probability tending to one over the randomness of $(x,\xi,\xi')$,
\begin{eqnarray*}
    g_{W,b}(z,z')=x^{\top}U\text{diag}(\mathbb{I}(x^{\top}U+\xi^{\top} W,b))AA^{\top}\text{diag}(\mathbb{I}(x^{\top}U+(\xi')^{\top} W,b))U^{\top}x+o.
\end{eqnarray*}

Finally, when $\text{diag}(\mathbb{I}(x^{\top}U+\xi^{\top} W,b))\neq \text{diag}(\mathbb{I}(x^{\top}U ,\textbf{0}))$, if the eigenvalues of $UU^{\top}$ are bounded and bounded away from zero, we have
\begin{eqnarray*}
    \mathbb{E}g_{W,b}(Z,Z')1\{\exists E(h)=1\}=O(\|UU^{\top}\|P\{\exists E(h)=1\})=O(\psi).
\end{eqnarray*}
Different from supervised learning, in contrastive learning, we only care about the nodes which are related to the non-zero features in $x'$, so we only need to consider $O(Hk/d)$ hidden nodes rather than all the $H$ nodes. As a result, the value of $\psi$ gets smaller.


\end{proof}




\begin{proof}[Proof of Theorem \ref{lem:robust_similar}, loss for dissimilar pairs]

Similar to Theorem \ref{lem:robust_similar}, loss for similar pairs, we have
 \begin{eqnarray*}
     &&\| U\text{diag}(\mathbb{I}(x^{\top}U+\xi^{\top} W,b))AA^{\top}\text{diag}(\mathbb{I}((x')^{\top}U+(\xi')^{\top} W,b))(U^{\top}x'+W^{\top}\xi')\|\\
     &\leq&\| U\text{diag}(\mathbb{I}(x^{\top}U+\xi^{\top} W,b))AA^{\top}\text{diag}(\mathbb{I}((x')^{\top}U+(\xi')^{\top} W,b))U^{\top}x'\|\\
     &&+\| U\text{diag}(\mathbb{I}(x^{\top}U+\xi^{\top} W,b))AA^{\top}\text{diag}(\mathbb{I}((x')^{\top}U+(\xi')^{\top} W,b))W^{\top}\xi'\|\\
     &=&\| U\text{diag}(\mathbb{I}(x^{\top}U+\xi^{\top} W,b))AMx'\|+\| U\text{diag}(\mathbb{I}(x^{\top}U+\xi^{\top} W,b))AA^{\top}\text{diag}(\mathbb{I}((x')^{\top}U+(\xi')^{\top} W,b))W^{\top}\xi'\|.
 \end{eqnarray*}

     We look into $\| U\text{diag}(\mathbb{I}(x^{\top}U+\xi^{\top} W,b))AMx' \|$.

\paragraph{Active features} Assume the first coordinate of $x$ is non-zero. Since with probability tending to 1, all hidden nodes involving $x_1$ are activated, we have $U_{1,:}\text{diag}(\mathbb{I}(x^{\top}U+\xi^{\top} W,b))=U_{1,:}$, and
\begin{eqnarray}
U_{1,:}\text{diag}(\mathbb{I}(x^{\top}U+\xi^{\top} W,b))A Mx' = U_{1,:}AMx'= \tau x_1' .\nonumber
\end{eqnarray}
Consequently, 
\begin{eqnarray*}
    \mathbb{E}_{X'}\|U_{\mathcal{X}}\text{diag}(\mathbb{I}(x^{\top}U+\xi^{\top} W,b))A MX'   \|/\tau
    &=& \mathbb{E}\sqrt{ \sum_{i\in\mathcal{X}}|X_i'|^2  }\\
    &=& \mathbb{P}(\text{Only one $X_i'$ for $i\in\mathcal{X}$ is nonzero})/\sqrt{k}\\
    &&+\sqrt{2}\mathbb{P}(\text{Only two $X_i'$ for $i\in\mathcal{X}$  are nonzero})/\sqrt{k}\\
    &&+\sqrt{3}\mathbb{P}(\text{Only three $X_i'$ for $i\in\mathcal{X}$ are nonzero})/\sqrt{k}\\
    &&+\ldots,
\end{eqnarray*}
As a result,
\begin{eqnarray*}
    \mathbb{E}\sqrt{ \sum_{i\in\mathcal{X}}|X_i'|^2  }&\geq&\mathbb{P}(\text{Only one $X_i'$ for $i\in\mathcal{X}$ is nonzero})/\sqrt{k}\\
    &\geq& \frac{1}{c_l\sqrt{k}}\frac{k^2}{d}\\
    &=&\Theta(k^{3/2}/d),
\end{eqnarray*}
where $c_l>0$ is some constant number. Meanwhile,
\begin{eqnarray*}
    \mathbb{E}\sqrt{ \sum_{i\in\mathcal{X}}|X_i'|^2  }    &=& \mathbb{P}(\text{Only one $X_i'$ for $i\in\mathcal{X}$ is nonzero})/\sqrt{k}\\
    &&+\sqrt{2}\mathbb{P}(\text{Only two $X_i'$ for $i\in\mathcal{X}$  are nonzero})/\sqrt{k}\\
    &&+\sqrt{3}\mathbb{P}(\text{Only three $X_i'$ for $i\in\mathcal{X}$ are nonzero})/\sqrt{k}\\
    &&+\ldots,\\
    &\leq&\mathbb{P}(\text{Only one $X_i'$ for $i\in\mathcal{X}$ is nonzero})/\sqrt{k}\\
    &&+2\mathbb{P}(\text{Only two $X_i'$ for $i\in\mathcal{X}$  are nonzero})/\sqrt{k}\\
    &&+3\mathbb{P}(\text{Only three $X_i'$ for $i\in\mathcal{X}$ are nonzero})/\sqrt{k}\\
    &&+\ldots,\\
    &\leq& \frac{1}{\sqrt{k}}\left( \frac{k^2}{d}+ 2\left(\frac{k^2}{d}\right)^2+3\left(\frac{k^2}{d}\right)^3+\ldots\right)\\
    &=&\frac{1}{\sqrt{k}}\frac{k^2/d}{1-k^2/d}+\frac{1}{\sqrt{k}}\frac{(k^2/d)^2}{1-k^2/d}+\ldots\\
    &=&\Theta(k^{3/2}/d).
\end{eqnarray*}
Since both the upper bound and lower bound are in $\Theta(k^{3/2}/d)$, we have $\mathbb{E}_{X'}\|U_{\mathcal{X}}\text{diag}(\mathbb{I}(x^{\top}U+\xi^{\top} W,b))A MX'   \|=\Theta(\tau k^{3/2}/d)$.

\paragraph{Inactive features} Assume the second coordinate of $x$ is zero. From the assumption, we have 
\begin{eqnarray*}
    (U\text{diag}(\mathbb{I}(x^{\top}U+\xi^{\top} W,b))U^{\top}(UU^{\top})^{-1})_{i,j}=\begin{cases}
        1 & i,j\in\mathcal{X}\\
        O(\alpha) &i\neq j,     i\in\mathcal{X}^c,j\in\mathcal{X}\\
        O(\alpha^2) &\text{otherwise}
    \end{cases}.
\end{eqnarray*}
As a result, given $x$, 
\begin{eqnarray*}
    U_{2,:}\text{diag}(\mathbb{I}(x^{\top}U+\xi^{\top} W,b))AMX' = O_p\left(\frac{k}{d}\alpha + \alpha^2  \right),
\end{eqnarray*}
and $ \|U_{\mathcal{X}^c}\text{diag}(\mathbb{I}(x^{\top}U+\xi^{\top} W,b))A MX'\| =O_p( \alpha^2\sqrt{d}+\alpha k/\sqrt{d} )$.

\paragraph{Noise} Taking a proper value of $\alpha$, we get
$\mathbb{E}\| U\text{diag}(\mathbb{I}(x^{\top}U+\xi^{\top} W,b))AA^{\top}\text{diag}(\mathbb{I}((x')^{\top}U+(\xi')^{\top} W,b))W^{\top}\xi'\|^2$ is negligible.

When $\text{diag}(\mathbb{I}(x^{\top}U+\xi^{\top} W,b))\neq \text{diag}(\mathbb{I}(x^{\top}U ,\textbf{0}))$, the bound follows that same as Theorem \ref{lem:robust_similar}.

\end{proof}



\subsection{Proof for Downstream Task}

\begin{proof}[Proof of Proposition \ref{thm:downstream} ]
The arguments for Lemma \ref{thm:adv_ideal} holds for any $\theta$, not limited to $\theta_0$. As a result, when applying the neural network in new tasks, feature purification still preserves the adversarial robustness.
\end{proof}

\end{document}